\documentclass{article}

\usepackage[preprint]{neurips_2026}

\usepackage[utf8]{inputenc}
\usepackage[T1]{fontenc}
\usepackage{microtype}
\usepackage{graphicx}
\usepackage{subcaption}
\usepackage{booktabs}
\usepackage{hyperref}
\usepackage{url}
\usepackage{amsfonts}
\usepackage{nicefrac}
\usepackage{xcolor}
\usepackage{amsmath}
\usepackage{amssymb}
\usepackage{mathtools}
\usepackage{amsthm}
\usepackage{algorithm}
\usepackage{algorithmic}
\usepackage[capitalize,noabbrev]{cleveref}
\usepackage{multirow}
\usepackage{capt-of}
\usepackage{wrapfig}

\theoremstyle{plain}
\newtheorem{theorem}{Theorem}[section]
\newtheorem{proposition}[theorem]{Proposition}

\theoremstyle{definition}
\newtheorem{definition}[theorem]{Definition}
\newtheorem{assumption}[theorem]{Assumption}
\theoremstyle{remark}

\usepackage[disable,textsize=tiny]{todonotes}

\title{Communication Gain and Delay Cost \\ under Cross-Timestep Delays in Cooperative \\ Multi-Agent Reinforcement Learning}

\author{
Zihong Gao \quad
Hongjian Liang \quad
Lei Hao \quad
Liangjun Ke\thanks{Corresponding authors: Liangjun Ke.} \\
The State Key Laboratory for Manufacturing Systems Engineering \\
School of Automation Science and Engineering, Xi'an Jiaotong University \\
\texttt{g1246830895@stu.xjtu.edu.cn} \quad
\texttt{lianghj@stu.xjtu.edu.cn} \\
\texttt{lei\_hao@stu.xjtu.edu.cn} \quad
\texttt{keljxjtu@xjtu.edu.cn}
}

\begin{document}

\maketitle

\begin{abstract}
	Communication is crucial for cooperative multi-agent reinforcement learning (MARL) under partial observability, but cross-timestep communication delays make received messages temporally misaligned and potentially stale.
	We formalize this setting as a delayed-communication partially observable Markov game (DeComm-POMG) and introduce Communication Gain and Delay Cost (CGDC), a gain--cost metric that evaluates delayed-message utility through no-message and timely-reference counterfactuals.
	We further derive a delay-induced return-loss bound that relates the return gap between timely-reference-conditioned and delayed-message-conditioned policies to the accumulated information gap.
	Guided by CGDC, we propose CDCMA, an actor--critic framework for selecting communication partners, constructing future-aware outgoing messages, and aggregating received delayed messages.
	Experiments on no-teammate-vision Cooperative Navigation and Predator Prey, and on SMAC maps under bounded stochastic delay regimes, show that CDCMA improves performance and robustness over strong communication baselines, with ablations validating each component.
\end{abstract}

\section{Introduction}

\begin{wrapfigure}{r}{0.45\textwidth}
	\vspace{-0.8em}
	\centering
	\includegraphics[width=\linewidth]{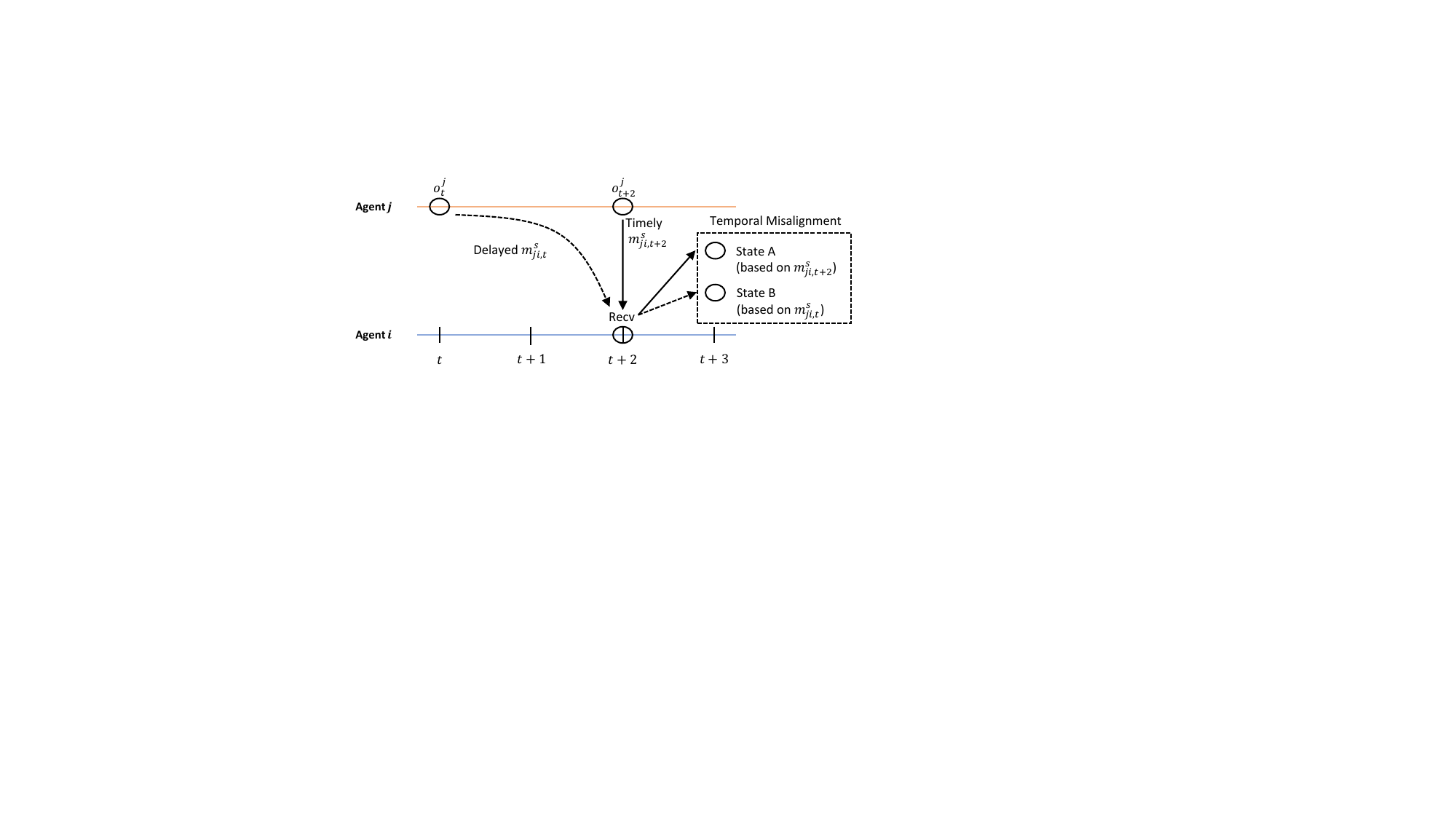}
	\caption{Temporal misalignment under cross-timestep communication delays.}
	\label{fig:info-gap}
	\vspace{-1.2em}
\end{wrapfigure}

Communication is central to coordination in cooperative multi-agent reinforcement learning (MARL) under partial observability.
Most communication methods assume that messages are available within the current decision step.
In practical systems, sensing, computation, and transmission latencies can induce cross-timestep communication delays, where a message generated at timestep $t$ reaches the receiver only at $t{+}d$.
As illustrated in Fig.~\ref{fig:info-gap}, conditioning on a delayed message rather than its timely reference can shift the receiver's decision context and change the resulting transition.
The key question is therefore whether a received delayed message still provides positive utility at consumption time.

Prior MARL communication methods mainly study delay-free settings, including centralized protocols and selective schemes.
Delayed-communication methods such as DACOM~\cite{yuan2023dacom} and CoDe~\cite{song2025code} explicitly consider latency, but they do not directly score delayed-message utility through both no-message and timely-reference counterfactuals.
This leaves open how a receiver should decide whether a delayed message remains useful under its current local context.

We address this gap with a gain--cost view of delayed-message utility.
We first formalize cooperative MARL with cross-timestep communication delays as a delayed-communication partially observable Markov game (DeComm-POMG), where delayed delivery is part of the decision process.
We then derive a delay-induced return-loss bound that relates the return gap between timely-reference-conditioned and delayed-message-conditioned policies to an accumulated information gap.
This motivates two complementary counterfactual terms: communication gain compares using the received delayed message against receiving no message, whereas delay cost compares it against the timely reference through an information-gap surrogate for delay-induced return loss.
Together, they define Communication Gain and Delay Cost (CGDC), a gain--cost metric for evaluating delayed-message utility under centralized training with decentralized execution (CTDE).

Building on CGDC, we propose CDCMA, an actor--critic framework for communication under cross-timestep delays.
It realizes the gain--cost view through three components.
The Delay Communication Objective Selector (DCOS) predicts per-pair CGDC scores for partner selection.
The Observation Trajectory Generator (OTG) constructs future-aware outgoing messages to reduce consumption-time misalignment.
The CGDC-guided Attentional Message Aggregator (CAMA) aggregates received delayed messages using CGDC-guided attention.
We evaluate CDCMA on partially observable variants of Cooperative Navigation and Predator Prey without teammate vision, and on benchmark maps from the StarCraft Multi-Agent Challenge (SMAC)~\cite{samvelyan2019starcraft}, under bounded stochastic delay regimes.
Results show that CDCMA improves performance and robustness over strong communication baselines in the bounded delayed settings studied here.

Our contributions are summarized as follows:
\begin{itemize}
	\item We formalize cooperative MARL with cross-timestep communication delays as \textbf{DeComm-POMG}, which models delayed message arrivals and supports CTDE-based analysis.
	\item We introduce \textbf{CGDC}, a gain--cost metric that evaluates delayed-message utility through no-message and timely-reference counterfactuals, and derive a delay-induced return-loss bound based on the information gap between timely-reference-conditioned and delayed-message-conditioned action distributions.
	\item We propose \textbf{CDCMA}, which realizes CGDC through partner selection, future-aware message construction, and CGDC-guided delayed-message aggregation, and validate it across bounded stochastic delay regimes.
\end{itemize}

\section{Related Work}
Most MARL communication methods are developed for delay-free settings and can be broadly grouped into centralized and selective communication.
Centralized protocols broadcast or aggregate messages system-wide, often through differentiable message passing, as in CommNet~\cite{sukhbaatar2016learning}, DIAL~\cite{foerster2016learning}, and BiCNet~\cite{peng2017multiagent}; later work improves expressiveness or robustness through recurrent aggregation, targeted communication, message filtering, and auxiliary objectives~\cite{wang2022fcmnet,das2019tarmac,kim2019message,guan2022efficient}.
Selective methods improve scalability by restricting communication to selected agents, entities, or timesteps, as in ATOC~\cite{jiang2018learning}, I2C~\cite{ding2020learning}, SMS~\cite{xue2022efficient}, G2ANet~\cite{liu2020multi}, T2MAC~\cite{sun2024t2mac}, and TGCNet~\cite{Zhang_He_Cheng_Li_2025}.
Although these methods improve communication efficiency, they generally assume that exchanged messages are available within the current decision step.

A smaller line of work studies communication with delayed message arrivals.
DACOM~\cite{yuan2023dacom} is delay-aware and uses TimeNet to adjust how long an agent waits for messages within a decision step, while CoDe~\cite{song2025code} infers long-term intents via future-action prediction and fuses asynchronous messages through dual alignment.
Our work instead focuses on delayed-message utility under unobserved cross-timestep communication delays, evaluating received delayed messages through no-message and timely-reference counterfactuals.

To this end, we introduce CGDC, a gain--cost metric grounded in DeComm-POMG, and instantiate it in CDCMA through partner selection, future-aware outgoing message construction, and CGDC-guided aggregation of received delayed messages.
Further discussion appears in Appendix~\ref{app:extended-related-work}.

\section{Problem Formulation and Gain--Cost View}
\label{sec:pro_formu}

\subsection{DeComm-POMG}

We build on the standard partially observable Markov game (POMG) formulation~\cite{littman1994markov} and extend it to cross-timestep communication delays.
Unlike standard communication-augmented POMGs, a message generated at timestep $t$ may become available only after several timesteps, so delayed delivery must be treated as part of the decision process.

\begin{definition}[DeComm-POMG]\label{def:decomm-pomg}
	A \textbf{Delayed-Communication Partially Observable Markov Game} is the tuple $\boldsymbol{(\mathcal{N},\mathcal{S},\mathcal{A},\mathcal{O},\mathcal{R},\mathcal{P},\mathcal{M},\mathcal{D},\rho,\gamma,\varphi)}$, where the first seven components coincide with a standard communication-augmented POMG, $\mathcal{D}=\{\mathcal{D}_{ji}\subset\mathbb{Z}_{\ge 0}\mid j\neq i\}$ specifies per-pair delay domains, and $\varphi=\{\varphi_{ji}\mid j\neq i\}$ governs the delay process.
	At each timestep $t$, the environment samples a delay $d_{ji,t}\in\mathcal{D}_{ji}$ for each link $(j\!\to\! i)$ according to $\varphi_{ji}$.
	We assume these delays are exogenous and not observable to agents.
\end{definition}

Let $m^{s}_{ji,t}$ denote the message sent from agent $j$ to agent $i$ at timestep $t$, and let $m^{r}_{ij,t}$ denote the message from $j$ that is available to agent $i$ at receiver timestep $t$.
If the sampled delay is $d_{ji,t}$, then $m^{s}_{ji,t}$ becomes available at timestep $t{+}d_{ji,t}$; equivalently, $m^{r}_{ij,t}=m^{s}_{ji,k}$ iff $t=k+d_{ji,k}$.
If multiple messages from the same sender arrive simultaneously, only the most recent one is retained: $\kappa_{ji}(t)\in\arg\max_{k\le t}\{\,k\mid t=k+d_{ji,k}\,\}$ and $m^{r}_{ij,t}=m^{s}_{ji,\kappa_{ji}(t)}$.

\subsection{Delayed-Message Utility: Communication Gain and Delay Cost}
\label{subsec:decomm-info-gap}

A received delayed message should be evaluated under two counterfactual comparisons.
Compared with receiving no message, it may still improve decision making.
Compared with the timely reference message, however, it may incur delay-related return loss because of temporal misalignment.
This motivates a gain--cost view of delayed-message utility.

\paragraph{Markov representation.}
To analyze delayed communication with value-function tools, we augment the environment state with the message buffer $\mathcal{B}_t$ and write $\widetilde{s}_t=(s_t,\mathcal{B}_t)$, where $s_t$ is the environment state and $\mathcal{B}_t\in\mathcal{B}$ stores the communication state induced by delayed delivery, with $\mathcal{B}$ denoting the space of feasible buffers.

\begin{proposition}[Markov representation of DeComm-POMG]\label{prop:markov-decomm}
	With $\widetilde{s}_t=(s_t,\mathcal{B}_t)$, DeComm-POMG is equivalent to a POMG on the augmented state space $\widetilde{\mathcal{S}}=\mathcal{S}\times\mathcal{B}$: for any joint policy $\boldsymbol{\pi}$, the induced trajectory distribution over $(\widetilde{s}_t,\boldsymbol{a}_t)$ matches that of a POMG with state $\widetilde{s}_t$ under the transition kernel induced by $\mathcal{P}$ and $\varphi$.
\end{proposition}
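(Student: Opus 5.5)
The plan is to give an explicit construction of the buffer $\mathcal{B}_t$ and of the augmented kernel $\widetilde{\mathcal{P}}$. I then verify, by induction on $t$, that the joint law of $(\widetilde{s}_0,\boldsymbol{a}_0,\dots,\widetilde{s}_t,\boldsymbol{a}_t)$ factorizes exactly as in a POMG with state $\widetilde{s}_t$.

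\textbf{Defining the buffer.} I will take $\mathcal{B}_t$ to be the collection of all in-flight messages. Each entry is a tuple $(j,i,k,m^{s}_{ji,k},d_{ji,k})$ with $k\le t$ and $k+d_{ji,k}\ge t$. The buffer also stores, for each link, the most recently delivered message $m^{r}_{ij,t}$ together with its send index $\kappa_{ji}(t)$.

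Two facts then follow.
\begin{itemize}
\item \emph{The observation depends only on the augmented state.} Agent $i$'s observation at time $t$ is its environment observation plus $\{m^{r}_{ij,t}\}_j$. Both are measurable functions of $(s_t,\mathcal{B}_t)$: the first through $\mathcal{O}$, the second through the delivery rule and the ``most recent'' selection.
\item \emph{Boundedness (if needed).} When each $\mathcal{D}_{ji}$ is bounded by $d_{\max}$, only entries with $k\ge t-d_{\max}$ are ever needed. The buffer space $\mathcal{B}$ then has finite horizon, so $\widetilde{\mathcal{S}}$ is a well-defined state space. In the unbounded case I work with the space of finite lists.
\end{itemize}

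\textbf{The augmented kernel.} Given $(\widetilde{s}_t,\boldsymbol{a}_t)$, I define $\widetilde{\mathcal{P}}$ by the following sampling steps.
\begin{enumerate}
\item Draw $s_{t+1}\sim\mathcal{P}(\cdot\mid s_t,\boldsymbol{a}_t)$.
\item Form the outgoing messages $m^{s}_{ji,t}$. Messages are produced by the senders' policies, so I treat the message-generation component as part of each agent's action $a_t^j$. This is the standard way communication-augmented POMGs absorb messages into the action space.
\item Draw fresh delays $d_{ji,t}\sim\varphi_{ji}$, independently of the past given the current state. This is exactly the exogeneity assumption in Definition~\ref{def:decomm-pomg}. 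If $\varphi_{ji}$ had memory, I would also store its internal state in $\mathcal{B}_t$.
\item Apply the deterministic update: insert the new entries, move the entries whose arrival time $k+d_{ji,k}=t+1$ into the delivered slot using the $\arg\max$ rule for $\kappa_{ji}$, and discard expired entries.
\end{enumerate}
This yields a kernel $\widetilde{\mathcal{P}}(\widetilde{s}_{t+1}\mid\widetilde{s}_t,\boldsymbol{a}_t)$ equal to $\mathcal{P}(s_{t+1}\mid s_t,\boldsymbol{a}_t)$ multiplied by the product of the $\varphi_{ji}$ probabilities, pushed forward through the update map. The reward $\mathcal{R}$ lifts trivially, and the initial law is $\rho\otimes\delta_{\emptyset}$.

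\textbf{Matching the trajectory laws.} Fix any joint policy $\boldsymbol{\pi}$, where each $\pi^i$ conditions on agent $i$'s local history of observations and received messages. By the first step, that history is a function of the augmented observation history. I then show by induction that
\[
\Pr_{\text{DeComm}}(\widetilde{s}_{0:t+1},\boldsymbol{a}_{0:t})=\rho(\widetilde{s}_0)\prod_{u=0}^{t}\boldsymbol{\pi}(\boldsymbol{a}_u\mid h_u)\,\widetilde{\mathcal{P}}(\widetilde{s}_{u+1}\mid\widetilde{s}_u,\boldsymbol{a}_u).
\]
The inductive step uses the chain rule. Conditioning on the full past, the DeComm-POMG generates $\widetilde{s}_{t+1}$ using only $s_t$, $\boldsymbol{a}_t$, the fresh delays, and the messages already in flight, and all of these are contained in $(\widetilde{s}_t,\boldsymbol{a}_t)$. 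The right-hand side is by definition the trajectory law of the POMG on $\widetilde{\mathcal{S}}$, which proves the equivalence.

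\textbf{Main obstacle.} The crux is the conditional-independence claim in the inductive step. It requires that the buffer be rich enough to contain everything the future depends on. In particular, it must include the realized delays of in-flight messages, not only their payloads; otherwise the arrival times would depend on the hidden past. My approach is to put the delays explicitly into $\mathcal{B}_t$ and to invoke exogeneity of $\varphi$. I also need to handle the tie-breaking rule for simultaneous arrivals carefully, so that the update map is a deterministic function of the augmented state.
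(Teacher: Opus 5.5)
Your proposal is correct and follows the paper's route. The buffer holds in-flight messages with their send times and realized delays, together with the latest delivered message. Its update is a deterministic function of the current buffer, the newly sent messages and fresh exogenous delays, so $\widetilde s_{t+1}$ depends only on $(\widetilde s_t,\boldsymbol a_t)$, and policies transfer between the two formulations.

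Your version is more careful than the paper's in two places. First, you make explicit that message generation is absorbed into the action; the paper's proof relies on this without saying so. Second, you verify the trajectory-law factorization by induction, where the paper only appeals to a sufficient-statistic argument.

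One loose end: Definition~\ref{def:decomm-pomg} allows $0\in\mathcal{D}_{ji}$, but your update step delivers only entries with arrival time $t+1$, so zero-delay messages would be dropped. You can fix this by adding an intra-step message phase, or by restricting to $\mathcal{D}_{ji}\subset\mathbb{Z}_{\ge 1}$, which is what the paper's delayed experiments use.
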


Proposition~\ref{prop:markov-decomm} shows that delayed communication can be analyzed through an augmented Markov state.
The proof is given in Appendix~\ref{app:proof:markov-decomm}.

\paragraph{Communication gain.}
For agent $i$, the discounted return in DeComm-POMG is $J_i(\boldsymbol{\pi}) := \mathbb{E}_{\boldsymbol{\pi}}[\sum_{t\ge 0}\gamma^t r_i(\widetilde{s}_t,\boldsymbol{a}_t)]$.
Let $\boldsymbol{\pi}^{(+j)}$ denote the joint policy in which agent $i$ can use messages from sender $j$ while other incoming messages are masked, and let $\boldsymbol{\pi}^{(0)}$ denote the joint policy in which all incoming messages to agent $i$ are masked.
Then $J_i(\boldsymbol{\pi}^{(+j)})-J_i(\boldsymbol{\pi}^{(0)})$ captures the return-level benefit attributable to sender $j$.

To localize this benefit to a sender--receiver pair and timestep, define $Q_i^{\boldsymbol{\pi}}(\widetilde{s}_t,\boldsymbol{a}_t) := \mathbb{E}_{\boldsymbol{\pi}}[\sum_{k\ge 0}\gamma^k r_i(\widetilde{s}_{t+k},\boldsymbol{a}_{t+k}) \mid \widetilde{s}_t,\boldsymbol{a}_t]$.
We define the communication gain as
\begin{equation}
	V^g_{ij,t}(\widetilde{s}_t,\boldsymbol{a}_t)
	:=
	Q_i^{(+j)}(\widetilde{s}_t,\boldsymbol{a}_t)
	-
	Q_i^{(0)}(\widetilde{s}_t,\boldsymbol{a}_t).
	\label{eq:q-gain}
\end{equation}
This localizes the return-level benefit of sender $j$'s message to timestep $t$.
With bounded rewards and $\gamma\in(0,1)$, $V^g_{ij,t}$ is uniformly bounded; the bound is stated in Appendix~\ref{app:gain-bound}.

\paragraph{Delay-induced return loss and information gap.}
Communication gain alone is insufficient, because a delayed message may be useful relative to no message yet inferior to its timely reference.
Let $m^{\mathrm{tar}}_{ji,t}$ denote a timely reference condition for sender $j$ at receiver timestep $t$, representing the sender-side information that would be aligned with timestep $t$ in the absence of delay.
Its concrete training-time instantiation is given in Sec.~\ref{subsec:cgdc-surrogate}.

Let $\boldsymbol{\pi}^{\mathrm{delayed}}$ denote the executed joint policy in which agent $i$ conditions on the received delayed message $m^r_{ij,t}$, and let $\boldsymbol{\pi}^{\mathrm{timely}}$ be identical except that agent $i$ conditions on the timely reference condition $m^{\mathrm{tar}}_{ji,t}$.
The return gap $J_i(\boldsymbol{\pi}^{\mathrm{timely}})-J_i(\boldsymbol{\pi}^{\mathrm{delayed}})$ captures the delay-induced return loss caused by conditioning on the received delayed message instead of the timely reference condition.

To relate this return-level loss to a local discrepancy, fix a receiver timestep $t$ and hold fixed the information available before consuming incoming messages.
Let $x_{i,t}$ denote this message-free local input of agent $i$.
We define the information gap as
\begin{equation}\label{eq:info-gap}
	\Delta_{\mathrm{info}}(i,j,t)
	=
	D_{\mathrm{KL}}\!\left(
	\pi_i(\cdot \mid x_{i,t}, m^{\mathrm{tar}}_{ji,t})
	\ \big\|\ 
	\pi_i(\cdot \mid x_{i,t}, m^{r}_{ij,t})
	\right).
\end{equation}
This quantity measures the action-distribution gap induced by replacing the timely reference condition with the received delayed message under the same local input.

\begin{assumption}[Lipschitz dependence on action distribution]\label{ass:lipschitz}
	Fix the joint policy $\boldsymbol{\pi}^{\mathrm{delayed}}$, and let $V_i^{\mathrm{delayed}}(\widetilde{s}_t)$ denote agent $i$'s state-value function under $\boldsymbol{\pi}^{\mathrm{delayed}}$ on the augmented Markov game.
	There exists $L>0$ such that for any augmented state $\widetilde{s}_t$ and any two joint action distributions $\mu,\nu$,
	\begin{equation}
		\left|
		\mathbb{E}_{\boldsymbol{a}_t\sim\mu}\!\left[r_i(\widetilde{s}_t,\boldsymbol{a}_t)+\gamma V_i^{\mathrm{delayed}}(\widetilde{s}_{t+1})\right]
		-
		\mathbb{E}_{\boldsymbol{a}_t\sim\nu}\!\left[r_i(\widetilde{s}_t,\boldsymbol{a}_t)+\gamma V_i^{\mathrm{delayed}}(\widetilde{s}_{t+1})\right]
		\right|
		\le L\|\mu-\nu\|_{\mathrm{TV}},
	\end{equation}
	where $\widetilde{s}_{t+1}$ is distributed according to the augmented transition induced by $\mathcal{P}$ and $\varphi$.
\end{assumption}

\begin{theorem}[Delay-induced return-loss bound]\label{thm:value-kl}
	Under Assumption~\ref{ass:lipschitz}, the return gap caused by conditioning on the delayed message instead of its timely reference satisfies
	\begin{equation}
		J_i(\boldsymbol{\pi}^{\mathrm{timely}})
		-
		J_i(\boldsymbol{\pi}^{\mathrm{delayed}})
		\;\le\;
		L\,\mathbb{E}_{\tau\sim \boldsymbol{\pi}^{\mathrm{timely}}}\!\left[
		\sum_{t\ge 0}\gamma^t \sqrt{\Delta_{\mathrm{info}}(i,j,t)}
		\right],
		\label{eq:value-kl-bound}
	\end{equation}
	where the expectation is taken over trajectories $\tau$ under $\boldsymbol{\pi}^{\mathrm{timely}}$ on augmented DeComm-POMG.
\end{theorem}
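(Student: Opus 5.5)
The approach is the performance-difference lemma on the augmented Markov game of Proposition~\ref{prop:markov-decomm}, followed by Assumption~\ref{ass:lipschitz} and Pinsker's inequality at each timestep. By Proposition~\ref{prop:markov-decomm}, both $\boldsymbol{\pi}^{\mathrm{timely}}$ and $\boldsymbol{\pi}^{\mathrm{delayed}}$ induce trajectory distributions over $(\widetilde{s}_t,\boldsymbol{a}_t)$ in a single POMG on $\widetilde{\mathcal{S}}$ with a common transition kernel built from $\mathcal{P}$ and $\varphi$. The delay process is exogenous, so the two policies differ only in agent $i$'s action kernel at each step. This lets me apply standard value-function identities on $\widetilde{\mathcal{S}}$.

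\textbf{Step 1 (telescoping).} Let $V_i^{\mathrm{delayed}}$ be the value of $\boldsymbol{\pi}^{\mathrm{delayed}}$. For any trajectory $\tau\sim\boldsymbol{\pi}^{\mathrm{timely}}$, write $\sum_t\gamma^t r_i - V_i^{\mathrm{delayed}}(\widetilde{s}_0)$ as the telescoping sum $\sum_t \gamma^t\bigl[r_i(\widetilde{s}_t,\boldsymbol{a}_t)+\gamma V_i^{\mathrm{delayed}}(\widetilde{s}_{t+1})-V_i^{\mathrm{delayed}}(\widetilde{s}_t)\bigr]$. Taking expectations gives
\[
J_i(\boldsymbol{\pi}^{\mathrm{timely}})-J_i(\boldsymbol{\pi}^{\mathrm{delayed}})=\mathbb{E}_{\tau\sim\boldsymbol{\pi}^{\mathrm{timely}}}\Bigl[\sum_{t\ge0}\gamma^t\bigl(G_t(\mu_t)-G_t(\nu_t)\bigr)\Bigr].
\]
Here $G_t(\cdot)=\mathbb{E}_{\boldsymbol{a}_t\sim\cdot}[r_i+\gamma V_i^{\mathrm{delayed}}(\widetilde{s}_{t+1})]$ at the visited state. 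The distribution $\mu_t$ is the joint action distribution under $\boldsymbol{\pi}^{\mathrm{timely}}$, and $\nu_t$ is the one under $\boldsymbol{\pi}^{\mathrm{delayed}}$. The key identity is $V_i^{\mathrm{delayed}}(\widetilde{s}_t)=G_t(\nu_t)$, which is the Bellman equation on the augmented state. Boundedness of rewards and $\gamma<1$ justify the exchange of sum and expectation and make the boundary term vanish.

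\textbf{Step 2 (per-step bound).} By Assumption~\ref{ass:lipschitz}, $G_t(\mu_t)-G_t(\nu_t)\le L\|\mu_t-\nu_t\|_{\mathrm{TV}}$. The two joint policies coincide for all agents except agent $i$, who uses the same $x_{i,t}$ but swaps $m^{\mathrm{tar}}_{ji,t}$ for $m^r_{ij,t}$. Since the product of a common factor with two different factors has TV distance equal to the TV distance between those two factors, and agents act independently given their inputs, we get $\|\mu_t-\nu_t\|_{\mathrm{TV}}=\|\pi_i(\cdot\mid x_{i,t},m^{\mathrm{tar}}_{ji,t})-\pi_i(\cdot\mid x_{i,t},m^r_{ij,t})\|_{\mathrm{TV}}$. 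Pinsker's inequality bounds this by $\sqrt{\tfrac12\Delta_{\mathrm{info}}(i,j,t)}\le\sqrt{\Delta_{\mathrm{info}}(i,j,t)}$. Summing over $t$ gives \eqref{eq:value-kl-bound}.

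\textbf{Main obstacle.} The delicate point is Step 1's claim that both policies are well-defined Markov kernels on the same augmented state. Agent $i$'s inputs $x_{i,t}$, $m^r_{ij,t}$, and especially $m^{\mathrm{tar}}_{ji,t}$ must be measurable functions of $\widetilde{s}_t$, possibly together with recurrent internal state. I would handle this by absorbing any recurrent memory and the sender-side information defining $m^{\mathrm{tar}}_{ji,t}$ into $\mathcal{B}_t$, which is the content of Proposition~\ref{prop:markov-decomm}. I would also need to check that this enlargement does not break Assumption~\ref{ass:lipschitz}. Since the assumption is stated for arbitrary augmented states, I would simply read it on the enlarged space.
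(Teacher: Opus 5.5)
Your proposal is correct and follows the paper's proof essentially step for step. Both arguments use the telescoping (performance-difference) identity on the augmented game with $V_i^{\mathrm{delayed}}$ as baseline, the Lipschitz assumption to pass to $\|\mu_t-\nu_t\|_{\mathrm{TV}}$, the reduction of the joint TV to agent $i$'s conditional TV (you get equality from the product structure, where the paper writes $\le$), and Pinsker's inequality. Your bound $\sqrt{\tfrac12\Delta_{\mathrm{info}}}\le\sqrt{\Delta_{\mathrm{info}}}$ is the paper's ``absorbing $\sqrt{1/2}$ into $L$,'' stated more precisely, and your remark on making $x_{i,t}$ and $m^{\mathrm{tar}}_{ji,t}$ functions of the augmented state covers a point the paper leaves implicit.
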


The bound is intended as conceptual support rather than a numerically calibrated certificate: it relates delay-induced return loss to the accumulated information gap between timely-reference-conditioned and delayed-message-conditioned action distributions.
Together with Eq.~\eqref{eq:q-gain}, it motivates a gain--cost view of delayed-message utility: useful delayed communication should have high utility relative to no message and low loss relative to the timely reference.

\subsection{Tractable Surrogates and CGDC}
\label{subsec:cgdc-surrogate}

Under CTDE, we instantiate this gain--cost view with two critics: $Q_i^{\mathrm{full}}$ provides the actor--critic learning signal, while the auxiliary $Q_i^{\mathrm{gain}}$ constructs the critic-induced tempered policy and the gain surrogate used for CGDC targets; decentralized execution uses only local inputs.

\paragraph{Critic-induced tempered policy.}
The information gap in Eq.~\eqref{eq:info-gap} is a theoretical quantity defined by comparing actor policies under timely-reference and delayed-message conditions.
To construct a tractable training-time delay-cost surrogate, we introduce a critic-induced tempered policy as a discrepancy probe under a common value reference.
This probe is used only during CTDE training; execution does not access the timely reference.
Let $\boldsymbol{o}_{-j,t}$ and $\boldsymbol{a}_{-j,t}$ denote the observations and actions of all agents except agent $j$ at timestep $t$, and let $\boldsymbol{a}_{-ij,t}$ denote the actions of all agents except agents $i$ and $j$.
Given a message condition $m$ for agent $i$ at $t$, we define
\begin{equation}
	\widetilde{\pi}_i(m)
	=
	\mathrm{softmax}_{a_{i,t}\in\mathcal{A}_i}
	\Big[
	\eta\, Q_i^{\mathrm{gain}}\big(a_{i,t},\boldsymbol{o}_{-j,t},\boldsymbol{a}_{-ij,t},m\big)
	\Big],
	\label{eq:temp-policy}
\end{equation}
where $\eta>0$ controls the sharpness of the softmax.

\paragraph{Delay cost surrogate.}
The delay cost captures the loss induced by using the received delayed message instead of a timely reference.
During training, we instantiate this reference as $m^{\mathrm{tar}}_{ji,t}:=e_{j,t}$, the sender's current trajectory embedding computed without future prediction.
This training-only reference is not meant to reproduce the full zero-delay outgoing message produced by OTG; it provides a current-time sender representation for measuring delayed-message deviation from the sender's present decision context.
It is not observed or required during decentralized execution.
We define the delay cost surrogate as
\begin{equation}\label{eq:vc}
	V^c_{ij,t}
	=
	D_{\mathrm{KL}}\!\Big(
	\widetilde{\pi}_{i}(m^{\mathrm{tar}}_{ji,t})
	\ \Big\|\ 
	\widetilde{\pi}_{i}(m^{r}_{ij,t})
	\Big)\ \ge 0.
\end{equation}
Thus, $V^c_{ij,t}$ implements the discrepancy principle in Sec.~\ref{subsec:decomm-info-gap}.
Appendix~\ref{app:surrogate-justification} discusses this choice.

\paragraph{Gain surrogate.}
To obtain a computable gain signal, we evaluate the auxiliary gain critic with and without the message from sender $j$.
Let $m^\emptyset$ denote a null message representing message absence.
We define the gain surrogate as
\begin{equation}\label{eq:vg-critic}
	\widehat{V}^{g}_{ij,t}
	=
	Q_i^{\mathrm{gain}}\big(a_{i,t},\boldsymbol{o}_{-j,t},\boldsymbol{a}_{-ij,t},m^{r}_{ij,t}\big)
	-
	Q_i^{\mathrm{gain}}\big(a_{i,t},\boldsymbol{o}_{-j,t},\boldsymbol{a}_{-ij,t},m^\emptyset\big),
\end{equation}
which serves as a practical surrogate for Eq.~\eqref{eq:q-gain}.
Under CTDE, $Q_i^{\mathrm{gain}}$ is trained on paired message-present and message-absent conditions under the same base inputs, so that the surrogate reflects the marginal contribution of message availability.
The detailed training scheme is given in Appendix~\ref{app:training-scheme}.

\paragraph{CGDC.}
We combine communication gain and delay cost into a single gain--cost score.

\begin{definition}[CGDC]\label{def:cgdc}
	Let $V^g_{ij,t}$ and $V^c_{ij,t}$ be defined in Eqs.~\eqref{eq:q-gain} and~\eqref{eq:vc}.
	The \textbf{Communication Gain and Delay Cost} is
	\begin{equation}\label{eq:cgdc}
		c_{ij,t}
		=
		V^g_{ij,t}
		-
		\lambda\,V^c_{ij,t},
	\end{equation}
	where $\lambda\ge 0$ balances the two terms and controls the conservativeness of the communication gate.
\end{definition}

Eq.~\eqref{eq:cgdc} defines the conceptual gain--cost quantity; in practice, the implemented score replaces $V^g_{ij,t}$ with its surrogate $\widehat{V}^{g}_{ij,t}$.
The two terms evaluate delayed-message utility under different counterfactuals: communication gain compares the delayed message with no message, while delay cost compares it with the timely reference through an information-gap surrogate for delay-induced return loss.

\section{CDCMA: A Realization of the Gain--Cost View}\label{sec:method}

\subsection{Overview}

\begin{figure}[t]
	\centering
	\includegraphics[width=\linewidth]{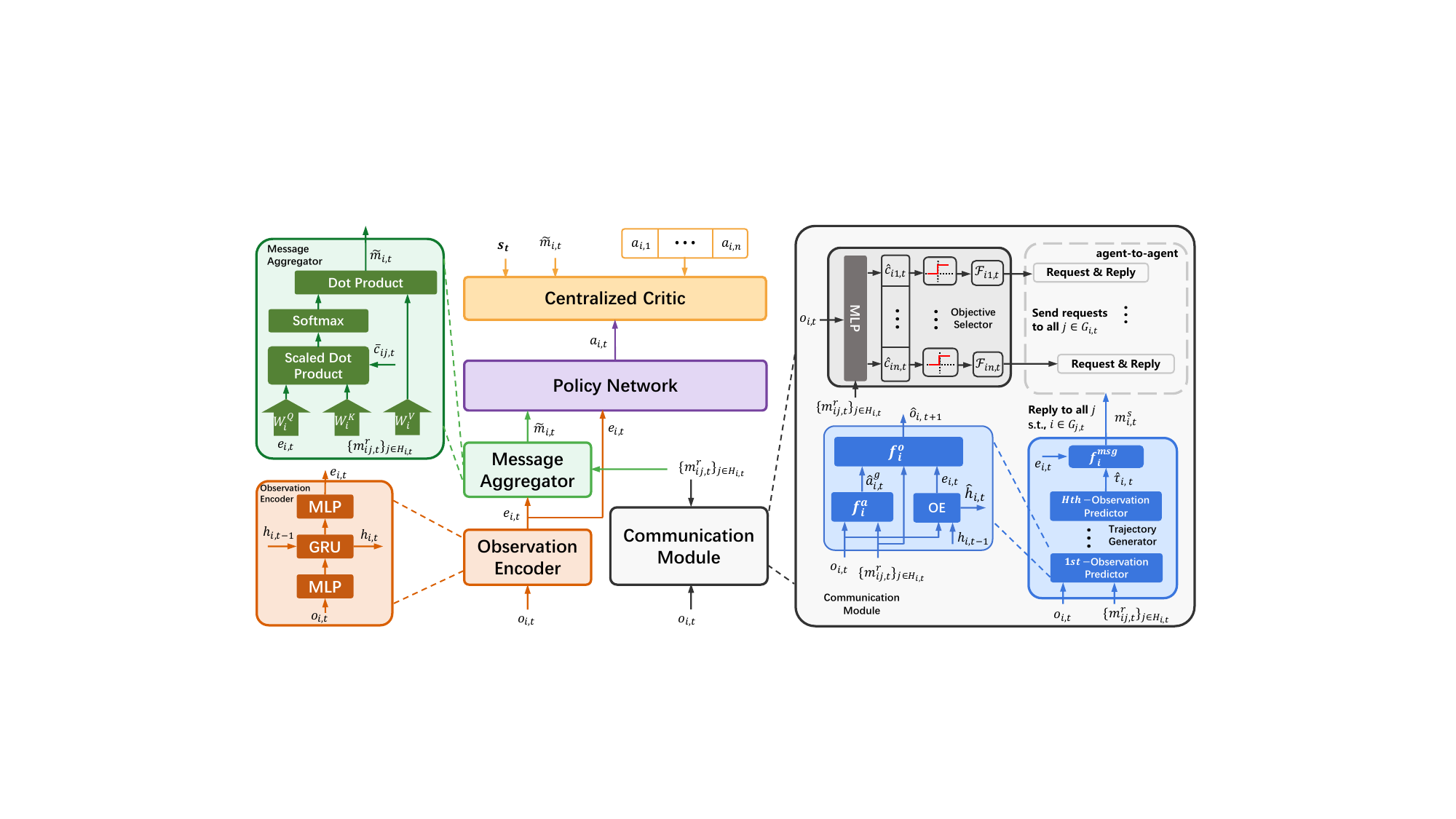}
	\caption{Overall architecture of CDCMA.}
	\label{fig:framework}
\end{figure}

The gain--cost view in Sec.~\ref{sec:pro_formu} suggests three requirements for communication under cross-timestep delays: messages should be requested when their predicted CGDC scores are positive, outgoing messages should remain informative at future consumption time, and received delayed messages should be aggregated using content relevance and the request-time predicted CGDC scores stored with them.
CDCMA realizes these requirements through a \emph{request--construct--aggregate} pipeline.

At timestep $t$, agent $i$ first encodes its local observation into a trajectory embedding and predicts which teammates to query.
Queried senders then construct outgoing messages by combining historical trajectory information with predicted future observations, so that the messages remain informative when they arrive later.
Finally, because the messages available at receiver timestep $t$ generally differ from those queried at $t$, agent $i$ aggregates received delayed messages and selects its action from the aggregated message and local embedding.

This pipeline is illustrated in Fig.~\ref{fig:framework}.
CDCMA consists of an observation encoder, the \emph{Delay Communication Objective Selector} (DCOS), the \emph{Observation Trajectory Generator} (OTG), the \emph{CGDC-guided Attentional Message Aggregator} (CAMA), a decentralized policy network, and two critics used during training.
DCOS, OTG, and CAMA implement partner selection, future-aware message construction, and CGDC-guided delayed-message aggregation, respectively.
Pseudocode and complexity are deferred to Algorithm~\ref{app:algo} and Appendix~\ref{app:time-complexity}.

\subsection{Delay Communication Objective Selector}
Under cross-timestep communication delays, a receiver must decide whom to query before future arrivals are known.
DCOS addresses this problem by predicting a per-pair CGDC score from execution-available local context and requesting a message only when the predicted score is positive.

Let $\mathcal{Z}_{i,t}\triangleq \{\, j\neq i \mid m^r_{ij,t}\neq m^\emptyset \,\}$ denote the available-sender set of agent $i$ at timestep $t$, where $m^\emptyset$ is a null message.
Given the current local observation and the received-message context $\{m^r_{ij,t}\}_{j\in \mathcal{Z}_{i,t}}$, agent $i$ predicts the scores $\{\hat c_{ij,t}\}_{j\ne i}$ by
\begin{equation}
	\{\hat c_{ij,t}\}_{j\ne i}
	=
	\phi_i\!\left(o_{i,t}, \{m^r_{ij,t}\}_{j\in \mathcal{Z}_{i,t}}; \theta_{\phi_i}\right),
	\qquad
	\hat c_{ii,t}=0.
	\label{eq:dcos}
\end{equation}
Here, $\hat c_{ij,t}$ is a learned estimator of the CGDC score defined in Sec.~\ref{subsec:cgdc-surrogate}.
Thresholding the prediction yields the request mask $F_{ij,t}=\mathbb{I}[\hat c_{ij,t}>0]$ and the queried partner set $\mathcal{G}_{i,t}=\{\,j\neq i \mid F_{ij,t}=1\,\}$.

The distinction between $\mathcal{G}_{i,t}$ and $\mathcal{Z}_{i,t}$ is essential.
$\mathcal{G}_{i,t}$ is the set of teammates queried at timestep $t$, whereas $\mathcal{Z}_{i,t}$ is the set of senders whose delayed messages are available at timestep $t$.
Because delivery is delayed and unobserved, these two sets generally differ.
This mismatch motivates the remaining components: OTG constructs future-aware outgoing messages, while CAMA aggregates received delayed messages.

\subsection{Observation Trajectory Generator}
\label{subsec:otg}
A message generated at timestep $t$ may be consumed several timesteps later, so a sender representation based only on the current observation may be misaligned when consumed.
OTG mitigates this temporal misalignment by constructing future-aware outgoing messages from predicted observations.

Let $H$ denote the prediction horizon.
At $t$, OTG outputs a predicted observation trajectory $\hat{\tau}_{i,t}=(\hat{o}_{i,t+1},\ldots,\hat{o}_{i,t+H})$.
The outgoing message is then formed by encoding the combined trajectory $(\tau_{i,t},\hat{\tau}_{i,t})$, where $\tau_{i,t}$ denotes the historical trajectory summary of agent $i$ up to $t$.

Because future observations depend on both teammates' actions and local dynamics, OTG factorizes prediction into two parts.
First, a teammate-action predictor estimates the actions of the agents whose messages are currently available:
\begin{equation}
	\hat{a}^{g}_{i,t}
	:=
	f^a_i\!\left(o_{i,t},\{m^r_{ij,t}\}_{j\in \mathcal{Z}_{i,t}}\right)
	=
	\{\hat{a}_{j,t}\}_{j\in \mathcal{Z}_{i,t}}.
	\label{eq:otg-action}
\end{equation}
Second, an observation-dynamics predictor estimates the next-step observation residual:
\begin{equation}\label{eq:predictobs}
	\hat{o}_{i,t+1}
	=
	o_{i,t}
	+
	f^o_i\!\left(e_{i,t},\{m^r_{ij,t}\}_{j\in \mathcal{Z}_{i,t}},\hat{a}^{g}_{i,t}\right),
\end{equation}
where $e_{i,t}$ denotes the local trajectory embedding.
Recursively applying Eq.~\eqref{eq:predictobs} yields the $H$-step predicted trajectory $\hat{\tau}_{i,t}$.

OTG therefore constructs a future-aware outgoing message rather than simply encoding the sender's current state, keeping transmitted information better aligned with the receiver's decision context at consumption time.

\subsection{CGDC-guided Attentional Message Aggregator}
At receiver timestep $t$, agent $i$ may have multiple delayed messages available for decision making.
Content relevance and CGDC capture complementary aspects of delayed messages: the former measures semantic compatibility, whereas the latter estimates consumption-time utility.
CAMA therefore aggregates the received delayed messages in $\mathcal{Z}_{i,t}$ using both content relevance and the stored request-time CGDC score.

For each $j\in \mathcal{Z}_{i,t}$, let $\bar c_{ij,t}>0$ denote the request-time predicted CGDC score stored with message $m^r_{ij,t}$.
Given the local trajectory embedding $e_{i,t}$, CAMA first computes the query of agent $i$ and the key--value pairs of the received messages:
\begin{equation}
	q_{i,t}=W^Q_i e_{i,t},\qquad
	k_{ij,t}=W^K_i m^r_{ij,t},\qquad
	v_{ij,t}=W^V_i m^r_{ij,t},\quad j\in \mathcal{Z}_{i,t},
\end{equation}
where $W_i^Q$, $W_i^K$, and $W_i^V$ are learnable projections.
It then modulates standard content-based attention by the stored CGDC score and defines
\begin{equation}\label{att_weights_cgdc}
	\alpha^{\mathrm{cgdc}}_{ij,t}
	=
	\frac{
		\bar c_{ij,t}\exp\!\left(\beta\, q^\top_{i,t}k_{ij,t}\right)
	}{
		\sum_{l\in \mathcal{Z}_{i,t}}
		\bar c_{il,t}\exp\!\left(\beta\, q^\top_{i,t}k_{il,t}\right)
	},
	\qquad j\in \mathcal{Z}_{i,t},
\end{equation}
where $\beta>0$ controls attention sharpness.
The stored CGDC score serves as a utility prior rather than a fixed aggregation weight, and the final attention also depends on the current receiver query and message content.
The aggregated delayed-message representation is
\begin{equation}\label{eq:cama}
	\tilde m_{i,t}
	=
	\sum_{j\in \mathcal{Z}_{i,t}}
	\alpha^{\mathrm{cgdc}}_{ij,t} v_{ij,t}.
\end{equation}

Thus, CAMA preserves content discrimination while using CGDC as a utility prior.

\subsection{Training Overview}
CDCMA is trained under CTDE with shared agent parameters.
The two critics follow Sec.~\ref{subsec:cgdc-surrogate}: $Q_i^{\mathrm{full}}$ optimizes the actor, while $Q_i^{\mathrm{gain}}$ constructs CGDC targets.
OTG and DCOS are trained with auxiliary objectives, whereas CAMA is optimized jointly with the actor.

\section{Experiments}
\label{sec:exp}

We evaluate CDCMA under four bounded stochastic delay regimes and a delay-free setting.
The experiments examine three aspects: performance against communication baselines, the contribution of each component and key hyperparameter, and robustness under delay-free and zero-shot cross-difficulty settings.
Implementation details and supplementary results are given in Appendix~\ref{app:exp}.

\subsection{Experimental Setup}
\label{sec:exp-setup}

\paragraph{Benchmarks and metrics.}
We evaluate CDCMA on no-teammate-vision variants of Cooperative Navigation (CN) and Predator Prey (PP) in Multi-Agent Particle Environment (MPE), and on two communication-sensitive SMAC maps: \texttt{1o\_2r\_vs\_4r} and \texttt{1o\_10b\_vs\_1r}.
We report mean episode reward on MPE and win rate on SMAC.

\paragraph{Baselines.}
We compare CDCMA with CoDe~\cite{song2025code}, DACOM~\cite{yuan2023dacom}, TGCNet~\cite{Zhang_He_Cheng_Li_2025}, T2MAC~\cite{sun2024t2mac}, SMS~\cite{xue2022efficient}, G2ANet~\cite{liu2020multi}, and ATOC~\cite{jiang2018learning}.
All methods are evaluated under the same delayed-communication interface and delay settings.
For common training hyperparameters, we use a shared configuration unless a method-specific setting is required by the original design.

\paragraph{Delay settings.}
For each ordered pair $(j,i)$ and send timestep $k$, the communication delay is an integer $d_{ji,k}\in\{1,\dots,d_{\max}\}$ drawn from a difficulty-indexed probability mass function $p_\ell(d)$, where $\ell\in\{\texttt{easy},\texttt{medium},\texttt{hard},\texttt{super\_hard}\}$.
We also report a delay-free setting with $d_{ji,k}=0$.
The exact delay PMFs are given in Appendix~\ref{app:exp-delay-model}, and the evaluation protocol, implementation details, and hyperparameters are given in Appendix~\ref{app:exp-details}.

\subsection{Main Results}
\label{sec:exp-main}

Table~\ref{results_table} reports the final test performance across tasks and delay settings.
CDCMA achieves the best performance on all four tasks under all four bounded stochastic delay regimes, outperforming CoDe and DACOM, the two baselines most directly related to delayed communication, as well as the remaining communication baselines.
The performance gap generally widens as delay difficulty increases, suggesting stronger robustness within the bounded regimes studied here.
Learning curves for MPE and SMAC are provided in Appendix~\ref{app:exp-curves}.

\begin{table*}[!t]
	\vspace{-5pt}
	\scriptsize
	\centering
	\caption{Mean test episode reward (win rate) across tasks under four bounded stochastic delay regimes. Results are mean (std) over 100 evaluation episodes; best values are in bold.}
	\label{results_table}
	\resizebox{\textwidth}{!}{%
		\begin{tabular}{@{}llllllllll@{}}\toprule
			\textbf{Task} & \textbf{Difficulty} & \textbf{CDCMA} & \textbf{CoDe} & \textbf{DACOM} & \textbf{TGCNet} & \textbf{T2MAC} & \textbf{SMS} & \textbf{G2ANet} & \textbf{ATOC} \\
			\midrule
			\multirow{4}{*}{Cooperative Navigation}
			& \texttt{easy}        & \textbf{-1.70}(0.07) & -3.11(0.14) & -3.16(0.43) & -2.08(0.32) & -2.23(0.04) & -2.57(0.11) & -2.24(0.09) & -2.51(0.62) \\
			& \texttt{medium}      & \textbf{-1.80}(0.36) & -3.09(0.05) & -3.14(0.13) & -2.21(0.35) & -2.27(0.08) & -2.67(1.01) & -2.48(0.07) & -2.43(0.14) \\
			& \texttt{hard}        & \textbf{-1.78}(0.22) & -4.36(2.55) & -3.52(1.02) & -2.35(0.34) & -2.51(0.44) & -2.81(0.47) & -2.71(0.07) & -2.75(0.22) \\
			& \texttt{super\_hard} & \textbf{-1.89}(0.13) & -4.42(2.75) & -3.00(0.21) & -2.56(0.36) & -2.53(0.17) & -3.04(0.64) & -2.94(0.06) & -2.79(0.45) \\
			\midrule
			\multirow{4}{*}{Predator Prey}
			& \texttt{easy}        & \textbf{-0.93}(0.03) & -1.82(0.03) & -1.97(0.23) & -1.53(0.14) & -1.10(0.15) & -1.46(0.27) & -1.17(0.01) & -2.38(0.38) \\
			& \texttt{medium}      & \textbf{-0.91}(0.04) & -1.83(0.03) & -2.19(0.42) & -1.60(0.14) & -1.27(0.22) & -1.36(0.17) & -1.40(0.05) & -2.75(0.54) \\
			& \texttt{hard}        & \textbf{-1.02}(0.06) & -1.86(0.08) & -1.93(0.01) & -1.72(0.15) & -1.47(0.39) & -1.51(0.06) & -1.71(0.08) & -2.98(0.19) \\
			& \texttt{super\_hard} & \textbf{-0.97}(0.10) & -1.85(0.07) & -2.01(0.04) & -1.75(0.11) & -1.66(0.33) & -1.78(0.71) & -1.85(0.04) & -3.08(0.86) \\
			\midrule
			\multirow{4}{*}{\texttt{1o\_2r\_vs\_4r}(\%)}
			& \texttt{easy}        & \textbf{81.25}(6.75) & 65.59(5.50) & 40.35(1.10) & 63.28(14.29) & 40.44(26.18) & 48.44(17.02) & 44.27(10.54) & 22.40(16.48) \\
			& \texttt{medium}      & \textbf{79.72}(5.97) & 56.55(11.63) & 35.94(6.51) & 56.25(7.66) & 38.28(28.91) & 45.87(8.08) & 29.57(6.71) & 29.17(16.01) \\
			& \texttt{hard}        & \textbf{68.81}(10.15) & 43.65(14.67) & 40.98(8.92) & 50.78(9.33) & 35.04(29.19) & 44.43(3.46) & 28.13(16.73) & 23.46(20.84) \\
			& \texttt{super\_hard} & \textbf{63.28}(11.23) & 39.09(10.90) & 38.28(6.93) & 49.22(6.93) & 25.32(26.56) & 40.63(14.88) & 33.59(10.94) & 17.19(12.10) \\
			\midrule
			\multirow{4}{*}{\texttt{1o\_10b\_vs\_1r}(\%)}
			& \texttt{easy}        & \textbf{81.88}(6.75) & 35.21(4.36) & 13.14(3.04) & 66.41(6.44) & 33.59(38.98) & 23.28(17.75) & 38.18(24.80) & 0(0) \\
			& \texttt{medium}      & \textbf{79.79}(3.98) & 35.15(4.67) & 10.16(8.22) & 51.92(6.85) & 28.91(34.55) & 13.54(23.45) & 23.13(9.53) & 0.5(1.28) \\
			& \texttt{hard}        & \textbf{75.77}(3.32) & 24.22(22.88) & 12.25(3.29) & 51.53(13.69) & 29.06(24.81) & 0(0) & 16.12(5.75) & 0(0) \\
			& \texttt{super\_hard} & \textbf{73.44}(5.41) & 29.16(15.72) & 15.74(0.95) & 46.88(11.27) & 23.42(29.55) & 0(0) & 21.47(7.36) & 0(0) \\
			\bottomrule
		\end{tabular}%
	}
	\vspace{-5pt}
\end{table*}

Several patterns are worth noting.
First, CDCMA outperforms CoDe and DACOM on all tasks and delay settings, and this pattern holds on both MPE and SMAC.
Second, the gap over baselines generally becomes more pronounced as received delayed messages become harder to use.
For example, on CN the gap over CoDe grows from $1.41$ under \texttt{easy} to $2.53$ under \texttt{super\_hard}, and on \texttt{1o\_10b\_vs\_1r} the advantage remains above $44$ percentage points throughout.
Third, CDCMA also remains ahead of strong delay-free communication baselines, especially TGCNet on SMAC and CN and T2MAC on PP.
For instance, under \texttt{super\_hard}, CDCMA achieves $73.44\%$ versus $46.88\%$ on \texttt{1o\_10b\_vs\_1r}, and $63.28\%$ versus $49.22\%$ on \texttt{1o\_2r\_vs\_4r}.

These patterns are consistent with the proposed design: DCOS filters communication using predicted CGDC, OTG reduces consumption-time misalignment through future-aware outgoing messages, and CAMA incorporates CGDC as a utility prior during delayed-message aggregation.
Overall, cross-timestep delays impair existing communication baselines, whereas CDCMA mitigates this degradation across the bounded settings evaluated here.

\subsection{Ablations and Sensitivity}
\label{sec:exp-ablation}

We next isolate each main module and examine sensitivity to two key hyperparameters, $H$ and $\lambda$.
Unless otherwise noted, all ablations are conducted under the \texttt{super\_hard} delay regime.

\begin{wrapfigure}{r}{0.6\textwidth}
	\vspace{-0.8em}
	\centering
	\subfloat[Module ablations\label{fig:abla_modules}]{
		\includegraphics[width=0.31\linewidth]{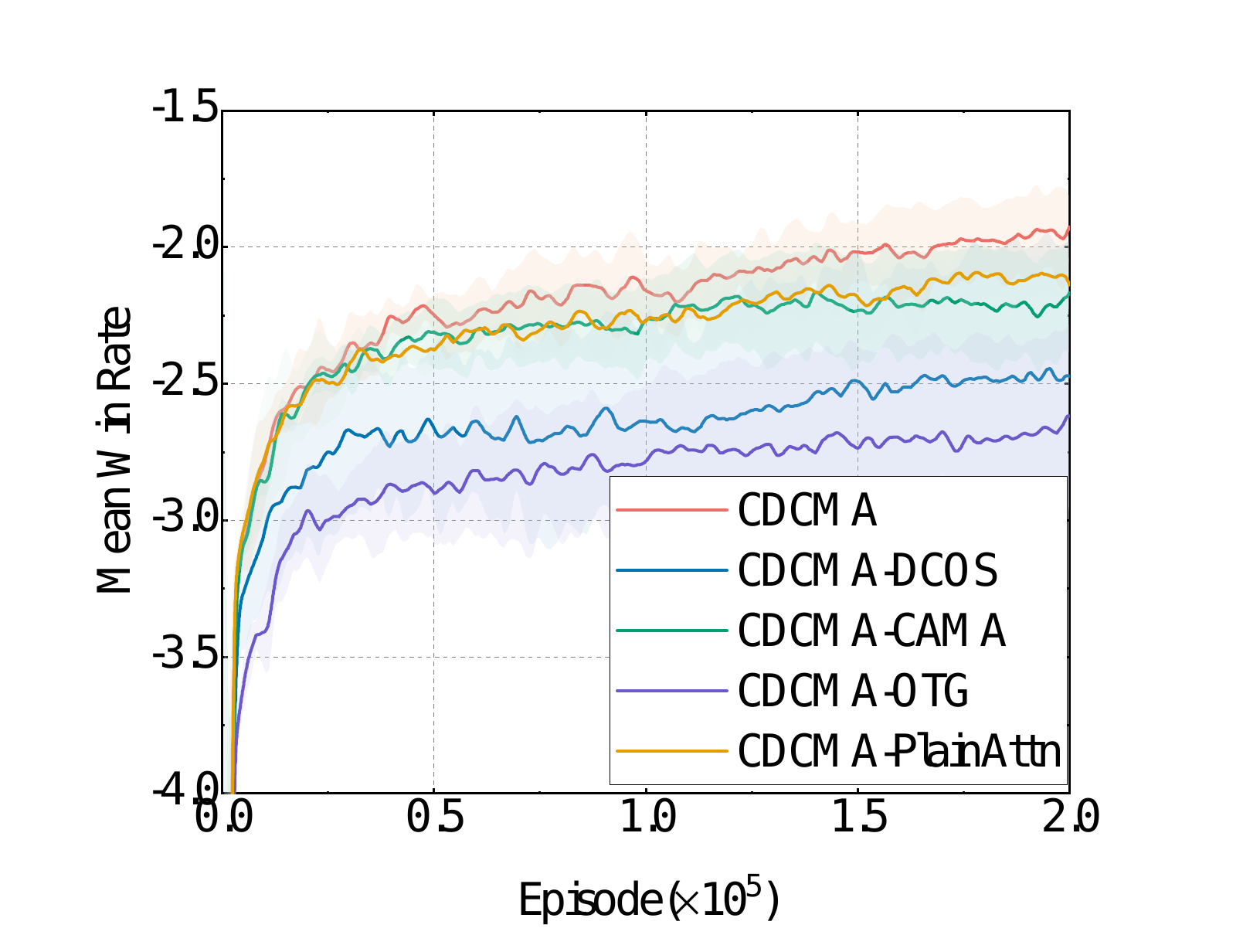}
	}\hfill
	\subfloat[Horizon $H$\label{fig:abla_H}]{
		\includegraphics[width=0.31\linewidth]{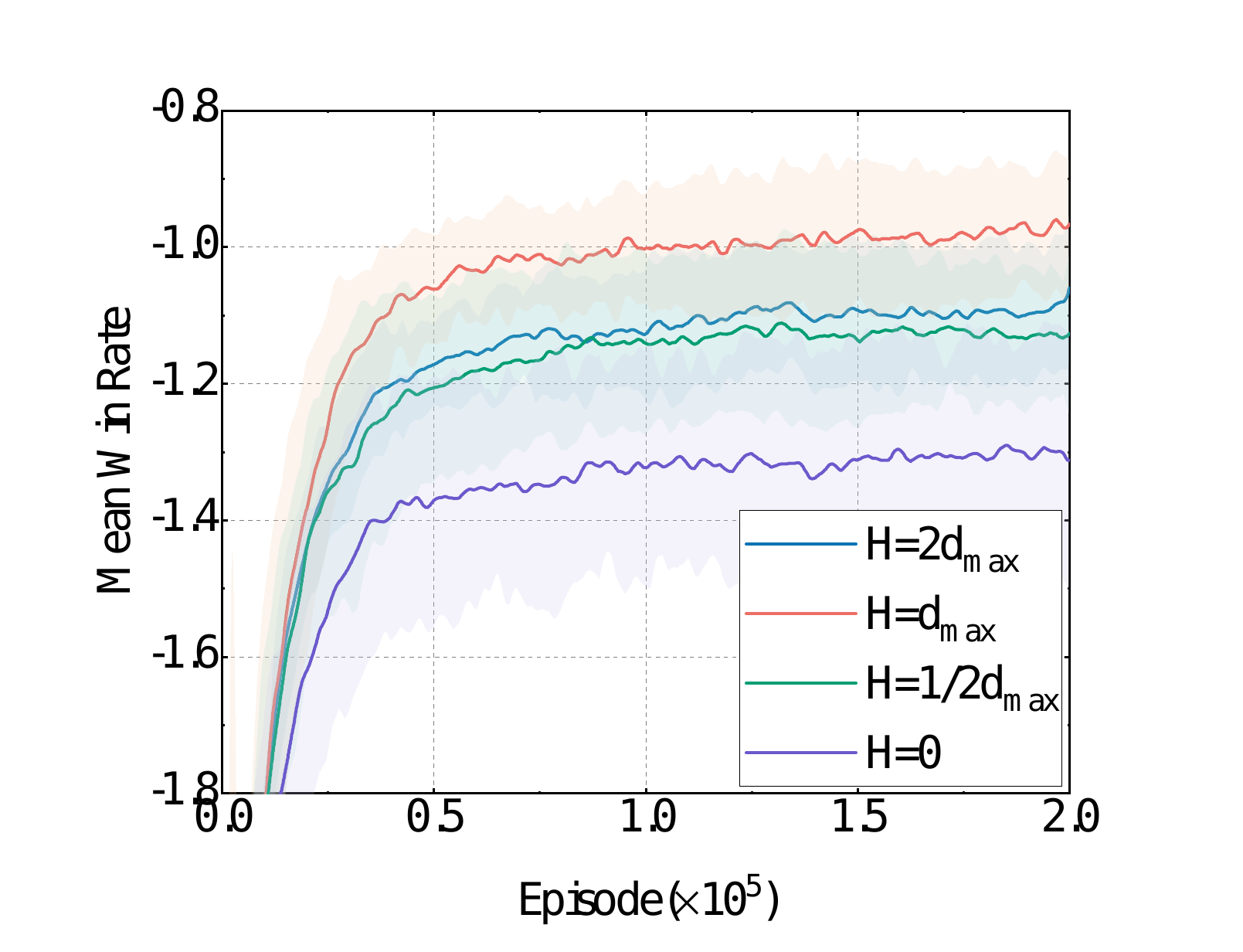}
	}\hfill
	\subfloat[CGDC weight $\lambda$\label{fig:abla_lambda}]{
		\includegraphics[width=0.31\linewidth]{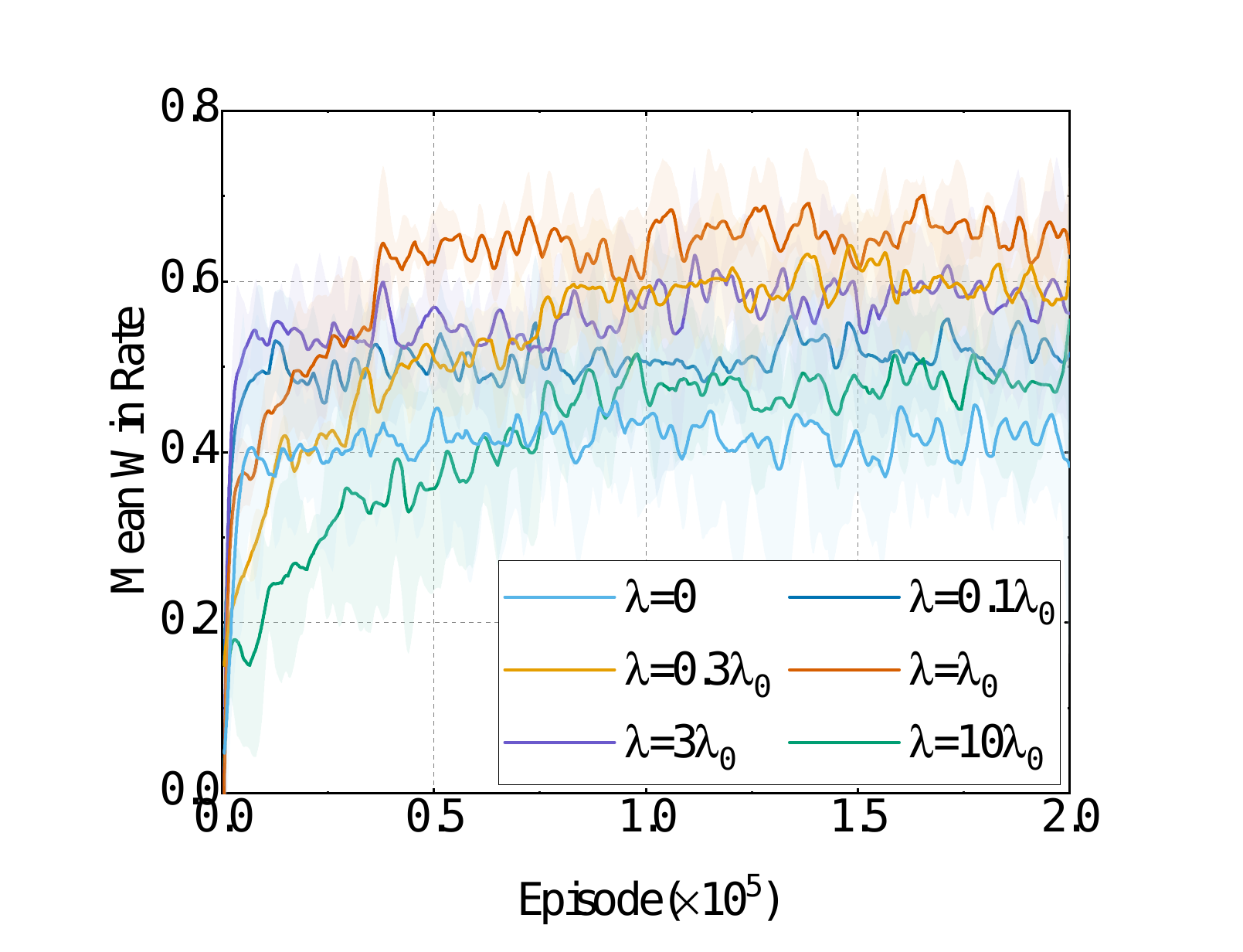}
	}
	\caption{Ablations and sensitivity analyses.}
	\label{fig:ablation_all}
	\vspace{-1.0em}
\end{wrapfigure}

\paragraph{Module ablations.}
We ablate the three main components of CDCMA on CN: \textit{CDCMA-OTG} (without OTG), \textit{CDCMA-CAMA} (without CAMA), \textit{CDCMA-DCOS} (without DCOS), and \textit{CDCMA-PlainAttn} (replacing CAMA with standard dot-product attention).
As shown in Fig.~\ref{fig:abla_modules}, removing any component degrades performance.
The largest drop occurs without OTG, indicating that future-aware outgoing messages are critical for reducing consumption-time misalignment.
\textit{CDCMA-PlainAttn} also underperforms CDCMA, showing that CAMA benefits from combining content attention with CGDC-based utility priors.

\paragraph{Sensitivity to the prediction horizon $H$.}
We vary the prediction horizon on PP under \texttt{super\_hard} delays, with $H\in\{0,\lceil \tfrac{1}{2}d_{\max}\rceil,d_{\max},2d_{\max}\}$.
Fig.~\ref{fig:abla_H} shows that performance improves up to $H=d_{\max}$ but deteriorates beyond this value, consistent with a coverage--rollout-error trade-off.

\paragraph{Sensitivity to the CGDC weight $\lambda$.}
On \texttt{1o\_2r\_vs\_4r} under \texttt{super\_hard}, we sweep $\lambda \in \{0,0.1\lambda_{0},0.3\lambda_{0},\lambda_{0},3\lambda_{0},10\lambda_{0}\}$, where $\lambda_{0}:=\mathbb{E}[\widehat{V}^{g}_{ij,t}]/\mathbb{E}[V^{c}_{ij,t}]$ matches the scales of the gain and cost terms.
Fig.~\ref{fig:abla_lambda} shows that both $\lambda=0$ and overly large $\lambda$ are suboptimal, whereas a moderate range around $\lambda_0$ performs best.

\subsection{Information-Gap Analysis and Robustness}
\label{sec:exp-robust}

We finally examine whether CDCMA reduces the critic-tempered information gap and whether its performance persists under delay-free and zero-shot cross-difficulty settings.

\newcommand{\msd}[2]{\ensuremath{#1\,\pm\,#2}}

\begin{table*}[!t]
	\scriptsize
	\centering
	\caption{CN zero-shot cross-difficulty generalization. Train: \texttt{easy}; test: \texttt{medium}, \texttt{hard}, \texttt{super\_hard}. Cells report mean$\pm$std episode reward; brackets show relative change w.r.t.\ \texttt{easy}.}
	\label{tab:cn_zero_shot_compact}
	\resizebox{\textwidth}{!}{%
		\begin{tabular}{@{}lcccc@{}}
			\toprule
			\multirow{2}{*}{Method} &
			\multicolumn{1}{c}{Train} &
			\multicolumn{3}{c}{Test} \\
			\cmidrule(lr){2-2}\cmidrule(lr){3-5}
			& \texttt{easy} &
			\texttt{medium} &
			\texttt{hard} &
			\texttt{super\_hard} \\
			\midrule
			CDCMA  & \msd{\textbf{-1.70}}{0.07} & \msd{\textbf{-1.82}}{0.15} {\scriptsize [\(\Delta\% = -7.06\%\)]}  & \msd{\textbf{-1.84}}{0.19} {\scriptsize [\(\Delta\% = -8.24\%\)]}  & \msd{\textbf{-1.93}}{0.09} {\scriptsize [\(\Delta\% = -13.53\%\)]} \\
			CoDe   & \msd{-3.11}{0.14} & \msd{-3.42}{0.19} {\scriptsize [\(\Delta\% = -9.97\%\)]}  & \msd{-3.59}{0.71} {\scriptsize [\(\Delta\% = -15.43\%\)]} & \msd{-4.02}{1.88} {\scriptsize [\(\Delta\% = -29.26\%\)]} \\
			DACOM  & \msd{-3.16}{0.43} & \msd{-3.21}{0.27} {\scriptsize [\(\Delta\% = -1.58\%\)]}  & \msd{-3.37}{0.39} {\scriptsize [\(\Delta\% = -6.65\%\)]}  & \msd{-3.29}{0.45} {\scriptsize [\(\Delta\% = -4.11\%\)]}  \\
			TGCNet & \msd{-2.08}{0.32} & \msd{-2.28}{0.33} {\scriptsize [\(\Delta\% = -9.62\%\)]}  & \msd{-2.41}{0.41} {\scriptsize [\(\Delta\% = -15.87\%\)]} & \msd{-2.65}{0.47} {\scriptsize [\(\Delta\% = -27.40\%\)]} \\
			T2MAC  & \msd{-2.23}{0.04} & \msd{-2.31}{0.12} {\scriptsize [\(\Delta\% = -3.59\%\)]}  & \msd{-2.62}{0.55} {\scriptsize [\(\Delta\% = -17.49\%\)]} & \msd{-2.74}{0.59} {\scriptsize [\(\Delta\% = -22.87\%\)]} \\
			SMS    & \msd{-2.57}{0.11} & \msd{-2.71}{0.94} {\scriptsize [\(\Delta\% = -5.45\%\)]}  & \msd{-2.89}{0.97} {\scriptsize [\(\Delta\% = -12.45\%\)]} & \msd{-3.11}{0.77} {\scriptsize [\(\Delta\% = -21.01\%\)]} \\
			G2ANet & \msd{-2.24}{0.09} & \msd{-2.57}{0.11} {\scriptsize [\(\Delta\% = -14.73\%\)]} & \msd{-2.85}{0.09} {\scriptsize [\(\Delta\% = -27.23\%\)]} & \msd{-3.01}{0.13} {\scriptsize [\(\Delta\% = -34.37\%\)]} \\
			ATOC   & \msd{-2.51}{0.62} & \msd{-2.55}{0.26} {\scriptsize [\(\Delta\% = -1.59\%\)]}  & \msd{-2.87}{0.19} {\scriptsize [\(\Delta\% = -14.34\%\)]} & \msd{-2.93}{0.37} {\scriptsize [\(\Delta\% = -16.73\%\)]} \\
			\bottomrule
		\end{tabular}%
	}
\end{table*}

\begin{wrapfigure}{r}{0.53\textwidth}
	\vspace{-0.8em}
	\centering
	\subfloat[Predator Prey\label{pp_info}]{
		\includegraphics[width=0.47\linewidth]{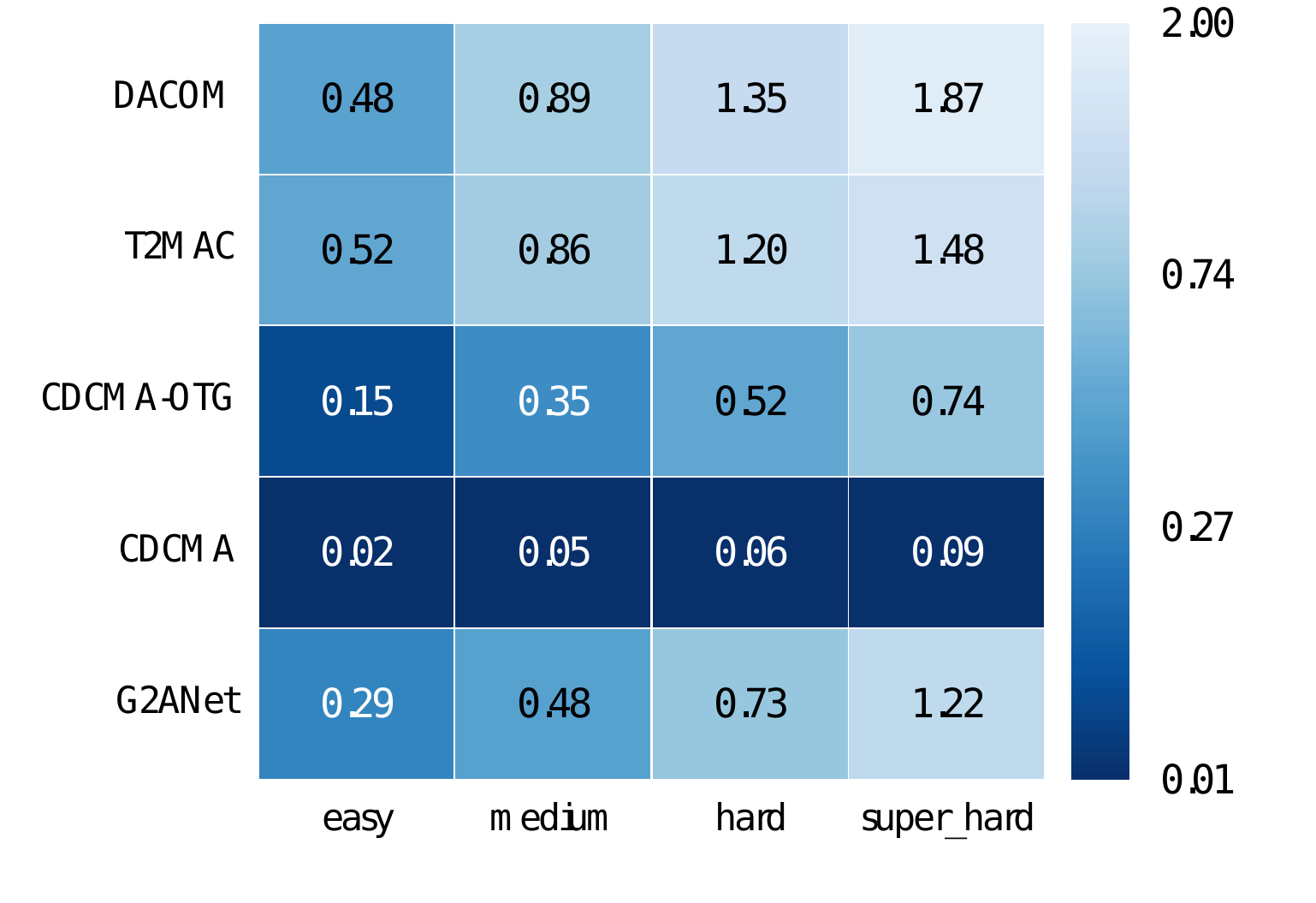}
	}\hfill
	\subfloat[\texttt{1o\_10b\_vs\_1r}\label{1o10b_info}]{
		\includegraphics[width=0.47\linewidth]{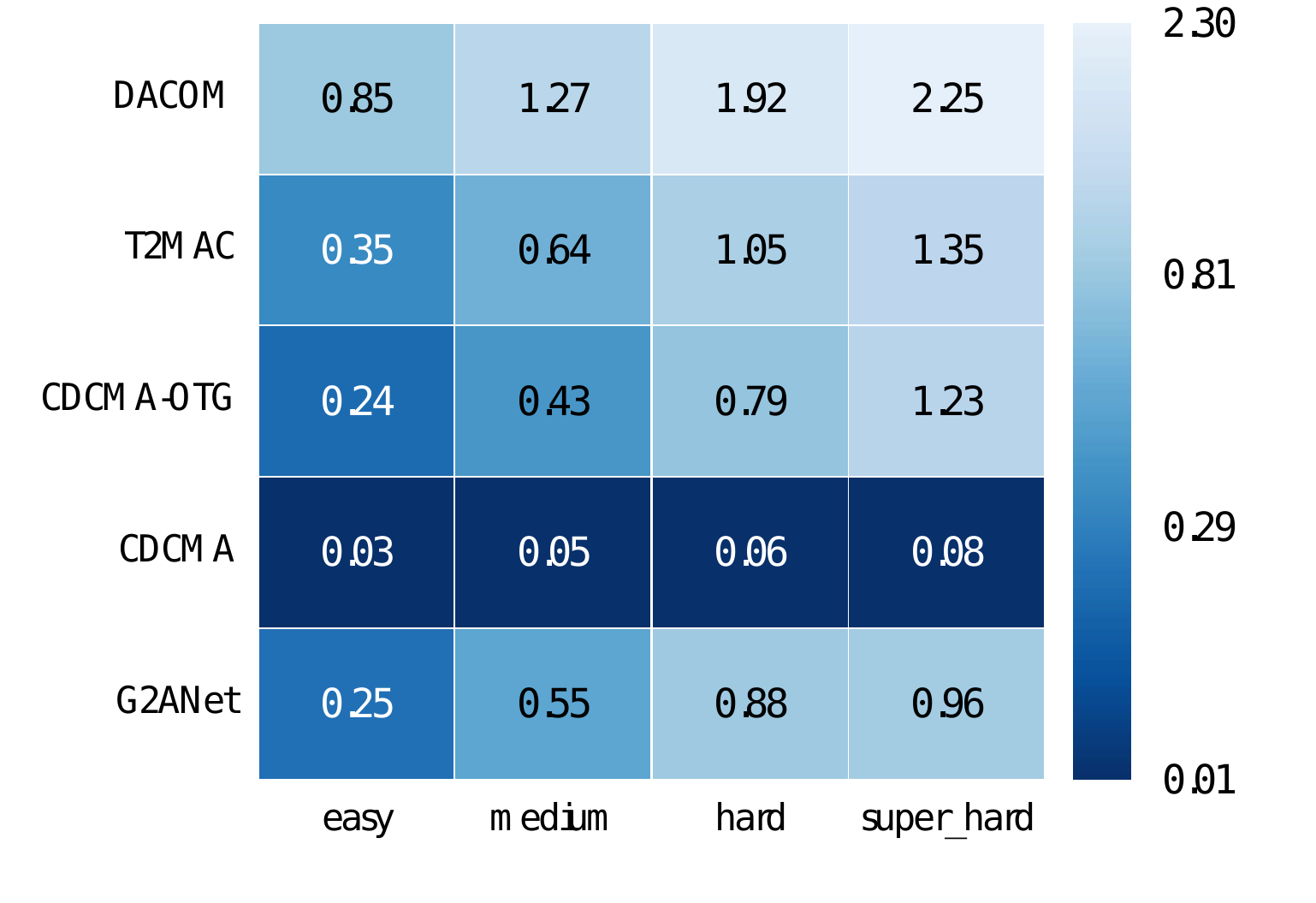}
	}
	\caption{Critic-tempered information-gap analysis.}
	\label{information_fig}
	\vspace{-1.0em}
\end{wrapfigure}

\paragraph{Critic-tempered information-gap analysis.}
Motivated by the delay-induced return-loss bound in Sec.~\ref{subsec:decomm-info-gap}, we evaluate the critic-tempered information gap in Eq.~\eqref{eq:vc} as a diagnostic delay-cost surrogate.
For each $(i,j,t)$, we compute the KL divergence between critic-induced tempered policies conditioned on the timely reference message $m^{\mathrm{tar}}_{ji,t}$ and the received delayed message $m^{r}_{ij,t}$, while keeping the message-free local input $x_{i,t}$ fixed.
This diagnostic is not used for action selection; the timely branch only computes the gap.

We report episode-averaged gaps on PP and \texttt{1o\_10b\_vs\_1r} under \texttt{easy}--\texttt{super\_hard}.
As shown in Fig.~\ref{information_fig}, CDCMA yields the smallest gaps and does not exhibit a systematic increase with delay difficulty.
Removing OTG substantially increases the gap, indicating that future-aware outgoing messages reduce timely-vs-delayed discrepancy.
Combined with Table~\ref{results_table} and Fig.~\ref{fig:abla_modules}, these results are consistent with the bound motivation: CDCMA combines smaller information gaps with stronger empirical robustness under bounded cross-timestep delays.
Additional CDCMA-internal diagnostics are provided in Appendix~\ref{app:cgdc-diagnostics}.

\paragraph{Delay-free evaluation.}
To test robustness in the zero-delay limit, we evaluate \texttt{1o\_10b\_vs\_1r} with $d_{ji,k}=0$ and set $H=0$ accordingly.
CDCMA remains stable and competitive in this zero-delay limit; the corresponding learning curve is deferred to Appendix~\ref{app:delay-free}.

\paragraph{Zero-shot cross-difficulty.}
To test robustness under delay-distribution shift, we train on CN under \texttt{easy} and test zero-shot under \texttt{medium}, \texttt{hard}, and \texttt{super\_hard}.
Table~\ref{tab:cn_zero_shot_compact} shows that CDCMA achieves the best mean episode reward under all three unseen delay difficulties and exhibits relatively modest degradation as the delay distribution becomes harder.

\section{Conclusion}
We studied cooperative MARL under cross-timestep communication delays, where received messages may be informative yet temporally misaligned when consumed.
We formalized this setting as DeComm-POMG and introduced CGDC, a gain--cost metric that evaluates delayed-message utility through no-message and timely-reference counterfactuals.

We instantiated this view in CDCMA, with CGDC-based partner selection, future-aware message construction, and CGDC-guided delayed-message aggregation.
Experiments on no-teammate-vision MPE tasks and SMAC maps showed that CDCMA improves performance over strong baselines under bounded stochastic delay regimes, yields smaller critic-tempered information gaps, and remains robust in delay-free and zero-shot cross-difficulty settings.

\paragraph*{Limitations and future work}
Our study assumes bounded, discrete, reliably delivered, exogenous delays and uses a fixed OTG prediction horizon.
Extending the framework to nonstationary latency, packet loss, bandwidth constraints, and larger, noisier, or more stochastic environments remains important future work.

\bibliography{example_paper}
\bibliographystyle{plainnat}

\newpage
\appendix
\onecolumn

\section{Notation Summary}
\label{app:notation}

For convenience, Table~\ref{tab:notation_summary} summarizes the main paper-specific symbols used in the formulation and method.

\begin{table*}[h]
	\centering
	\scriptsize
	\caption{Summary of the main paper-specific symbols.}
	\label{tab:notation_summary}
	\resizebox{\textwidth}{!}{%
		\begin{tabular}{@{}lll@{}}
			\toprule
			\textbf{Category} & \textbf{Symbol} & \textbf{Meaning} \\
			\midrule
			
			\multirow{8}{*}{Delayed communication semantics}
			& $m^s_{ji,t}$ & Message sent from agent $j$ to agent $i$ at timestep $t$ \\
			& $m^r_{ij,t}$ & Message from agent $j$ that is available to agent $i$ at receiver timestep $t$ \\
			& $m^{\mathrm{tar}}_{ji,t}$ & Timely reference condition for sender $j$ at receiver timestep $t$ \\
			& $m^\emptyset$ & Null / masked message input \\
			& $d_{ji,t}$ & Sampled communication delay for link $(j\!\to\! i)$ at timestep $t$ \\
			& $\mathcal{B}_t$ & Message buffer at timestep $t$ \\
			& $\widetilde{s}_t$ & Augmented state $(s_t,\mathcal{B}_t)$ \\
			& $x_{i,t}$ & Message-free local input of agent $i$ at timestep $t$ \\
			\midrule
			
			\multirow{6}{*}{Gain, delay cost, and CGDC}
			& $V^g_{ij,t}$ & Communication gain from sender $j$ to receiver $i$ at timestep $t$ \\
			& $\widehat{V}^{g}_{ij,t}$ & Critic-based surrogate of communication gain \\
			& $\Delta_{\mathrm{info}}(i,j,t)$ & Information gap between timely-reference-conditioned and delayed-message-conditioned policies \\
			& $V^c_{ij,t}$ & Delay cost surrogate \\
			& $c_{ij,t}$ & Communication Gain and Delay Cost (CGDC) score \\
			& $\lambda$ & Weight balancing communication gain and delay cost \\
			\midrule
			
			\multirow{10}{*}{CDCMA modules}
			& $\phi_i$ & DCOS predictor for agent $i$ \\
			& $\hat c_{ij,t}$ & Predicted CGDC score for communication from $j$ to $i$ \\
			& $F_{ij,t}$ & Binary request mask indicating whether agent $i$ queries agent $j$ \\
			& $\mathcal{G}_{i,t}$ & Set of senders queried by agent $i$ at timestep $t$ \\
			& $\mathcal{Z}_{i,t}$ & Set of senders whose messages are available to agent $i$ at timestep $t$ \\
			& $e_{i,t}$ & Local trajectory embedding of agent $i$ at timestep $t$ \\
			& $\tau_{i,t}$ & Historical trajectory summary of agent $i$ up to timestep $t$ \\
			& $\hat{\tau}_{i,t}$ & Predicted future observation trajectory of agent $i$ \\
			& $H$ & OTG prediction horizon \\
			& $\tilde m_{i,t}$ & Aggregated delayed-message representation used by agent $i$ \\
			\midrule
			
			\multirow{5}{*}{Critics and attention}
			& $Q_i^{\mathrm{full}}$ & Centralized full critic used for actor--critic optimization \\
			& $Q_i^{\mathrm{gain}}$ & Auxiliary gain critic used to construct gain/cost surrogates \\
			& $\widetilde{\pi}_i(m)$ & Critic-induced tempered policy under message condition $m$ \\
			& $\bar c_{ij,t}$ & Request-time predicted CGDC score stored with a delayed message \\
			& $\alpha^{\mathrm{cgdc}}_{ij,t}$ & CAMA attention weight on message from sender $j$ to receiver $i$ \\
			\bottomrule
	\end{tabular}}
\end{table*}

\section{Extended Related Work}
\label{app:extended-related-work}

Communication learning is a longstanding theme in MARL. Most existing approaches study how agents communicate in delay-free settings, but differ in whether they primarily emphasize broad information sharing, selective communication, or delayed-message handling. Below, we summarize these lines of work and clarify the positioning of CDCMA.

\subsection{Centralized Communication}

Centralized communication protocols enable broad information sharing, commonly through differentiable message passing. Canonical examples include CommNet~\cite{sukhbaatar2016learning}, DIAL~\cite{foerster2016learning}, and BiCNet~\cite{peng2017multiagent}. Follow-up work improves message expressiveness through recurrent aggregation and richer communication memory, e.g., FCMNet~\cite{wang2022fcmnet}, and enhances discreteness, interpretability, or targeted addressing, e.g., Diff Discrete~\cite{freed2020communication}, TarMAC~\cite{das2019tarmac}, and IS~\cite{kim2020communication}. Other methods reduce redundancy or improve robustness by filtering or dropping messages, e.g., DCC-MD~\cite{kim2019message} and $\Re$\!-MACRL~\cite{xue2022mis}, or by introducing auxiliary structures such as centralized proxies and self-supervised objectives, e.g., IMAC~\cite{wang2020learning} and MASIA~\cite{guan2022efficient}. Symbolic or grounded communication has also been explored, e.g., GCL~\cite{mordatch2018emergence} and AE-Comm~\cite{lin2021learning}. While these methods substantially enrich communication modeling in MARL, they are mainly developed for settings in which exchanged messages are consumed within the current decision step and do not explicitly address cross-timestep staleness.

\subsection{Selective Communication}

Selective communication improves scalability and efficiency by restricting exchange to a subset of agents, entities, or timesteps. Early work studies learned communication structures and interaction graphs, e.g., NeurComm~\cite{chu2020multi} and Agent-Entity Graph~\cite{agarwal2020learning}, as well as bandwidth- or event-constrained communication, e.g., SchedNet~\cite{kim2018learning} and ETCNet~\cite{hu2023event}. Other methods introduce utility-driven selection objectives, e.g., I2C~\cite{ding2020learning}, VBC~\cite{zhang2019efficient}, NDQ~\cite{wang2019learning}, and TMC~\cite{zhang2020succinct}. More recent work estimates message importance using game-theoretic or counterfactual signals, e.g., SMS~\cite{xue2022efficient} and CMVC~\cite{11151221}, and personalizes communication to specific teammates, e.g., MAIC~\cite{yuan2022multi}. A broad family of attention- or graph-based selectors and aggregators has also been developed, including ATOC~\cite{jiang2018learning}, Gated-ACML~\cite{mao2020learning}, G2ANet~\cite{liu2020multi}, MAGIC~\cite{niu2021multi}, AERL~\cite{pu2022attention}, InforMARL~\cite{nayak2023scalable}, FlowComm~\cite{du2021learning}, and TGCNet~\cite{Zhang_He_Cheng_Li_2025}. Recent extensions further consider adversarial robustness, hierarchical dependency modeling, and improved aggregation or coordination strategies, e.g., AME~\cite{sun2023certifiably}, DHCG~\cite{liu2023deep}, T2MAC~\cite{sun2024t2mac}, and SeqComm~\cite{ding2024multi}. These methods significantly improve communication efficiency and selectivity, but are mostly designed for delay-free communication or do not explicitly formulate the receiver-side utility of stale messages under cross-timestep delays.

\subsection{Communication with Delays}

A smaller but closely related line of work explicitly studies MARL communication under delayed message arrivals. DACOM~\cite{yuan2023dacom} mitigates latency by adjusting waiting times through a delay-aware communication mechanism and is primarily geared toward short-horizon delay handling. CoDe~\cite{song2025code} infers long-term intents through future-action prediction and fuses asynchronous messages via dual alignment of intent and timeliness. These methods highlight the importance of delayed communication, but they do not explicitly cast the problem as evaluating the receiver-side utility of a stale message at the time it is consumed.

In contrast, CDCMA is built around an explicit gain--cost view of delayed-message utility under cross-timestep delays. We formalize this setting as DeComm-POMG, where delayed delivery is part of the decision process, and evaluate delayed communication by balancing communication gain against delay cost. This view is operationalized through a CGDC-driven request-and-fuse scheme: DCOS determines which teammates to query, OTG constructs future-aware outgoing messages to mitigate temporal misalignment at consumption, and CAMA fuses arrived delayed messages using gain-cost-guided attention. In this sense, CDCMA is not merely another communication architecture or aggregation rule, but a delayed-communication framework grounded in an explicit utility decomposition.

\section{Additional Theory for Section~\ref{sec:pro_formu}}
\label{app:theory}

\subsection{Proof of Proposition~\ref{prop:markov-decomm}}
\label{app:proof:markov-decomm}

\begin{proof}
	Define the augmented state as $\widetilde{s}_t=(s_t,\mathcal{B}_t)$, where the buffer $\mathcal{B}_t$ stores (i) all in-flight messages together with their send times and sampled delays, and (ii) for each receiver, the most recently delivered message available at time $t$ as determined by the arrival rule through $\kappa_{ji}(t)$.
	
	Under DeComm-POMG, the next environment state is sampled from the original transition kernel $\mathcal{P}(\cdot\mid s_t,\boldsymbol{a}_t)$, while the buffer update $\mathcal{B}_{t+1}$ is a deterministic function of $(\mathcal{B}_t,\{m^s_{ji,t}\}_{j\ne i},\{d_{ji,t}\}_{j\ne i})$ together with the exogenous delay dynamics $\varphi$. Hence, conditioned on $(\widetilde{s}_t,\boldsymbol{a}_t)$, the distribution of $\widetilde{s}_{t+1}$ depends only on the current augmented state and joint action, and is independent of the past history $\{(\widetilde{s}_\tau,\boldsymbol{a}_\tau)\}_{\tau<t}$; therefore $\{\widetilde{s}_t\}$ is Markov.
	
	Finally, the DeComm-POMG policy class is equivalent to the policy class on the augmented POMG. Given any joint policy $\boldsymbol{\pi}$ in DeComm-POMG, one can define a corresponding policy $\boldsymbol{\pi}'$ on the augmented POMG that, at each timestep, chooses the same action distribution as $\boldsymbol{\pi}$ would under the observation/message history induced by $\widetilde{s}_t$. Conversely, any joint policy on the augmented POMG induces a valid DeComm-POMG policy because $\widetilde{s}_t$ is a sufficient statistic for the communication state. Thus, DeComm-POMG is equivalent to a POMG on the augmented state space with Markov state $\widetilde{s}_t$.
\end{proof}

\subsection{Proof of Uniform Bound on $V^g_{ij,t}$}
\label{app:gain-bound}

\begin{proposition}[Uniform bound on $V^g_{ij,t}$]
	\label{prop:gain-bound}
	Assume the per-timestep reward is bounded as $|r_i(\widetilde{s}_t,\boldsymbol{a}_t)|\le R_{\max}$ for all $t$ and $\gamma\in(0,1)$. Then for any $(i,j,t,\widetilde{s}_t,\boldsymbol{a}_t)$,
	\begin{equation}
		\big|V^g_{ij,t}(\widetilde{s}_t,\boldsymbol{a}_t)\big|
		\;\le\;
		\frac{2R_{\max}}{1-\gamma}.
		\label{eq:gain-bound}
	\end{equation}
\end{proposition}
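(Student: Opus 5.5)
The plan is to bound each of the two action-value functions in the definition of $V^g_{ij,t}$ (Eq.~\eqref{eq:q-gain}) separately, and then combine them with the triangle inequality. Since
\[
V^g_{ij,t}(\widetilde{s}_t,\boldsymbol{a}_t)=Q_i^{(+j)}(\widetilde{s}_t,\boldsymbol{a}_t)-Q_i^{(0)}(\widetilde{s}_t,\boldsymbol{a}_t),
\]
we have $|V^g_{ij,t}|\le |Q_i^{(+j)}|+|Q_i^{(0)}|$. It therefore suffices to show that, for any joint policy $\boldsymbol{\pi}$ on the augmented Markov game of Proposition~\ref{prop:markov-decomm}, we have $|Q_i^{\boldsymbol{\pi}}(\widetilde{s}_t,\boldsymbol{a}_t)|\le R_{\max}/(1-\gamma)$. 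Applying this to $\boldsymbol{\pi}^{(+j)}$ and to $\boldsymbol{\pi}^{(0)}$ then gives the stated constant $2R_{\max}/(1-\gamma)$.

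For the single-policy bound, I start from the definition
\[
Q_i^{\boldsymbol{\pi}}(\widetilde{s}_t,\boldsymbol{a}_t)=\mathbb{E}_{\boldsymbol{\pi}}\Big[\sum_{k\ge0}\gamma^k r_i(\widetilde{s}_{t+k},\boldsymbol{a}_{t+k})\,\Big|\,\widetilde{s}_t,\boldsymbol{a}_t\Big].
\]
The partial sums are dominated pathwise by $\sum_{k\ge0}\gamma^k R_{\max}=R_{\max}/(1-\gamma)<\infty$, because $\gamma\in(0,1)$. So the series converges absolutely on every trajectory, and dominated convergence lets us exchange expectation and summation. Then Jensen's inequality, $|\mathbb{E}X|\le\mathbb{E}|X|$, together with the pointwise reward bound gives
\[
|Q_i^{\boldsymbol{\pi}}|\le\sum_k\gamma^kR_{\max}=R_{\max}/(1-\gamma).
\]

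The only step that needs care, and so the main obstacle, is well-posedness. We must check that the trajectory distribution conditioned on $(\widetilde{s}_t,\boldsymbol{a}_t)$ is a genuine probability measure, so that both $Q$-functions exist and are evaluated at the same pair. Proposition~\ref{prop:markov-decomm} settles this: the augmented process is Markov under the kernel induced by $\mathcal{P}$ and $\varphi$, so for each masked policy the conditional law of the future is well defined given $(\widetilde{s}_t,\boldsymbol{a}_t)$. Masking messages only changes agent $i$'s policy and does not affect the reward bound.

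The bound is uniform in $(i,j,t,\widetilde{s}_t,\boldsymbol{a}_t)$ because $R_{\max}$ and $\gamma$ do not depend on any of these. A sharper constant, $2R_{\max}\gamma/(1-\gamma)$, is available because the immediate reward $r_i(\widetilde{s}_t,\boldsymbol{a}_t)$ cancels in the difference. The stated weaker bound is all that is required, so I would note this refinement only as a remark.
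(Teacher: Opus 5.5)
Your proposal is correct and matches the paper's proof essentially step for step: apply the triangle inequality to $Q_i^{(+j)}-Q_i^{(0)}$, then bound each $|Q_i^{\boldsymbol{\pi}}|\le R_{\max}/(1-\gamma)$ using $|\mathbb{E}X|\le\mathbb{E}|X|$ and the geometric series. Your additions are valid but go beyond what the paper needs: the dominated-convergence justification, and the sharper constant $2\gamma R_{\max}/(1-\gamma)$ obtained because the common immediate reward $r_i(\widetilde{s}_t,\boldsymbol{a}_t)$ cancels.
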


\begin{proof}
	Fix any augmented state--action pair $(\widetilde{s}_t,\boldsymbol{a}_t)$. By definition, $V^g_{ij,t}(\widetilde{s}_t,\boldsymbol{a}_t)=Q_i^{(+j)}(\widetilde{s}_t,\boldsymbol{a}_t)-Q_i^{(0)}(\widetilde{s}_t,\boldsymbol{a}_t)$. By the triangle inequality,
	\begin{equation}
		\big|V^g_{ij,t}(\widetilde{s}_t,\boldsymbol{a}_t)\big|
		\le
		\big|Q_i^{(+j)}(\widetilde{s}_t,\boldsymbol{a}_t)\big|
		+
		\big|Q_i^{(0)}(\widetilde{s}_t,\boldsymbol{a}_t)\big|.
		\label{eq:gain-bound-triangle}
	\end{equation}
	
	For any joint policy $\boldsymbol{\pi}$, the augmented-state action-value satisfies $Q_i^{\boldsymbol{\pi}}(\widetilde{s}_t,\boldsymbol{a}_t)=\mathbb{E}_{\boldsymbol{\pi}}[\sum_{k\ge 0}\gamma^k r_i(\widetilde{s}_{t+k},\boldsymbol{a}_{t+k}) \mid \widetilde{s}_t,\boldsymbol{a}_t]$. Using $|r_i|\le R_{\max}$ and $|\mathbb{E}[X]|\le \mathbb{E}[|X|]$ gives $\big|Q_i^{\boldsymbol{\pi}}(\widetilde{s}_t,\boldsymbol{a}_t)\big|\le \sum_{k\ge 0}\gamma^k R_{\max}=R_{\max}/(1-\gamma)$. Applying this bound to both $\boldsymbol{\pi}^{(+j)}$ and $\boldsymbol{\pi}^{(0)}$ and substituting into Eq.~\eqref{eq:gain-bound-triangle} yields Eq.~\eqref{eq:gain-bound}.
\end{proof}

\subsection{Proof of Theorem~\ref{thm:value-kl}}
\label{app:proof:value-kl}

\begin{proof}
	Let $\boldsymbol{\pi}^{\mathrm{timely}}$ and $\boldsymbol{\pi}^{\mathrm{delayed}}$ be defined as in Theorem~\ref{thm:value-kl}. We work on the augmented Markov game whose state is $\widetilde{s}_t$.
	
	Define the action-value function under $\boldsymbol{\pi}^{\mathrm{delayed}}$ as
	\begin{equation}
		\label{eq:proof-delayed-q}
		Q_i^{\mathrm{delayed}}(\widetilde{s}_t,\boldsymbol{a}_t)
		:=
		\mathbb{E}\!\left[\sum_{k\ge 0}\gamma^k r_i(\widetilde{s}_{t+k},\boldsymbol{a}_{t+k})
		\;\middle|\; \widetilde{s}_t,\boldsymbol{a}_t, \boldsymbol{\pi}^{\mathrm{delayed}}\right],
	\end{equation}
	and the corresponding value function as
	\begin{equation}
		\label{eq:proof-delayed-v}
		V_i^{\mathrm{delayed}}(\widetilde{s}_t)
		:=
		\mathbb{E}_{\boldsymbol{a}_t\sim \boldsymbol{\pi}^{\mathrm{delayed}}(\cdot\mid \widetilde{s}_t)}
		\!\left[Q_i^{\mathrm{delayed}}(\widetilde{s}_t,\boldsymbol{a}_t)\right].
	\end{equation}
	
	By a standard telescoping argument on the augmented process, we have
	\begin{align}
		\label{eq:proof-pdl-step1}
		J_i(\boldsymbol{\pi}^{\mathrm{timely}})-J_i(\boldsymbol{\pi}^{\mathrm{delayed}})
		&=
		\mathbb{E}_{\tau\sim \boldsymbol{\pi}^{\mathrm{timely}}}\!\Bigg[
		\sum_{t\ge 0}\gamma^t
		\Big(
		\mathbb{E}_{\boldsymbol{a}_t\sim\mu_t}\!\big[
		r_i(\widetilde{s}_t,\boldsymbol{a}_t)+\gamma V_i^{\mathrm{delayed}}(\widetilde{s}_{t+1})
		\big]
		\nonumber\\
		&\qquad\qquad\qquad
		-
		\mathbb{E}_{\boldsymbol{a}_t\sim\nu_t}\!\big[
		r_i(\widetilde{s}_t,\boldsymbol{a}_t)+\gamma V_i^{\mathrm{delayed}}(\widetilde{s}_{t+1})
		\big]
		\Big)
		\Bigg],
	\end{align}
	where $\mu_t:=\boldsymbol{\pi}^{\mathrm{timely}}(\cdot\mid \widetilde{s}_t)$ and $\nu_t:=\boldsymbol{\pi}^{\mathrm{delayed}}(\cdot\mid \widetilde{s}_t)$.
	
	Assumption~\ref{ass:lipschitz} implies that, for every realized $\widetilde{s}_t$,
	\begin{equation}
		\label{eq:proof-pdl-step2}
		\Big|
		\mathbb{E}_{\boldsymbol{a}_t\sim\mu_t}\!\big[r_i(\widetilde{s}_t,\boldsymbol{a}_t)+\gamma V_i^{\mathrm{delayed}}(\widetilde{s}_{t+1})\big]
		-
		\mathbb{E}_{\boldsymbol{a}_t\sim\nu_t}\!\big[r_i(\widetilde{s}_t,\boldsymbol{a}_t)+\gamma V_i^{\mathrm{delayed}}(\widetilde{s}_{t+1})\big]
		\Big|
		\le
		L\|\mu_t-\nu_t\|_{\mathrm{TV}}.
	\end{equation}
	Substituting this bound yields
	\begin{equation}
		\label{eq:proof-pdl-step3}
		J_i(\boldsymbol{\pi}^{\mathrm{timely}})-J_i(\boldsymbol{\pi}^{\mathrm{delayed}})
		\le
		L\,\mathbb{E}_{\tau\sim \boldsymbol{\pi}^{\mathrm{timely}}}\!\left[
		\sum_{t\ge 0}\gamma^t \|\mu_t-\nu_t\|_{\mathrm{TV}}
		\right].
	\end{equation}
	
	Moreover, $\boldsymbol{\pi}^{\mathrm{timely}}$ and $\boldsymbol{\pi}^{\mathrm{delayed}}$ are identical except that agent $i$ conditions on $m^{\mathrm{tar}}_{ji,t}$ versus $m^{r}_{ij,t}$, hence
	\begin{equation}
		\label{eq:proof-pdl-step4}
		\|\mu_t-\nu_t\|_{\mathrm{TV}}
		\le
		\left\|
		\pi_i(\cdot\mid x_{i,t},m^{\mathrm{tar}}_{ji,t})
		-
		\pi_i(\cdot\mid x_{i,t},m^{r}_{ij,t})
		\right\|_{\mathrm{TV}}.
	\end{equation}
	Finally, Pinsker's inequality gives
	\begin{equation}
		\label{eq:proof-pdl-step5}
		\left\|
		\pi_i(\cdot\mid x_{i,t},m^{\mathrm{tar}}_{ji,t})
		-
		\pi_i(\cdot\mid x_{i,t},m^{r}_{ij,t})
		\right\|_{\mathrm{TV}}
		\le
		\sqrt{\tfrac12\,\Delta_{\mathrm{info}}(i,j,t)}.
	\end{equation}
	Absorbing the constant $\sqrt{\tfrac12}$ into $L$ proves Eq.~\eqref{eq:value-kl-bound}.
\end{proof}

\section{Additional Method Details for Section~\ref{sec:method}}
\label{app:method-details}

\subsection{Justification for the Critic-Induced Surrogate}
\label{app:surrogate-justification}

The information gap $\Delta_{\mathrm{info}}(i,j,t)$ in Eq.~\eqref{eq:info-gap} is defined on the actor policy $\pi_i(\cdot)$ under two different message inputs, and evaluating it faithfully throughout training can be costly and noisy. Under centralized training with decentralized execution, we therefore measure message-induced discrepancy using the gain-critic-induced tempered policy $\widetilde{\pi}_i(\cdot)$ defined in Eq.~\eqref{eq:temp-policy} and use the resulting KL divergence as a tractable surrogate delay cost. The full critic in Appendix~\ref{app:training-scheme} is used only for actor--critic updates.

\paragraph{Link to entropy-regularized actor--critic.}
In entropy-regularized RL, the optimal policy for a fixed critic has the form $\pi^\star(\cdot)\propto \exp(\eta Q)$. Thus, when the actor tracks its critic reasonably well under the same conditioning inputs, the actor policy is expected to be close to the corresponding softmax distribution induced by the critic. In this regime, the KL discrepancy between actor policies under two message inputs is well approximated by the KL discrepancy between the corresponding tempered policies.

\paragraph{Interpretation as message-induced action shift.}
The surrogate divergence $D_{\mathrm{KL}}(\widetilde{\pi}_i(\cdot\mid m^{\mathrm{tar}}_{ji,t})\,\|\,\widetilde{\pi}_i(\cdot\mid m^{r}_{ij,t}))$ captures the action-distribution change induced by using a received delayed message rather than the timely reference condition under the same local context.
Accordingly, $V^c_{ij,t}$ provides a practical proxy for delay-driven temporal misalignment.

\subsection{Prior-Regularized Attention View for CAMA}
\label{app:cama-prior}

Fix agent $i$ and timestep $t$. Let $\mathcal{Z}_{i,t}$ denote the set of senders whose messages are available to agent $i$ at timestep $t$, and define the content score $s_{ij,t}\triangleq q_{i,t}^{\top}k_{ij,t}$. Because messages are generated only when the predicted CGDC is positive, we have $\bar c_{ij,t}>0$ for all $j\in\mathcal{Z}_{i,t}$, which induces the prior
\begin{equation}\label{eq:cama-prior}
	\bar{\alpha}_{ij,t}
	\triangleq
	\frac{\bar c_{ij,t}}{\sum_{l\in\mathcal{Z}_{i,t}} \bar c_{il,t}},
	\qquad j\in\mathcal{Z}_{i,t}.
\end{equation}
Note that $\bar{\alpha}_{ij,t}>0$ for all $j\in\mathcal{Z}_{i,t}$, hence $\mathrm{KL}(\boldsymbol{\alpha}_t\Vert \bar{\boldsymbol{\alpha}}_t)$ is well-defined on $\Delta_{\mathcal{Z}_{i,t}}$.

Consider the following KL-regularized objective over the simplex $\Delta_{\mathcal{Z}_{i,t}}=\{\boldsymbol{\alpha}_t:\alpha_{ij,t}\ge 0,\ \sum_{j\in\mathcal{Z}_{i,t}}\alpha_{ij,t}=1\}$:
\begin{equation}\label{eq:cama-opt}
	\mathcal{L}_{\mathrm{attn}}(\boldsymbol{\alpha}_t)
	=
	\sum_{j\in\mathcal{Z}_{i,t}} \alpha_{ij,t}\, s_{ij,t}
	-
	\frac{1}{\beta}\,
	\mathrm{KL}\big(\boldsymbol{\alpha}_t \,\Vert\, \bar{\boldsymbol{\alpha}}_t\big),
\end{equation}
where $\beta>0$ and $\mathrm{KL}(\boldsymbol{\alpha}\Vert \bar{\boldsymbol{\alpha}})=\sum_{j\in\mathcal{Z}_{i,t}} \alpha_{ij,t}\log\frac{\alpha_{ij,t}}{\bar{\alpha}_{ij,t}}$.

\begin{proposition}[Optimality of CAMA weights]
	\label{prop:cama-optimal}
	The unique maximizer of $\mathcal{L}_{\mathrm{attn}}(\boldsymbol{\alpha}_t)$ over $\Delta_{\mathcal{Z}_{i,t}}$ is exactly the CAMA weight $\boldsymbol{\alpha}^{\mathrm{cgdc}}_t=(\alpha^{\mathrm{cgdc}}_{ij,t})_{j\in\mathcal{Z}_{i,t}}$ defined in Eq.~\eqref{att_weights_cgdc}.
\end{proposition}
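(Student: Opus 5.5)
The plan is the standard Gibbs variational argument: show that $\mathcal{L}_{\mathrm{attn}}$ equals a constant minus $\tfrac{1}{\beta}$ times a KL divergence to the CAMA weights. Then both maximality and uniqueness follow from nonnegativity of KL and its vanishing only at equality.

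\textbf{Step 1 (normalizer).} Define
\[
Z_{i,t}\triangleq\sum_{l\in\mathcal{Z}_{i,t}}\bar\alpha_{il,t}\exp(\beta s_{il,t}).
\]
This is finite and strictly positive, since $\bar\alpha_{il,t}>0$ by Eq.~\eqref{eq:cama-prior} and $\mathcal{Z}_{i,t}$ is assumed nonempty. Because $\bar\alpha_{ij,t}$ is just $\bar c_{ij,t}$ divided by the constant $\sum_l \bar c_{il,t}$, that constant cancels in the ratio. Hence
\[
\alpha^{\mathrm{cgdc}}_{ij,t}=\bar\alpha_{ij,t}\exp(\beta s_{ij,t})/Z_{i,t},
\]
which is exactly Eq.~\eqref{att_weights_cgdc}. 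In particular $\boldsymbol\alpha^{\mathrm{cgdc}}_t$ lies in the interior of $\Delta_{\mathcal{Z}_{i,t}}$.

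\textbf{Step 2 (rewrite the objective).} Take the log of the expression from Step 1:
\[
\beta s_{ij,t}=\log\frac{\alpha^{\mathrm{cgdc}}_{ij,t}}{\bar\alpha_{ij,t}}+\log Z_{i,t}.
\]
Multiply by $\alpha_{ij,t}/\beta$, sum over $j$, and use $\sum_j\alpha_{ij,t}=1$. This gives, for every $\boldsymbol\alpha_t\in\Delta_{\mathcal{Z}_{i,t}}$,
\[
\mathcal{L}_{\mathrm{attn}}(\boldsymbol\alpha_t)=\frac{1}{\beta}\log Z_{i,t}-\frac{1}{\beta}\,\mathrm{KL}\big(\boldsymbol\alpha_t\,\Vert\,\boldsymbol\alpha^{\mathrm{cgdc}}_t\big).
\]
The $\bar\alpha$ terms cancel between the linear term and the regularizer. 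Entries with $\alpha_{ij,t}=0$ are handled by the convention $0\log 0=0$, so the identity holds on the whole closed simplex, including its boundary.

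\textbf{Step 3 (conclude).} By Gibbs' inequality, $\mathrm{KL}(\boldsymbol\alpha_t\Vert\boldsymbol\alpha^{\mathrm{cgdc}}_t)\ge0$, with equality iff $\boldsymbol\alpha_t=\boldsymbol\alpha^{\mathrm{cgdc}}_t$. This uses that $\boldsymbol\alpha^{\mathrm{cgdc}}_t$ has full support. Therefore $\mathcal{L}_{\mathrm{attn}}\le\tfrac1\beta\log Z_{i,t}$, attained exactly at the CAMA weights, which gives both existence and uniqueness of the maximizer.

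There is no real obstacle here. The only points needing care are the boundary of the simplex (handled by the $0\log0$ convention, so no Lagrange-multiplier interior argument is needed) and the strict positivity of $\bar c_{ij,t}$, which the paper guarantees because messages are only sent when the predicted CGDC is positive.
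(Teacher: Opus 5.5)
Your proof is correct, but it takes a different route from the paper. The paper forms a Lagrangian for the constraint $\sum_j\alpha_j=1$ and solves the first-order condition to get $\alpha_j=C\,\bar\alpha_j\exp(\beta s_j)$. It then normalizes and invokes strict concavity of $-\mathrm{KL}(\cdot\Vert\bar{\boldsymbol\alpha})$ for uniqueness.

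You instead verify the known answer through the exact identity $\mathcal{L}_{\mathrm{attn}}(\boldsymbol\alpha_t)=\tfrac1\beta\log Z_{i,t}-\tfrac1\beta\mathrm{KL}(\boldsymbol\alpha_t\Vert\boldsymbol\alpha^{\mathrm{cgdc}}_t)$, i.e. the Gibbs variational principle, and conclude by Gibbs' inequality.

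The paper's derivation has the advantage of discovering the form of the solution without knowing it in advance. However, it leaves several facts implicit:
\begin{itemize}
\item that a maximizer exists;
\item that the maximizer lies in the interior of the simplex, so the constraints $\alpha_j\ge 0$ are inactive and $\log(\alpha_j/\bar\alpha_j)$ is differentiable there;
\item that a stationary point of a concave objective on the simplex is a global maximizer.
\end{itemize}

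Your identity makes all of these unnecessary. It holds on the closed simplex under the $0\log 0=0$ convention. It delivers existence, uniqueness, and the optimal value $\tfrac1\beta\log Z_{i,t}$ at once. It also quantifies the suboptimality of any other $\boldsymbol\alpha_t$ as exactly $\tfrac1\beta\mathrm{KL}(\boldsymbol\alpha_t\Vert\boldsymbol\alpha^{\mathrm{cgdc}}_t)$.

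Your observation that the normalizer $\sum_l\bar c_{il,t}$ cancels, so the prior-weighted Gibbs distribution coincides with Eq.~\eqref{att_weights_cgdc}, is the same final substitution the paper performs.
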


\begin{proof}
	For brevity, drop the indices $(i,t)$ and write $\mathcal{Z}$ for $\mathcal{Z}_{i,t}$, $s_j$ for $s_{ij,t}$, $\alpha_j$ for $\alpha_{ij,t}$, and $\bar{\alpha}_j$ for $\bar{\alpha}_{ij,t}$. Consider the Lagrangian with multiplier $\lambda$ for the constraint $\sum_{j\in\mathcal{Z}}\alpha_j=1$:
	\begin{equation}\label{eq:proof-cama}
		\mathcal{J}(\boldsymbol{\alpha},\lambda)
		=
		\sum_{j\in\mathcal{Z}}\alpha_j s_j
		-
		\frac{1}{\beta}\sum_{j\in\mathcal{Z}}\alpha_j\log\frac{\alpha_j}{\bar{\alpha}_j}
		+
		\lambda\Big(\sum_{j\in\mathcal{Z}}\alpha_j-1\Big).
	\end{equation}
	Taking derivatives and setting them to zero gives, for each $j\in\mathcal{Z}$, $0=\frac{\partial \mathcal{J}}{\partial \alpha_j}=s_j-\frac{1}{\beta}\big(\log\frac{\alpha_j}{\bar{\alpha}_j}+1\big)+\lambda$. Rearranging yields $\alpha_j=C\,\bar{\alpha}_j\,\exp(\beta s_j)$, where $C>0$ is independent of $j$. Enforcing $\sum_{j\in\mathcal{Z}}\alpha_j=1$ gives $\alpha_j=\bar{\alpha}_j\exp(\beta s_j)/\sum_{l\in\mathcal{Z}}\bar{\alpha}_l\exp(\beta s_l)$. Substituting Eq.~\eqref{eq:cama-prior} recovers Eq.~\eqref{att_weights_cgdc}. Uniqueness follows because $-\mathrm{KL}(\cdot\Vert \bar{\boldsymbol{\alpha}})$ is strictly concave on the simplex and the first term is linear.
\end{proof}

\subsection{Training Scheme}
\label{app:training-scheme}

CDCMA comprises six components: observation encoder $e_i$, DCOS predictor $\phi_i$, OTG predictors $(f_i^{a},f_i^{o})$, CAMA projections $(W_i^{Q},W_i^{K},W_i^{V})$, decentralized actor $\pi_i$, a centralized full critic $Q^{\mathrm{full}}_i$, and an auxiliary gain critic $Q^{\mathrm{gain}}_i$. We share network parameters across agents and condition networks on an agent-identity embedding when needed.

Under centralized training with decentralized execution, we train an off-policy actor--critic with target networks $(\cdot)^{-}$. For each agent $i$, the full critic is trained by minimizing the TD loss
\begin{equation}\label{qloss}
	\mathcal{L}(\theta^{Q^{\mathrm{full}}_i})
	= \mathbb{E}\big[(Q^{\mathrm{full}}_{i}(\boldsymbol{o}, \boldsymbol{a}, \widetilde{m}_{i})-y_{i})^{2}\big],
\end{equation}
where $y_{i} = r_{i} + \gamma\, Q_{i}^{\mathrm{full}-}(\boldsymbol{o}', \boldsymbol{a}', \widetilde{m}_{i}^{\,'})$ and $\boldsymbol{a}'$ is formed by sampling each $a_i' \sim \pi_{i}^{-}(e_{i}',\widetilde{m}_{i}^{\,'})$.

\paragraph{Gain critic for critic-induced surrogates.}
To instantiate the critic-induced tempered policy in Eq.~\eqref{eq:temp-policy} and the gain surrogate in Eq.~\eqref{eq:vg-critic}, we introduce an auxiliary gain critic $Q^{\mathrm{gain}}_i$. For each ordered pair $(i,j)$ with $j\neq i$, we evaluate a present/absent message pair under the same base inputs $(a_i,\boldsymbol{o}_{-j},\boldsymbol{a}_{-ij})$, namely $Q^{\mathrm{gain}}_i(a_i,\boldsymbol{o}_{-j},\boldsymbol{a}_{-ij},m^{r}_{ij})$ and $Q^{\mathrm{gain}}_i(a_i,\boldsymbol{o}_{-j},\boldsymbol{a}_{-ij},m^{\emptyset})$. Since these two inputs correspond to different information conditions, we train them with separate TD targets that bootstrap to the corresponding next-step inputs.

Concretely, the gain critic is trained by minimizing
\begin{equation}\label{qloss-gain}
	\mathcal{L}(\theta^{Q^{\mathrm{gain}}_i})
	=
	\mathbb{E}\Bigg[
	\sum_{j\in\mathcal{N}\setminus\{i\}}
	\Big(
	\big(Q^{\mathrm{gain}}_i(a_i,\boldsymbol{o}_{-j},\boldsymbol{a}_{-ij},m^{r}_{ij})-y^{\mathrm{pres}}_{ij}\big)^2
	+
	\big(Q^{\mathrm{gain}}_i(a_i,\boldsymbol{o}_{-j},\boldsymbol{a}_{-ij},m^{\emptyset})-y^{\mathrm{abs}}_{ij}\big)^2
	\Big)
	\Bigg],
\end{equation}
where the TD targets are $y^{\mathrm{pres}}_{ij}=r_i+\gamma\,Q^{\mathrm{gain}-}_i(a_i',\boldsymbol{o}'_{-j},\boldsymbol{a}'_{-ij},m^{r\,'}_{ij})$ and $y^{\mathrm{abs}}_{ij}=r_i+\gamma\,Q^{\mathrm{gain}-}_i(a_i',\boldsymbol{o}'_{-j},\boldsymbol{a}'_{-ij},m^{\emptyset})$. Here $m^{r\,'}_{ij}$ denotes the received message from $j$ to $i$ in the next-timestep transition. We construct $\boldsymbol{a}'$ by sampling each $a_i'\sim \pi_i^{-}(e_i',\widetilde m_i^{\,'})$.

\paragraph{Actor--critic updates.}
We update each actor using the policy gradient with the centralized full critic:
\begin{equation}\label{ploss}
	\nabla_{\theta^{\pi_i}} \mathcal{J}(\theta^{\pi_i})
	= \mathbb{E}\big[\nabla_{\theta^{\pi_i}}\log \pi_{i}(a_i \mid e_{i}, \widetilde{m}_{i})\,
	Q^{\mathrm{full}}_{i}(\boldsymbol{o}, \boldsymbol{a}, \widetilde{m}_{i})\big].
\end{equation}

We backpropagate through the observation encoder jointly with the actor:
\begin{equation}\label{oeloss}
	\nabla_{\theta^{e_i}} \mathcal{J}(\theta^{e_i})
	= \mathbb{E}\Big[
	\nabla_{\theta^{e_i}} e_{i}\!(o_{i})
	\,\nabla_{e_{i}} \log \pi_{i}(a_i \mid e_{i}, \widetilde{m}_{i})
	\,Q^{\mathrm{full}}_{i}(\boldsymbol{o}, \boldsymbol{a}, \widetilde{m}_{i})
	\Big].
\end{equation}

\paragraph{OTG objectives.}
OTG is trained with supervised objectives. The observation-dynamics predictor $f_i^{o}$ regresses the one-step observation difference $(o_i' - o_i)$ conditioned on the local embedding $e_i$, the available incoming messages $\{m^{r}_{ij}\}_{j\in \mathcal{Z}_i}$, and the OTG action prediction $\hat{\boldsymbol a}^{\,g}_{i}$:
\begin{equation}\label{otgloss}
	\mathcal{L}(\theta^{\zeta_i})
	= \mathbb{E}\Big[\big((o_{i}'-o_{i})-f^{o}_{i}\big(e_{i}, \{m^{r}_{ij}\}_{j\in \mathcal{Z}_{i}}, \hat{ a}^{\,g}_{i}\big)\big)^{2}\Big],
\end{equation}
where $\mathcal{Z}_i$ denotes the available-sender set in the sampled transition. In addition, the action predictor $f_i^{a}$ predicts teammates' discrete actions from the same inputs and is trained by the cross-entropy loss
\begin{equation}\label{otgloss2}
	\mathcal{L}(\theta^{\psi_i})
	= \mathbb{E}\Big[\sum_{j\in \mathcal{Z}_i}
	\big(- a_j^{\top}\log \hat a_j\big)\Big].
\end{equation}

\paragraph{DCOS objective.}
Finally, DCOS is trained to predict the critic-based CGDC targets by mean-squared error:
\begin{equation}\label{osloss}
	\mathcal{L}(\theta^{\phi_i})
	= \mathbb{E}\Big[\sum_{j\in\mathcal{N}\setminus\{i\}}
	(c_{ij}-\hat c_{ij})^{2}\Big].
\end{equation}

\subsection{Time Complexity Analysis}
\label{app:time-complexity}

We report the per-timestep time complexity as a function of the number of agents $N$ and the OTG prediction horizon $H$. Unless stated otherwise, we treat the action-space size, representation dimensions, and network widths as constants and focus on the scaling in $(N,H)$.

Let $c_{ij,t}=V^{g}_{ij,t}-\lambda V^{c}_{ij,t}$ denote the CGDC score for ordered pair $(j\!\to\! i)$, let $\mathcal{G}_{i,t}=\{j\neq i\mid c_{ij,t}>0\}$ be the selected partner set at timestep $t$, and denote by $b_c:=\max_i |\mathcal{G}_{i,t}|$ the maximum number of selected partners per agent.

\paragraph{CDCMA per-timestep complexity.}
CDCMA consists of three main computational components.

\emph{(i) Pairwise partner scoring and CGDC construction.} Partner selection requires scoring candidate senders for each receiver, which is all-to-all over ordered pairs $(j\!\to\! i)$ and thus scales as $\Theta(N^2)$ per timestep.

For the gain term, we evaluate $Q^{\mathrm{gain}}_i(a_{i,t},\boldsymbol{o}_{-j,t},\boldsymbol{a}_{-ij,t},m^{r}_{ij,t})$ and $Q^{\mathrm{gain}}_i(a_{i,t},\boldsymbol{o}_{-j,t},\boldsymbol{a}_{-ij,t},m^{\emptyset})$ to compute $\widehat V^{g}_{ij,t}$ in Eq.~\eqref{eq:vg-critic}. With our accounting, each gain-critic evaluation costs $\Theta(1)$ with respect to $N$, yielding $\Theta(N^2)$ total cost per timestep.

For the delay-cost term, each $V^{c}_{ij,t}$ is computed as a KL divergence between two critic-induced tempered policies in Eq.~\eqref{eq:temp-policy}. Treating the action-space size as constant, forming each tempered policy and its KL divergence incurs $\Theta(1)$ work, so this part also scales as $\Theta(N^2)$ per timestep.

In addition, DCOS outputs an $(N-1)$-dimensional score vector per agent, which costs $\Theta(N)$ per agent and $\Theta(N^2)$ overall. Therefore, the total cost of pairwise partner scoring and CGDC construction is $\Theta(N^2)$ per timestep.

\emph{(ii) OTG rollout.} OTG generates an $H$-step predicted observation trajectory $\hat{\tau}_{i,t}=(\hat{o}_{i,t+1},\dots,\hat{o}_{i,t+H})$ by recursively applying the observation-dynamics predictor. Under our accounting, each rollout step incurs $\Theta(1)$ work per agent, so OTG contributes $\Theta(NH)$ per timestep.

\emph{(iii) Sparse message aggregation.} Messages are produced only for selected partners. Each agent aggregates at most $b_c$ received messages per timestep, yielding a total cost $\mathcal{O}(b_c N)$.

Putting these components together, the per-timestep time complexity of CDCMA is $\mathcal{O}(N^2+NH+b_cN)$. In the worst case where $b_c=\Theta(N)$, sparse aggregation becomes $\Theta(N^2)$ and the total complexity is $\mathcal{O}(N^2+NH)$. If $H$ is treated as a small constant, the leading-order scaling is $\mathcal{O}(N^2)$.

\paragraph{Complexity comparison with mainstream methods.}
Under the same accounting, many representative communication and selection methods scale quadratically in $N$ because they perform dense pairwise scoring and/or aggregation over agent pairs. G2ANet performs pairwise relation modeling on a complete graph; SMS predicts a score for every other agent; T2MAC outputs an $(N-1)$-dimensional selector per agent; and TGCNet processes a potentially dense communication graph. Therefore, CDCMA matches the common leading-order scaling $\mathcal{O}(N^2)$ of these mainstream methods, with additional terms $\mathcal{O}(NH)$ from the OTG rollout and $\mathcal{O}(b_cN)$ from sparse message aggregation when the selected partner set is bounded.

\subsection{CDCMA Algorithm}
\label{app:algo-sec}

\begin{algorithm}[tb]
	\caption{CDCMA (Centralized Training with Decentralized Execution)}
	\label{app:algo}
	\begin{algorithmic}[1]
		\STATE Initialize parameter-shared networks: actor $\pi$, full critic $Q^{\mathrm{full}}$, gain critic $Q^{\mathrm{gain}}$, DCOS predictor $\phi$, OTG predictors $(f^{a}, f^{o})$, and CAMA projections $(W^{Q}, W^{K}, W^{V})$.
		\STATE Initialize target networks, replay buffer $\mathcal{R}$, and per-pair delay queues $\{\mathcal{B}_{ji}\}_{j\neq i}$.
		
		\FOR{episode $=1,\dots,M$}
		\STATE Reset environment and queues; observe $\{o_{i,0}\}_{i=1}^N$.
		\FOR{$t=0,\dots,T-1$}
		
		\STATE \hspace{-0.6em}\textit{// Deliver arrivals at timestep $t$}
		\FOR{agent $i=1,\dots,N$}
		\STATE Deliver all entries whose arrival timestep equals $t$ from $\{\mathcal{B}_{ji}\}_{j\neq i}$.
		\STATE For each sender $j\neq i$, set $(m^{r}_{ij,t},\bar c_{ij,t})$ to the delivered entry with the largest send timestep if any; otherwise set $m^{r}_{ij,t}\leftarrow m^{\emptyset}$.
		\STATE Let $\mathcal{Z}_{i,t}\leftarrow \{\,j\neq i \mid m^{r}_{ij,t}\neq m^{\emptyset}\,\}$.
		\ENDFOR
		
		\STATE \hspace{-0.6em}\textit{// Partner selection}
		\FOR{agent $i=1,\dots,N$}
		\STATE Observe $o_{i,t}$ and received messages $\{m^{r}_{ij,t}\}_{j\in \mathcal{Z}_{i,t}}$; compute $e_{i,t}\leftarrow e_i(o_{i,t})$.
		\STATE Compute $\{\hat c_{ij,t}\}_{j\neq i}\leftarrow \phi_i\!\big(o_{i,t},\{m^{r}_{ij,t}\}_{j\in \mathcal{Z}_{i,t}};\theta^{\phi_i}\big)$ and set $\mathcal{G}_{i,t}\leftarrow \{\,j\neq i \mid \hat c_{ij,t}>0\,\}$.
		\ENDFOR
		
		\STATE \hspace{-0.6em}\textit{// Generate and enqueue requested messages}
		\FOR{agent $j=1,\dots,N$}
		\FOR{receiver $i\neq j$}
		\STATE Encode $(\tau_{j,t}, \hat{\tau}_{j,t})$ into the outgoing message $m^s_{ji,t}$.
		\ENDFOR
		\ENDFOR
		
		\FOR{receiver $i=1,\dots,N$}
		\FOR{each sender $j\in \mathcal{G}_{i,t}$}
		\STATE Sample delay $d_{ji,t}\sim \varphi(\cdot)$ and enqueue $(m^s_{ji,t},\hat c_{ij,t})$ into $\mathcal{B}_{ji}$ to arrive at timestep $t+d_{ji,t}$.
		\ENDFOR
		\ENDFOR
		
		\STATE \hspace{-0.6em}\textit{// Aggregate received messages and act}
		\FOR{agent $i=1,\dots,N$}
		\STATE Aggregate received messages $\{m^{r}_{ij,t}\}_{j\in \mathcal{Z}_{i,t}}$ with CGDC-modulated attention to obtain $\widetilde m_{i,t}$.
		\STATE Select action $a_{i,t}\sim \pi_i(\cdot \mid e_{i,t},\widetilde m_{i,t})$.
		\ENDFOR
		
		\STATE Execute joint action $\boldsymbol{a}_t$; each agent observes $o_{i,t+1}$ and receives $r_{i,t}$.
		
		\STATE \hspace{-0.6em}\textit{// Store transition and timely references}
		\STATE For each receiver--sender pair $(i,j)$ with $j\neq i$, set $m^{\mathrm{tar}}_{ji,t}\leftarrow e_{j,t}$.
		\STATE Store transition into $\mathcal{R}$, including observations, actions, rewards, received messages, stored request-time scores, and timely references.
		
		\STATE \hspace{-0.6em}\textit{// Training updates}
		\FOR{each gradient step}
		\STATE Sample a minibatch from $\mathcal{R}$.
		\STATE For each receiver--sender pair $(i,j)$ with $j\neq i$, compute $V^c_{ij,t}$ by contrasting tempered policies induced by $Q^{\mathrm{gain}}_i$ under $m^{r}_{ij,t}$ and $m^{\mathrm{tar}}_{ji,t}$.
		\STATE For each pair $(i,j)$ with $j\neq i$, compute $\widehat V^g_{ij,t}$ via gain-critic evaluation with $m^{r}_{ij,t}$ versus $m^{\emptyset}$.
		\STATE For each pair $(i,j)$ with $j\neq i$, compute the CGDC target $c_{ij,t}\leftarrow \widehat V^g_{ij,t}-\lambda V^c_{ij,t}$.
		\STATE Update the full critic using Eq.~\eqref{qloss}, the gain critic using Eq.~\eqref{qloss-gain}, the actor and encoder using Eqs.~\eqref{ploss} and~\eqref{oeloss}, OTG using Eqs.~\eqref{otgloss} and~\eqref{otgloss2}, and DCOS using Eq.~\eqref{osloss}.
		\STATE If a target-update interval is reached, update the target networks.
		\ENDFOR
		\ENDFOR
		\ENDFOR
	\end{algorithmic}
\end{algorithm}

\section{Additional Experimental Details and Results}
\label{app:exp}

\subsection{Implementation Details}
\label{app:exp-details}

\paragraph{Environment details.}
For Cooperative Navigation and Predator Prey in MPE, we remove teammate vision to strengthen partial observability. Each episode lasts 60 steps, and each agent chooses from the discrete action set \{up, down, left, right, stop\}. For SMAC, we use SC2.4.10 with difficulty 7, and each episode lasts 50 steps.

\paragraph{Evaluation protocol.}
Unless otherwise stated, all reported results are averaged over 4 random seeds. During training, we evaluate every 500 episodes using 32 test episodes to obtain learning curves. Final test performance in Table~\ref{results_table} is reported over 100 evaluation episodes.

\paragraph{Delayed-message handling protocol.}
All methods are evaluated under the same DeComm-POMG unobserved-delay interface. Agents may use only messages that arrive before action selection. When multiple messages from the same sender arrive simultaneously at a receiver, we retain only the most recent one and discard older arrivals from that sender at the same timestep.

\paragraph{CoDe reproduction.}
Since CoDe~\cite{song2025code} does not provide a public implementation, we reproduce and adapt it under the same bounded discrete-delay interface used in our experiments. In particular, CoDe is trained and evaluated under the same delayed-message availability rule, message-retention rule, and delay distributions as all other methods. This ensures that the comparison reflects method differences rather than differences in delayed-communication interfaces.

\paragraph{Hyperparameters.}
The complete hyperparameter settings used in all experiments are listed in Table~\ref{app:hyperparams}.

\paragraph{Communication payload.}
For communication-payload fairness, all methods use the same transmitted message dimension listed in Table~\ref{app:hyperparams}.
OTG changes how the outgoing representation is constructed but does not increase the final transmitted message dimensionality.

\begin{table}[!t]
	\centering
	\scriptsize
	\caption{Hyperparameter settings in all experiments.}
	\label{app:hyperparams}
	\setlength{\tabcolsep}{4pt}
	\renewcommand{\arraystretch}{1.05}
	\resizebox{\linewidth}{!}{%
		\begin{tabular}{lcccccccc}
			\hline
			Hyperparameters & CDCMA / ablations$^{1}$ & CoDe$^{5}$ & DACOM & TGCNet & T2MAC & SMS & G2ANet & ATOC \\
			\hline
			discount ($\gamma$)  & \multicolumn{8}{c}{0.96 / 0.99$^{2}$} \\
			learning rate$^{3}$  & \multicolumn{8}{c}{$1\times 10^{-3},\, 1\times 10^{-2} \;/\; 1\times 10^{-4},\, 1\times 10^{-3}$} \\
			rollout workers      & \multicolumn{8}{c}{1} \\
			batch size           & \multicolumn{8}{c}{32} \\
			activation function  & \multicolumn{8}{c}{ReLU} \\
			buffer size          & \multicolumn{8}{c}{$5\times 10^{3}$} \\
			hidden dim           & \multicolumn{8}{c}{64 / 128$^{4}$} \\
			message dim$^{6}$    & \multicolumn{8}{c}{64} \\
			optimizer            & \multicolumn{8}{c}{Adam} \\
			\hline
		\end{tabular}%
	}
	
	\vspace{2mm}
	{\footnotesize
		$^{1}$ CDCMA / ablations refer to CDCMA, CDCMA-DCOS, CDCMA-CAMA, CDCMA-OTG, and CDCMA-PlainAttn, respectively. \\
		$^{2}$ The left value is used for Cooperative Navigation and Predator Prey, while the right value is used for SMAC. \\
		$^{3}$ The values before / are used for MPE, while the values after / are used for SMAC. For each task family, the two values denote the actor and critic learning rates, respectively. \\
		$^{4}$ The left value denotes the hidden dimension of the actor, while the right value denotes the hidden dimension of the critic. \\
		$^{5}$ CoDe is reproduced under the same delayed-message interface used in our experiments. \\
		$^{6}$ All methods use the same transmitted message dimension; OTG changes message construction but not the final transmitted message dimensionality.
	}
\end{table}

\subsection{Delay Model}
\label{app:exp-delay-model}

\begin{figure}[!t]
	\centering
	\subfloat[Delay parameters $(\mu_{\ell},\sigma_{\ell})$.]{%
		\begin{minipage}[t]{0.45\linewidth}\centering\footnotesize
			\begin{tabular}{@{}lcc@{}}
				\toprule
				\textbf{Difficulty $\ell$} & $\mu_{\ell}$ & $\sigma_{\ell}$ \\
				\midrule
				\texttt{easy}         & 1.00 & 0.65 \\
				\texttt{medium}       & 2.00 & 0.80 \\
				\texttt{hard}         & 3.00 & 0.70 \\
				\texttt{super\_hard}  & 4.00 & 0.70 \\
				\bottomrule
			\end{tabular}
		\end{minipage}
	}
	\subfloat[Illustration of the four delay distributions.\label{fig:delay_dist}]{
		\begin{minipage}{.2\linewidth}\centering
			\includegraphics[width=\linewidth]{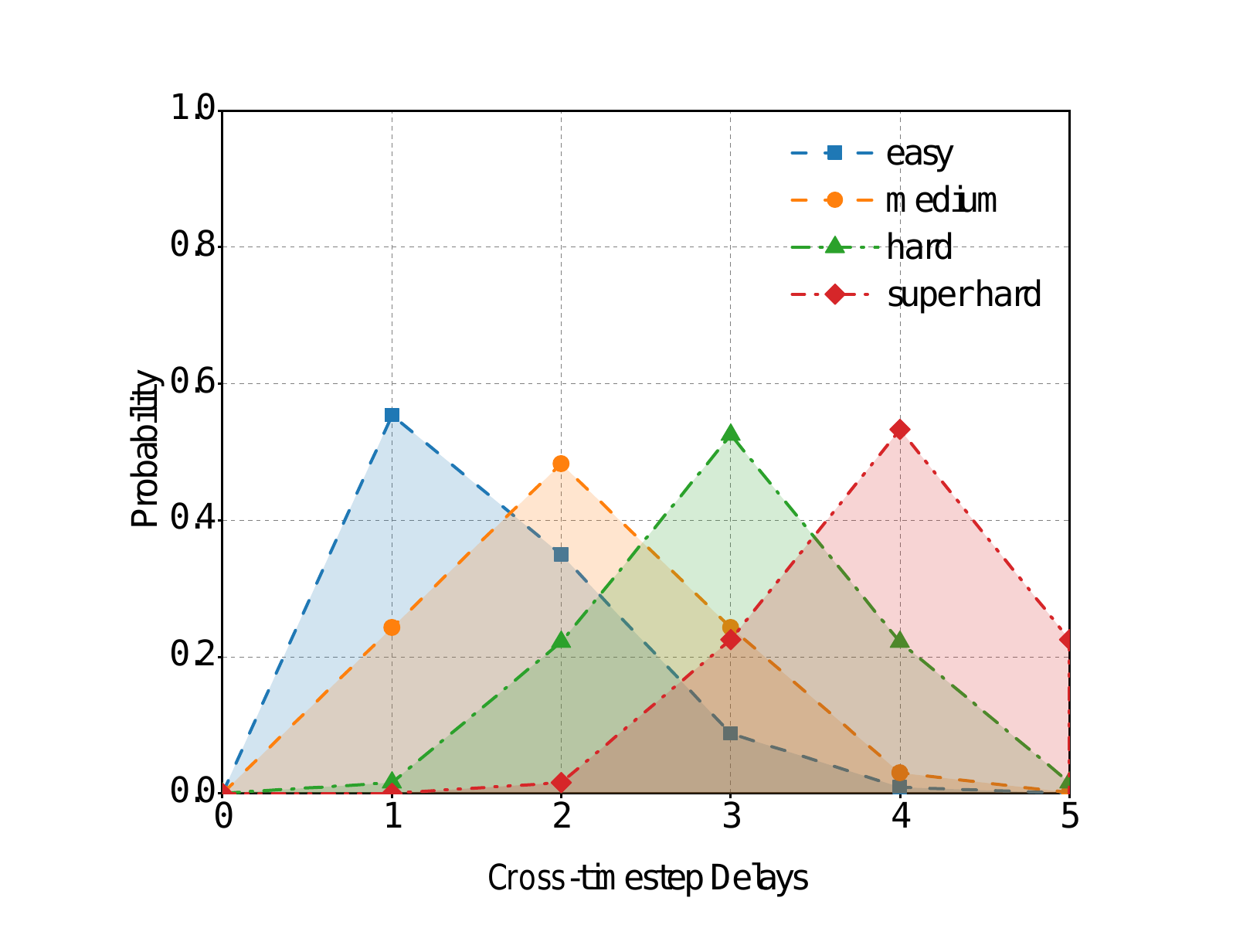}
		\end{minipage}
	}
	\caption{Delay model: parameter table and corresponding distribution shapes.}
	\label{fig:delay_table_and_plot}
\end{figure}

For each ordered pair $(j,i)$ and each send timestep $k$, the environment samples an integer delay $d_{ji,k}\in\{1,2,\dots,d_{\max}\}$ in the delayed settings. The delay-free setting corresponds to $d_{ji,k}=0$ for all ordered pairs. We use a difficulty-indexed probability mass function $p_{\ell}(d)$ on $\{1,\dots,d_{\max}\}$ with $\ell\in\{\texttt{easy}, \texttt{medium}, \texttt{hard}, \texttt{super\_hard}\}$. Our default specification is a truncated discrete normal:
\begin{equation}\label{eq:truncnorm_delay}
	p_{\ell}(d)=
	\frac{
		\Phi\!\big(\tfrac{d+0.5-\mu_{\ell}}{\sigma_{\ell}}\big)\;-\;
		\Phi\!\big(\tfrac{d-0.5-\mu_{\ell}}{\sigma_{\ell}}\big)
	}{
		\sum_{u=1}^{d_{\max}}
		\!\Big[
		\Phi\!\big(\tfrac{u+0.5-\mu_{\ell}}{\sigma_{\ell}}\big)\;-\;
		\Phi\!\big(\tfrac{u-0.5-\mu_{\ell}}{\sigma_{\ell}}\big)
		\Big]
	},
\end{equation}
where $\Phi(\cdot)$ is the standard normal cumulative distribution function. The parameters $(\mu_{\ell},\sigma_{\ell})$ are listed in Fig.~\ref{fig:delay_table_and_plot}(a), and the resulting PMF shapes are shown in Fig.~\ref{fig:delay_table_and_plot}(b). As the difficulty increases, $\mu_{\ell}$ shifts probability mass toward longer delays, while $\sigma_{\ell}$ controls the spread.

At the beginning of each timestep $k$, for each ordered pair $(j,i)$, the environment samples an exogenous per-link delay $d_{ji,k}\sim p_{\ell}(\cdot)$ independently across ordered pairs and timesteps; the sampled delays are not observed by agents. Any message generated at timestep $k$ from $j$ to $i$ is delivered after $d_{ji,k}$ timesteps. Message arrival and the rule for concurrent arrivals follow the DeComm-POMG specification in Sec.~\ref{sec:pro_formu}: the arrival timestep is $t=k+d_{ji,k}$, and if several messages from the same sender arrive simultaneously, only the most recent one is retained.

\subsection{Additional Learning Curves}
\label{app:exp-curves}

\begin{figure*}[tb]
	\centering
	\includegraphics[width=0.55\linewidth]{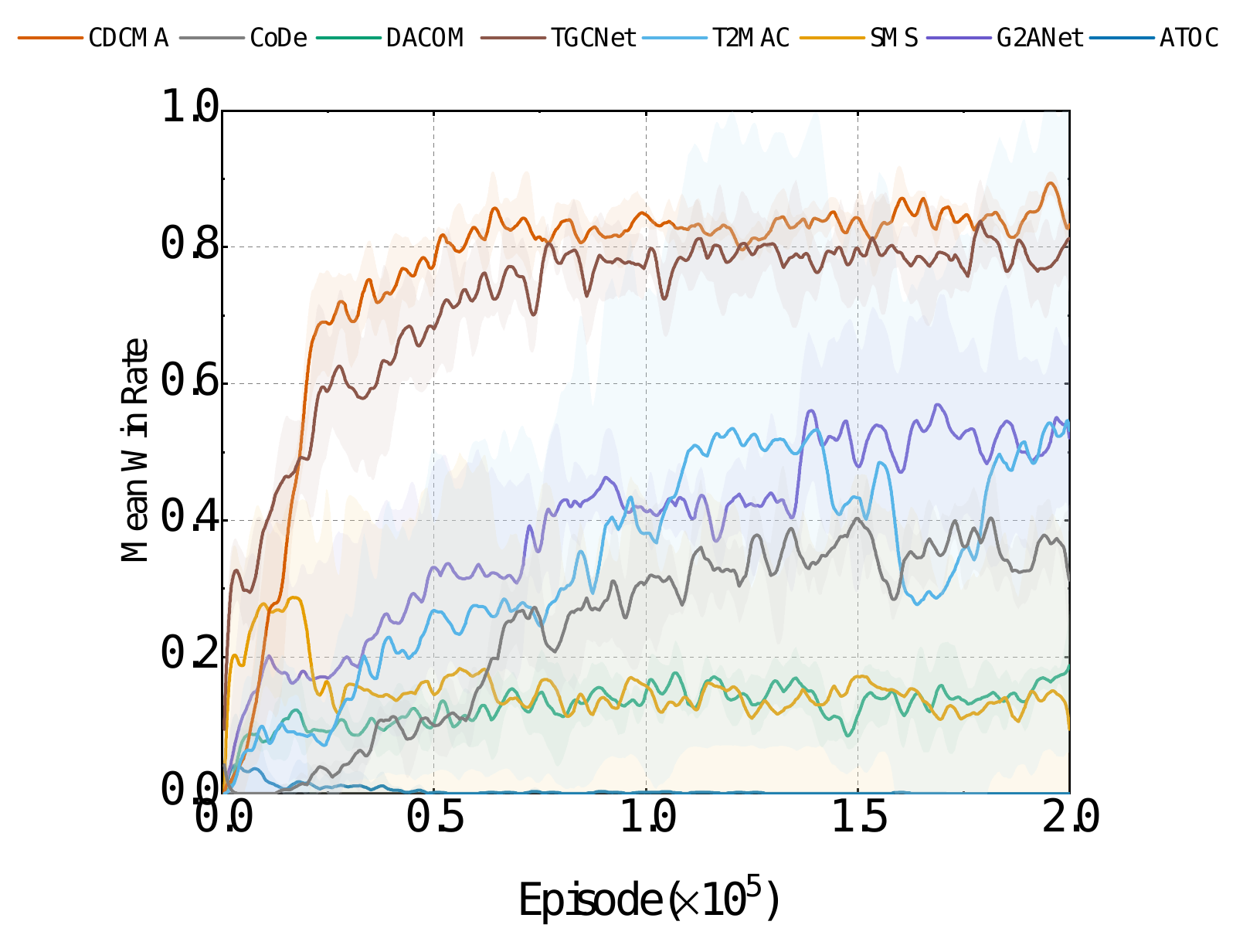}\\
	
	\subfloat[1o2r\_\texttt{easy}]{\includegraphics[width=0.225\linewidth]{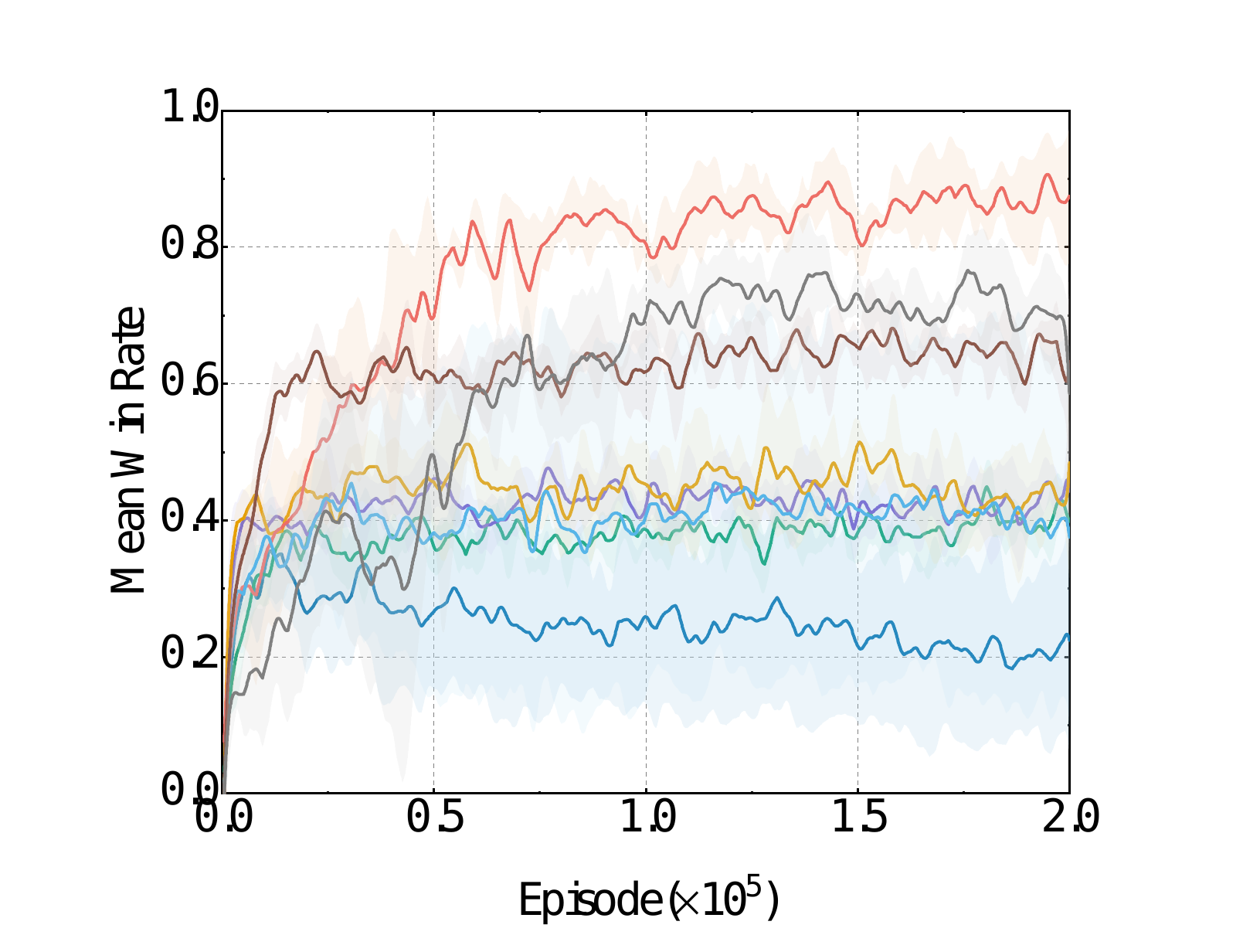}\label{1o2r_easy}}
	\hfill
	\subfloat[1o2r\_\texttt{medium}]{\includegraphics[width=0.225\linewidth]{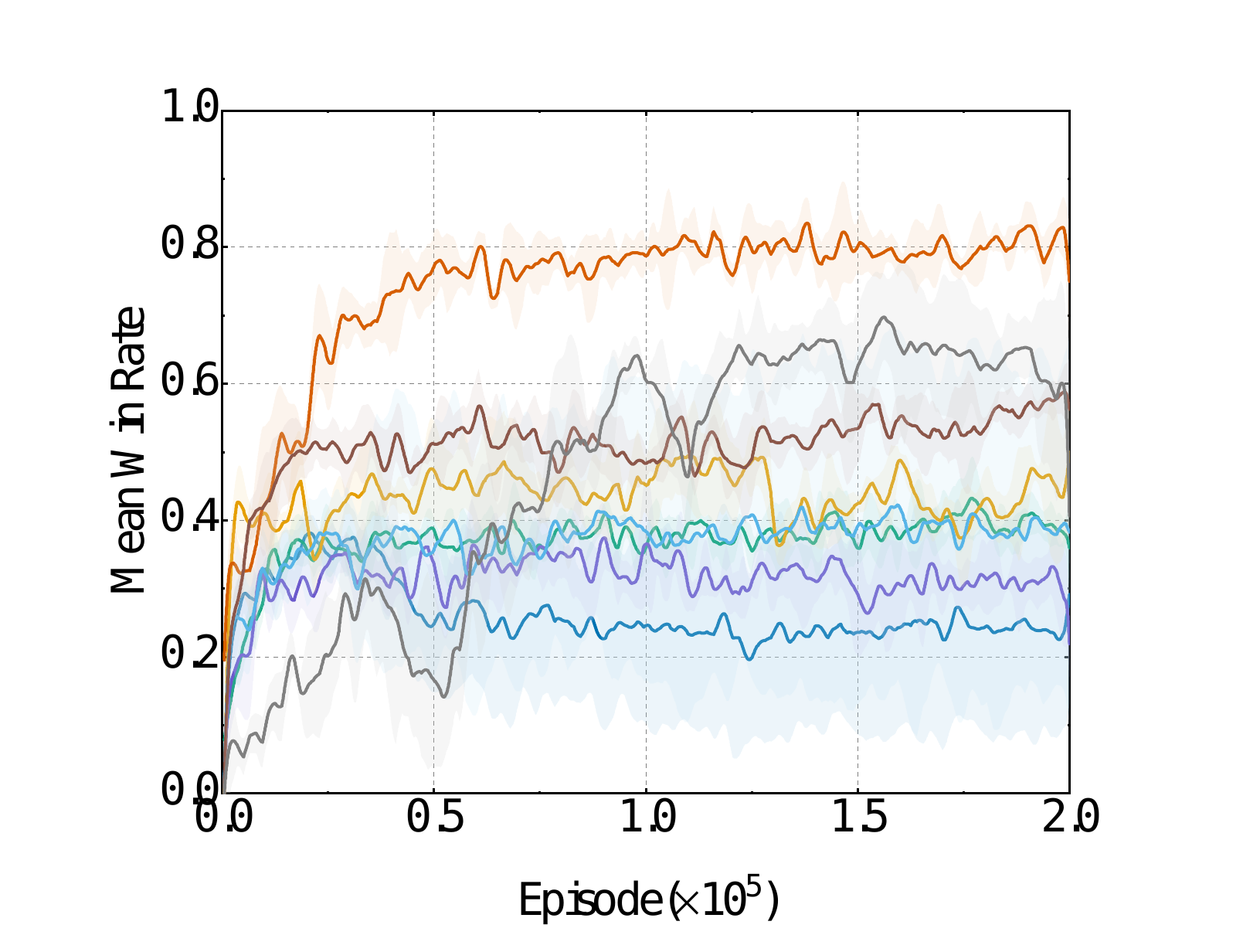}\label{1o2r_medium}}
	\hfill
	\subfloat[1o2r\_\texttt{hard}]{\includegraphics[width=0.225\linewidth]{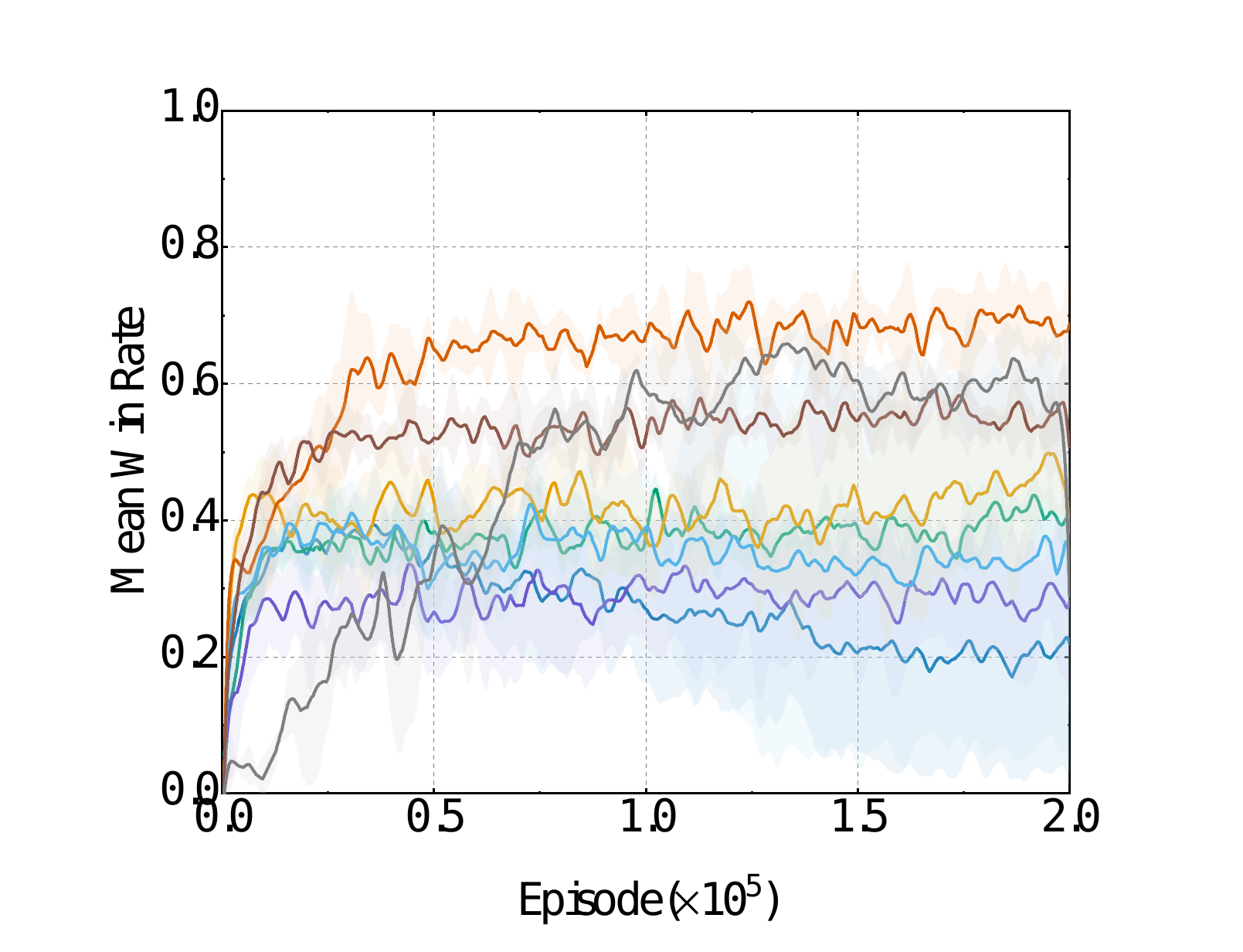}\label{1o2r_hard}}
	\hfill
	\subfloat[1o2r\_\texttt{super\_hard}]{\includegraphics[width=0.225\linewidth]{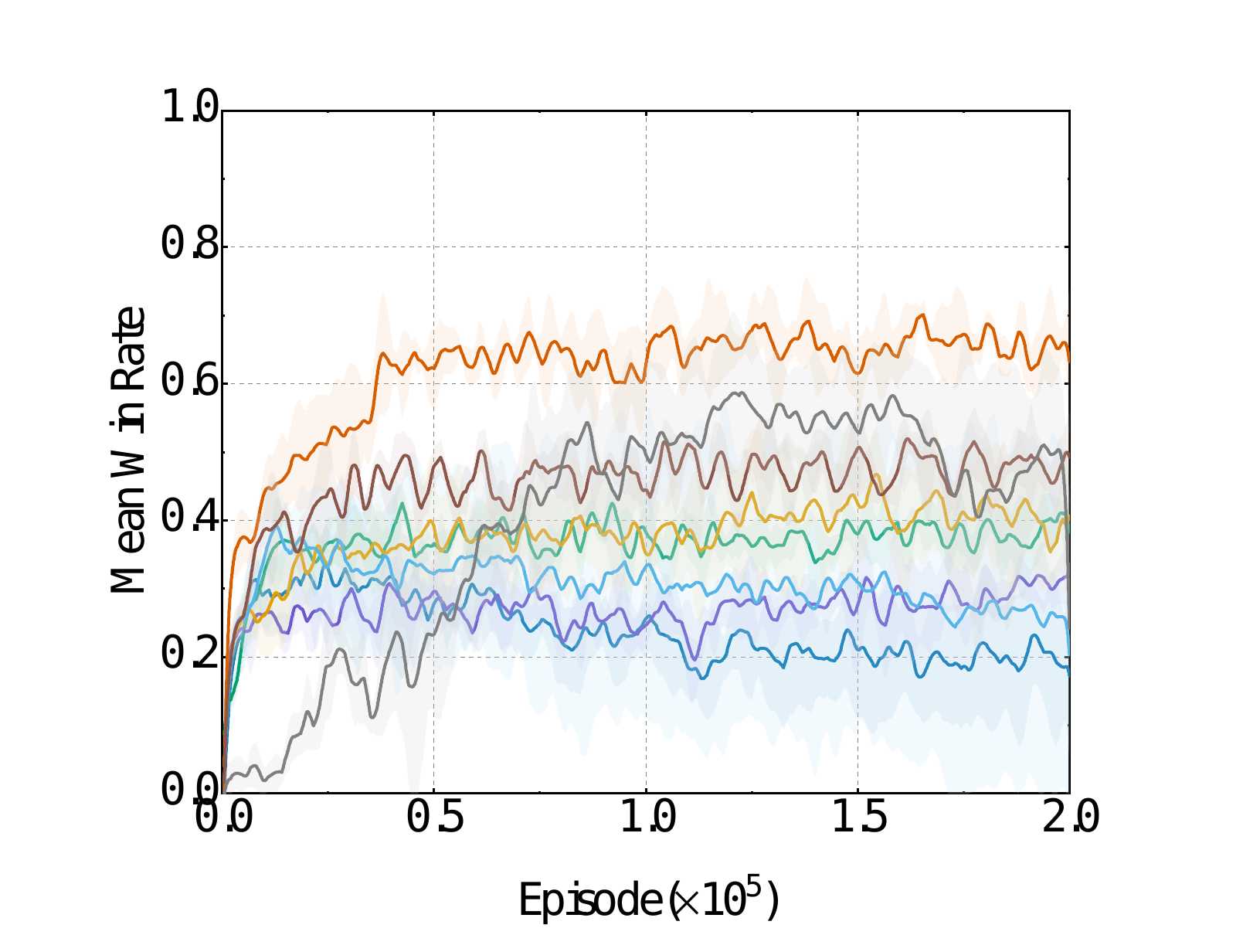}\label{1o2r_super_hard}}
	\\
	
	\subfloat[1o10b\_\texttt{easy}]{\includegraphics[width=0.225\linewidth]{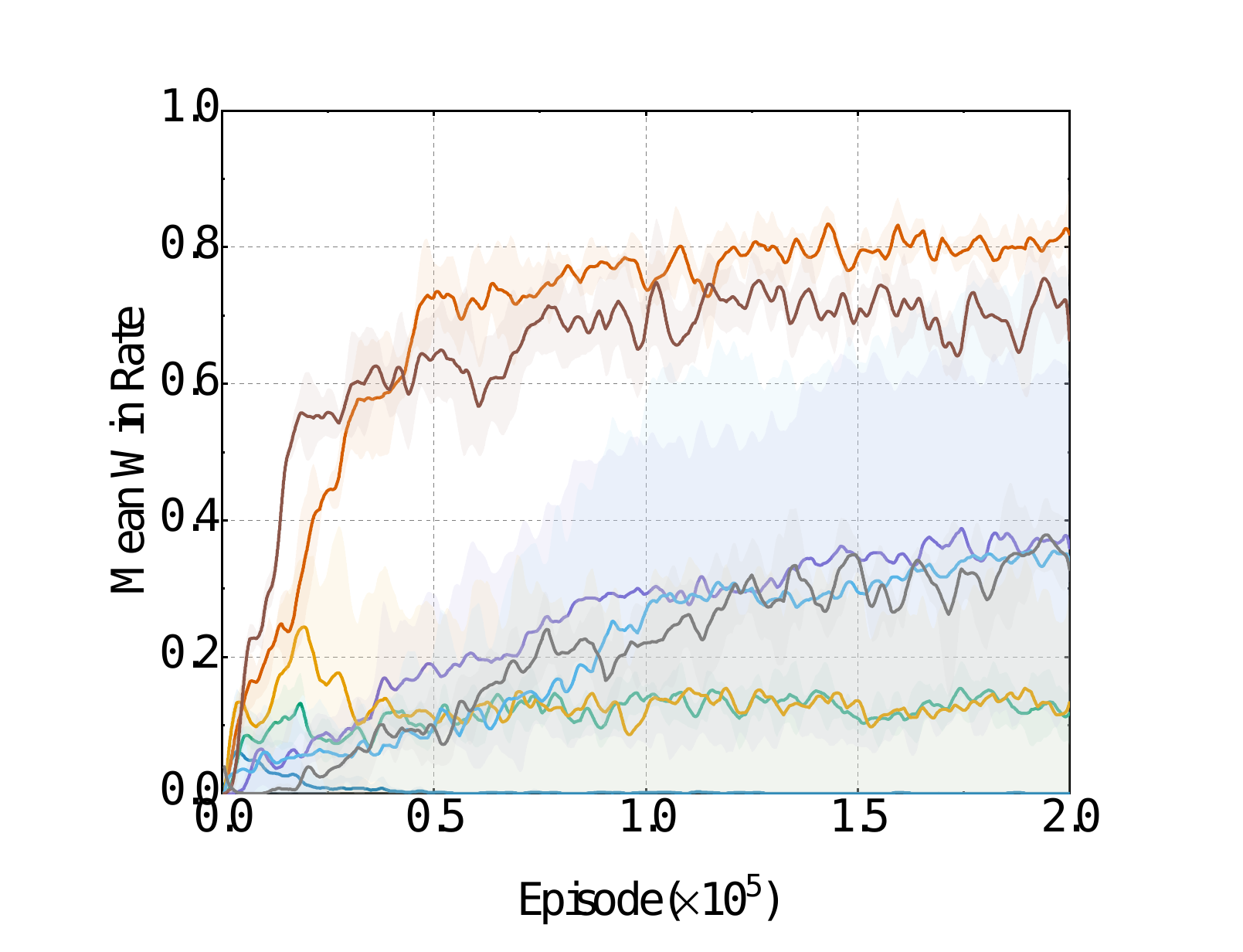}\label{1o10b_easy}}
	\hfill
	\subfloat[1o10b\_\texttt{medium}]{\includegraphics[width=0.225\linewidth]{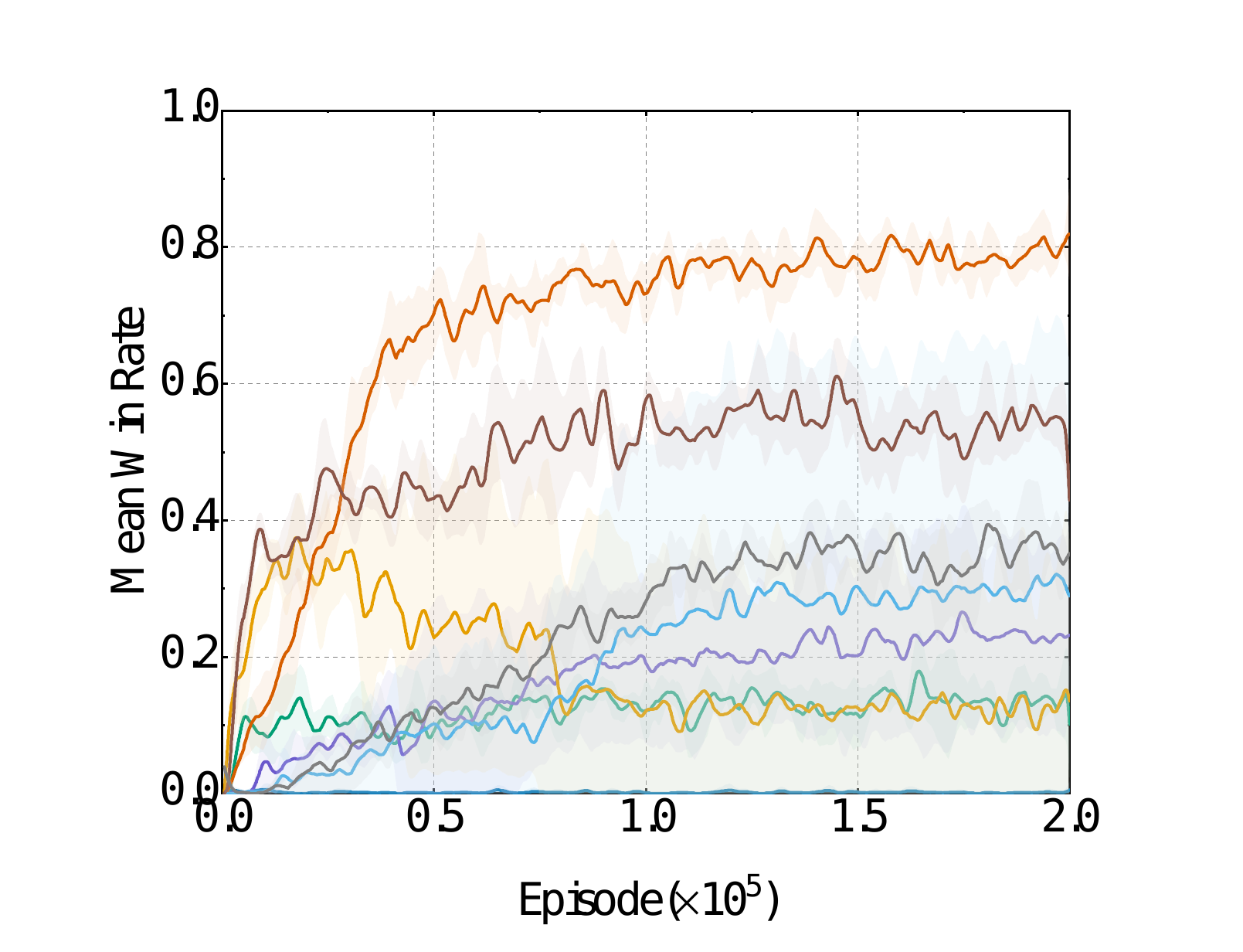}\label{1o10b_medium}}
	\hfill
	\subfloat[1o10b\_\texttt{hard}]{\includegraphics[width=0.225\linewidth]{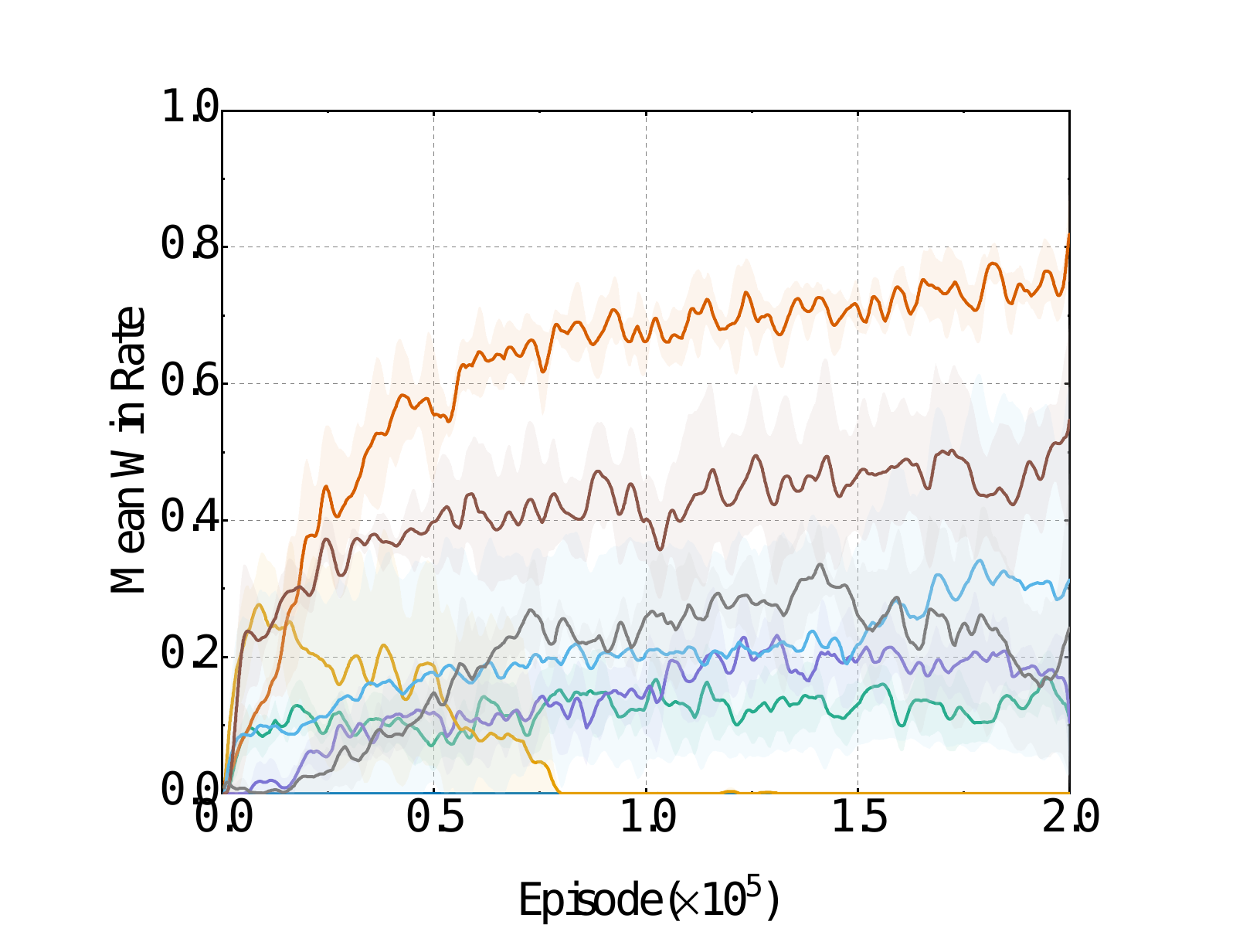}\label{1o10b_hard}}
	\hfill
	\subfloat[1o10b\_\texttt{super\_hard}]{\includegraphics[width=0.225\linewidth]{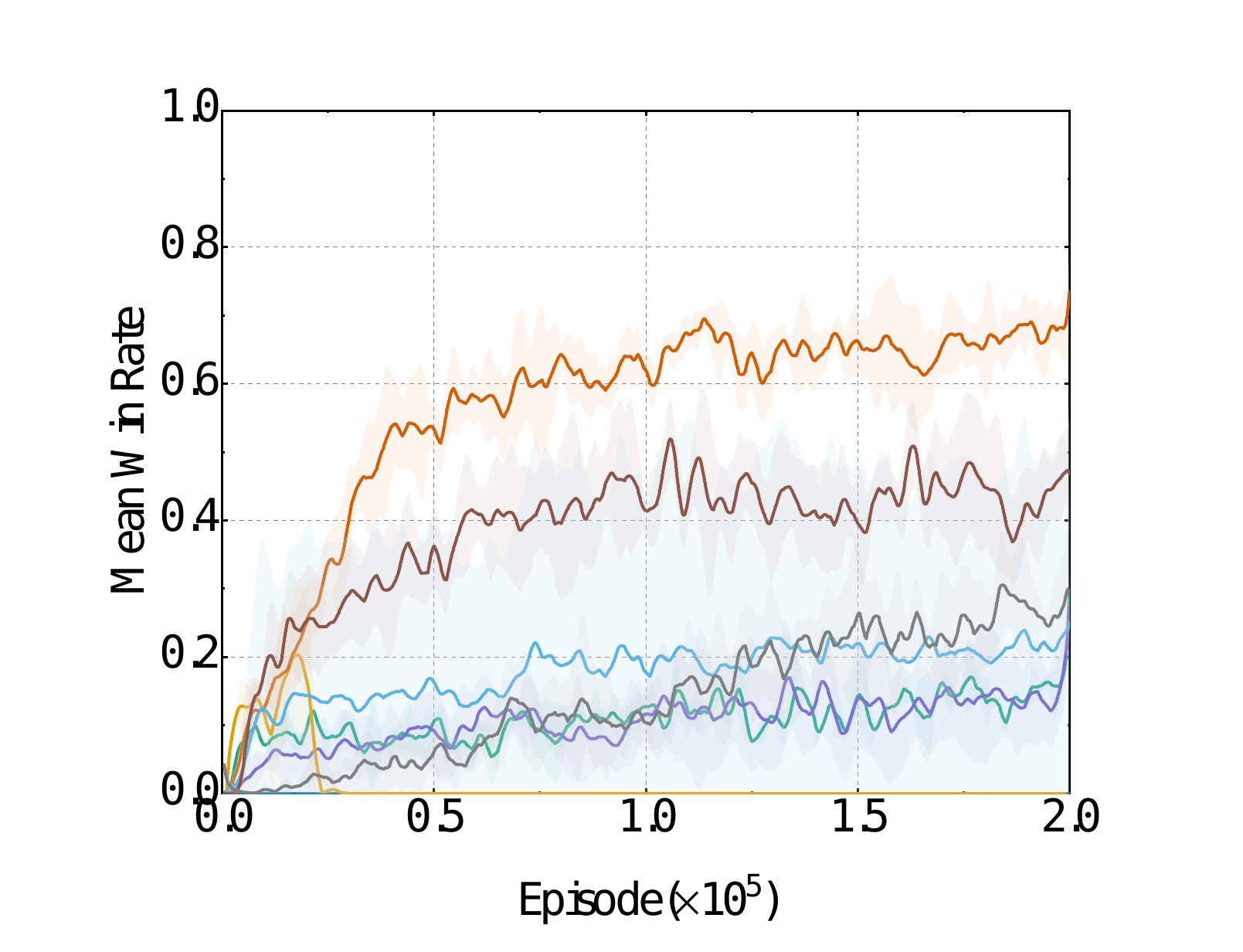}\label{1o10b_super_hard}}
	
	\caption{Mean win rate vs. training steps across two SMAC tasks and four delay difficulties. The dark curve indicates the mean, and the shaded region indicates the standard deviation.}
	\label{smac_result_fig}
\end{figure*}

\begin{figure*}[tb]
	\centering
	\includegraphics[width=0.55\linewidth]{fig/caption.pdf}\\
	
	\subfloat[CN \texttt{easy}]{\includegraphics[width=0.24\linewidth]{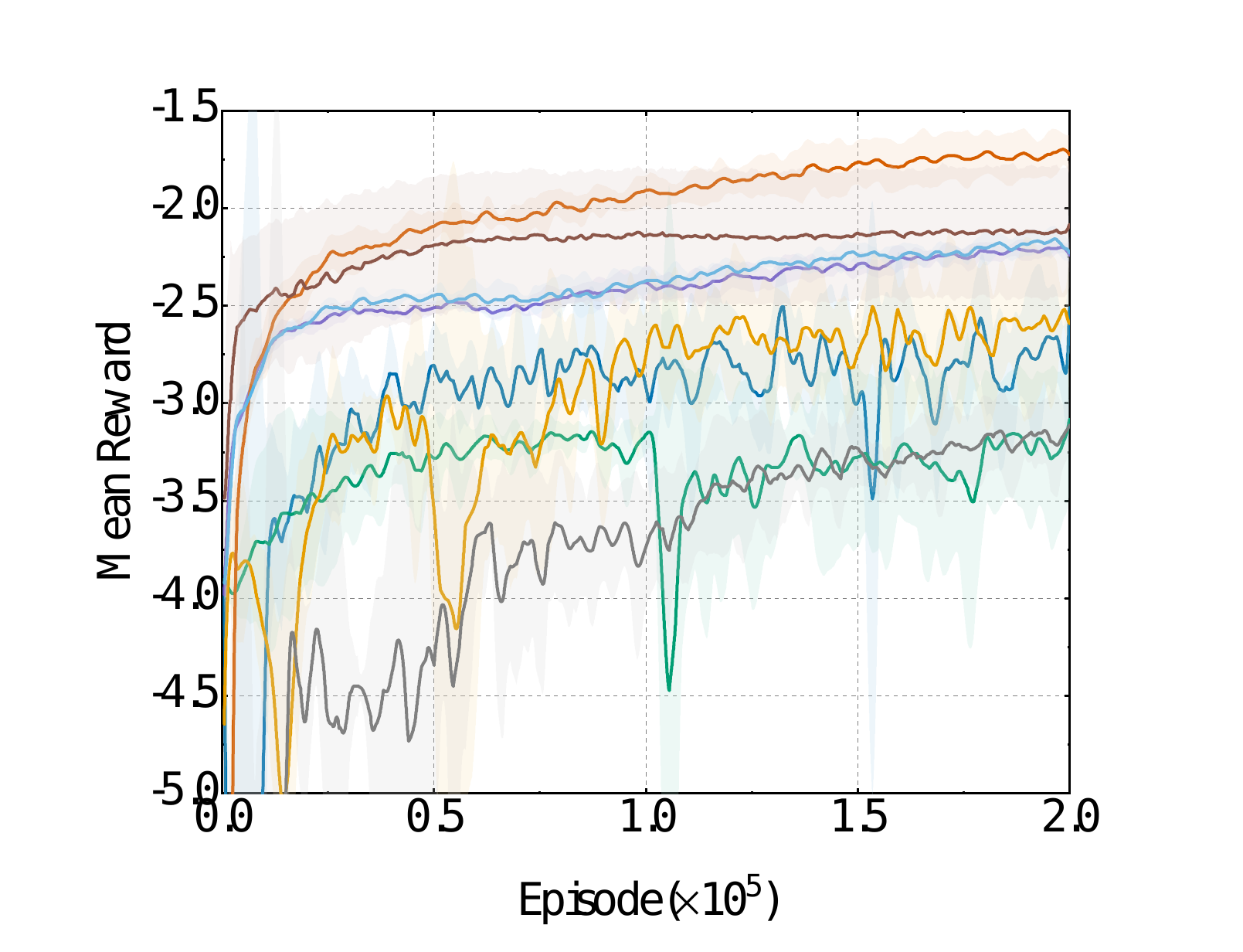}\label{fig:cn_easy}}
	\hfill
	\subfloat[CN \texttt{medium}]{\includegraphics[width=0.24\linewidth]{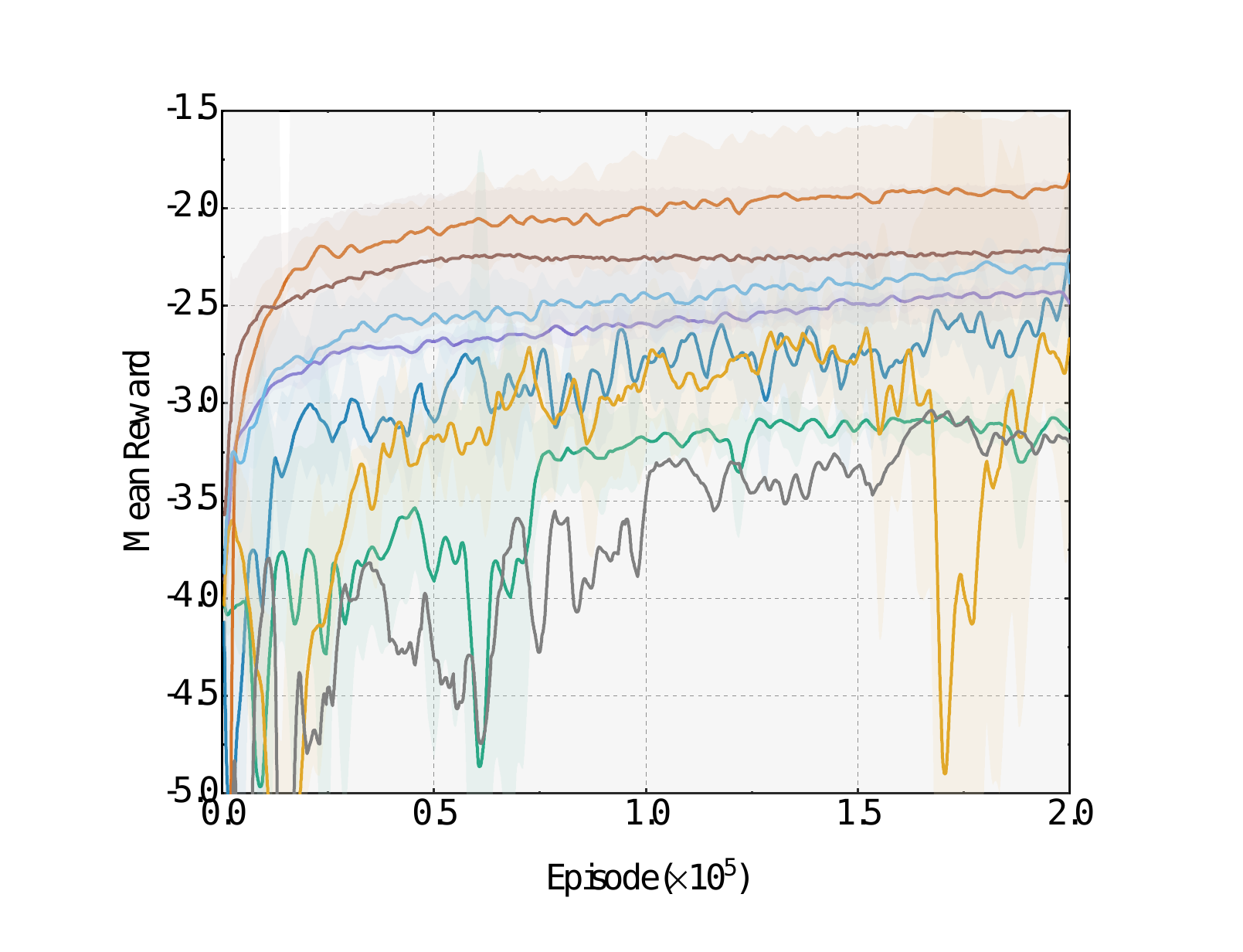}\label{fig:cn_medium}}
	\hfill
	\subfloat[CN \texttt{hard}]{\includegraphics[width=0.24\linewidth]{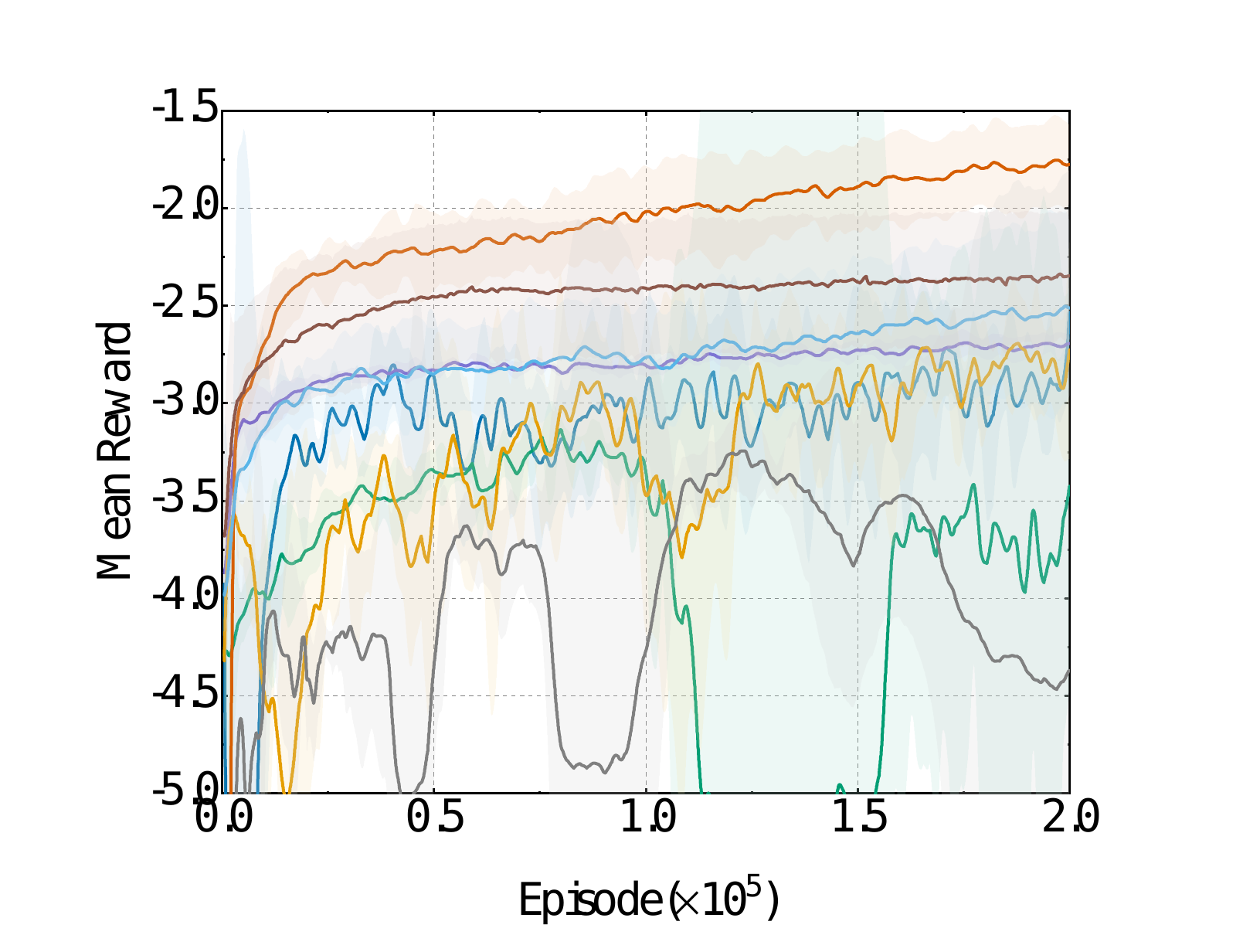}\label{fig:cn_hard}}
	\hfill
	\subfloat[CN \texttt{super\_hard}]{\includegraphics[width=0.24\linewidth]{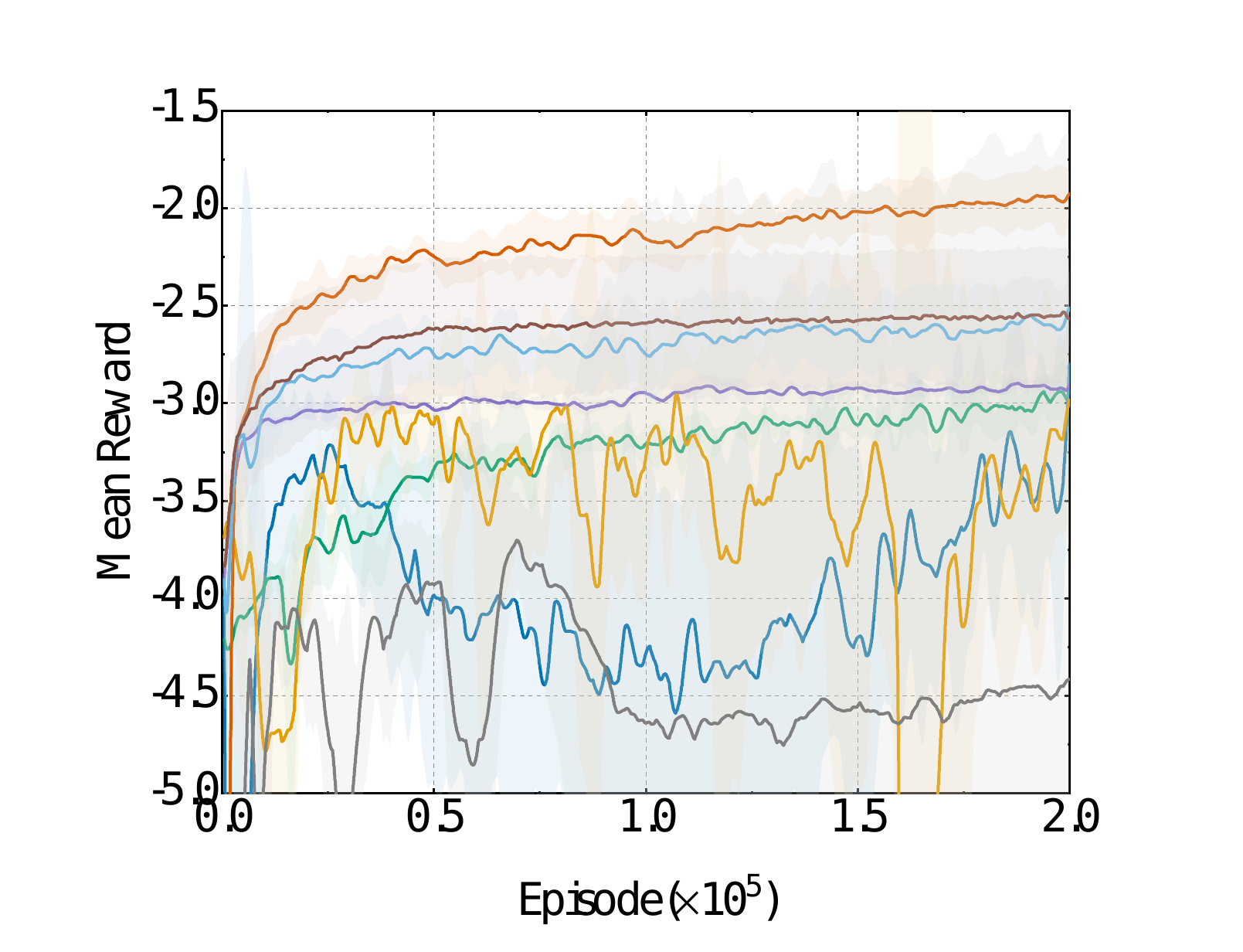}\label{fig:cn_super_hard}}
	\\
	\subfloat[PP \texttt{easy}]{\includegraphics[width=0.24\linewidth]{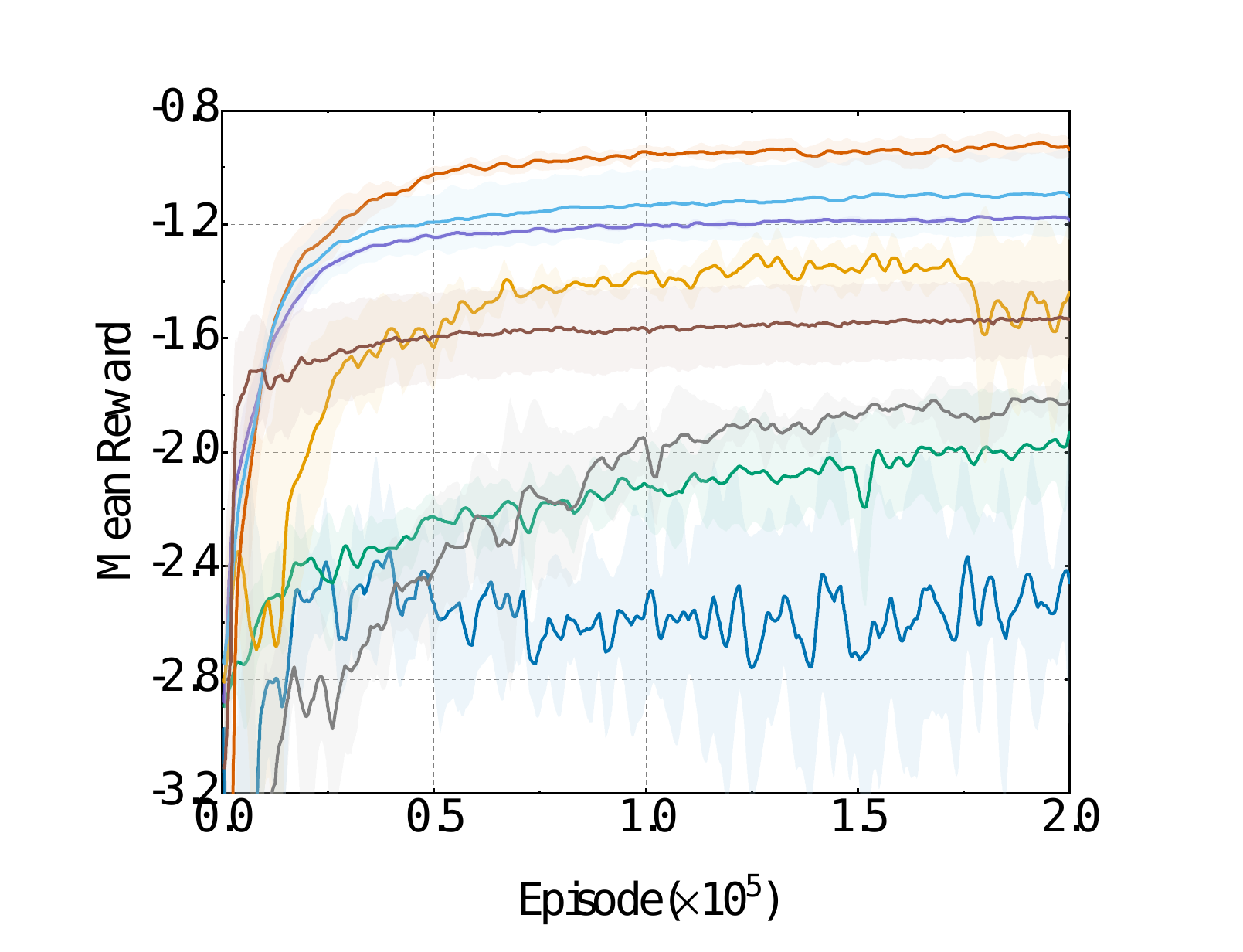}\label{fig:pp_easy}}
	\hfill
	\subfloat[PP \texttt{medium}]{\includegraphics[width=0.24\linewidth]{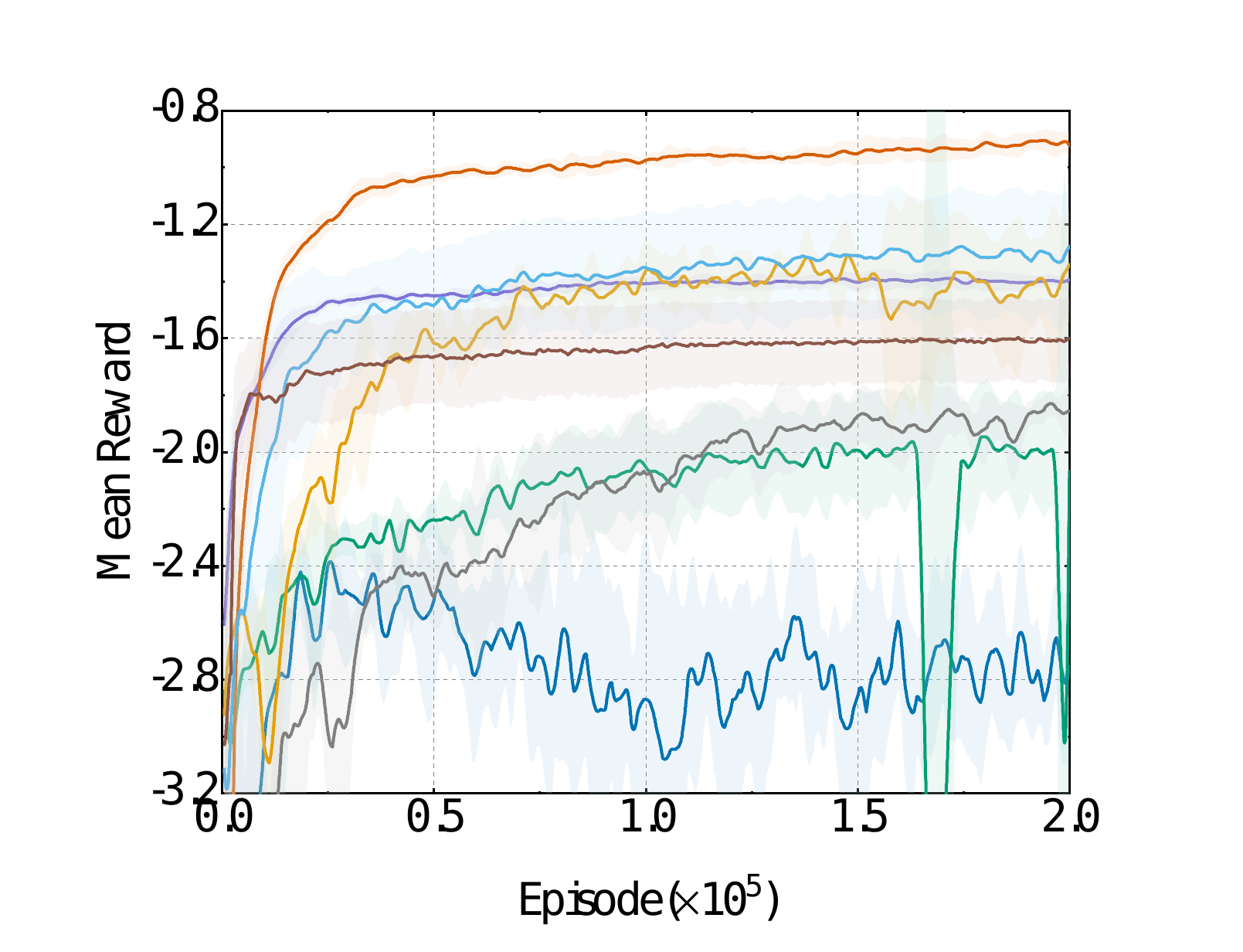}\label{fig:pp_medium}}
	\hfill
	\subfloat[PP \texttt{hard}]{\includegraphics[width=0.24\linewidth]{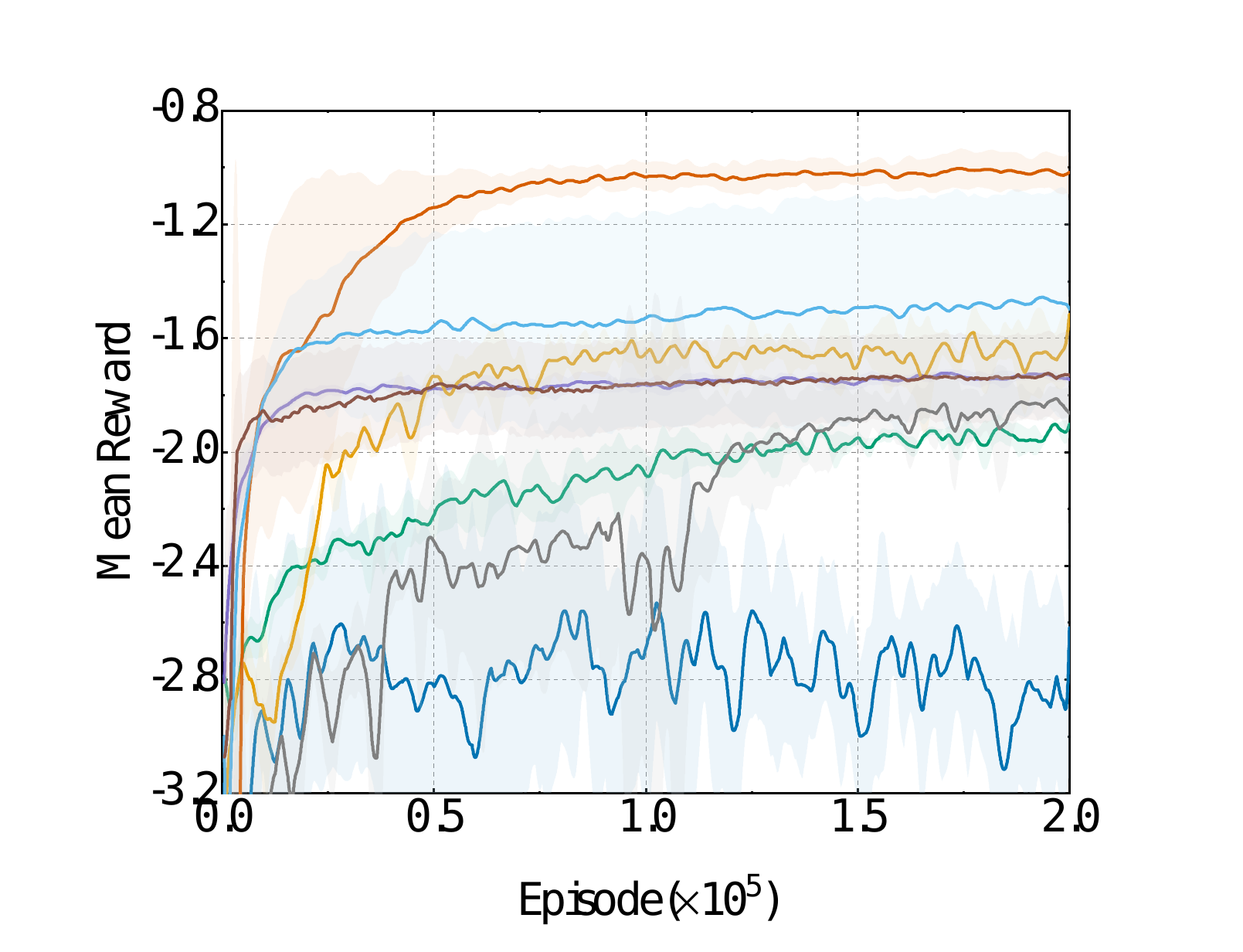}\label{fig:pp_hard}}
	\hfill
	\subfloat[PP \texttt{super\_hard}]{\includegraphics[width=0.24\linewidth]{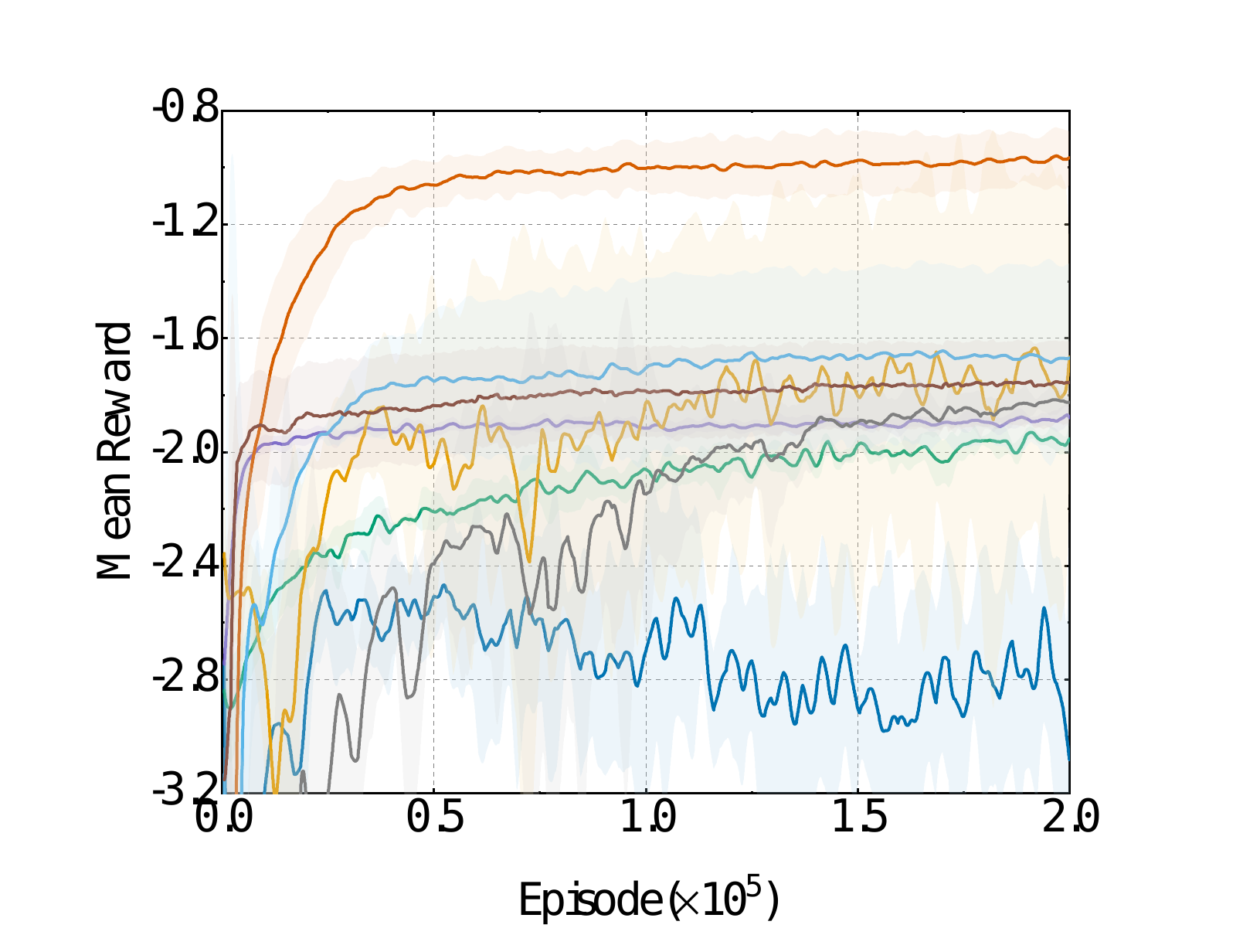}\label{fig:pp_super_hard}}
	\caption{Mean episode reward vs. training steps on Cooperative Navigation (CN) and Predator Prey (PP) across four delay difficulties. The dark curve indicates the mean, and the shaded region indicates the standard deviation.}
	\label{fig:mpe_result_fig}
\end{figure*}

Fig.~\ref{smac_result_fig} and Fig.~\ref{fig:mpe_result_fig} complement the final results in Table~\ref{results_table} by showing the full training curves on MPE and SMAC, respectively. CDCMA consistently converges faster or to a better final level than the baselines across delay settings, and its advantage remains visible as the delay difficulty increases.

\subsection{Delay-Free Evaluation}
\label{app:delay-free}

\begin{figure}[t]
	\centering
	\includegraphics[width=0.55\linewidth]{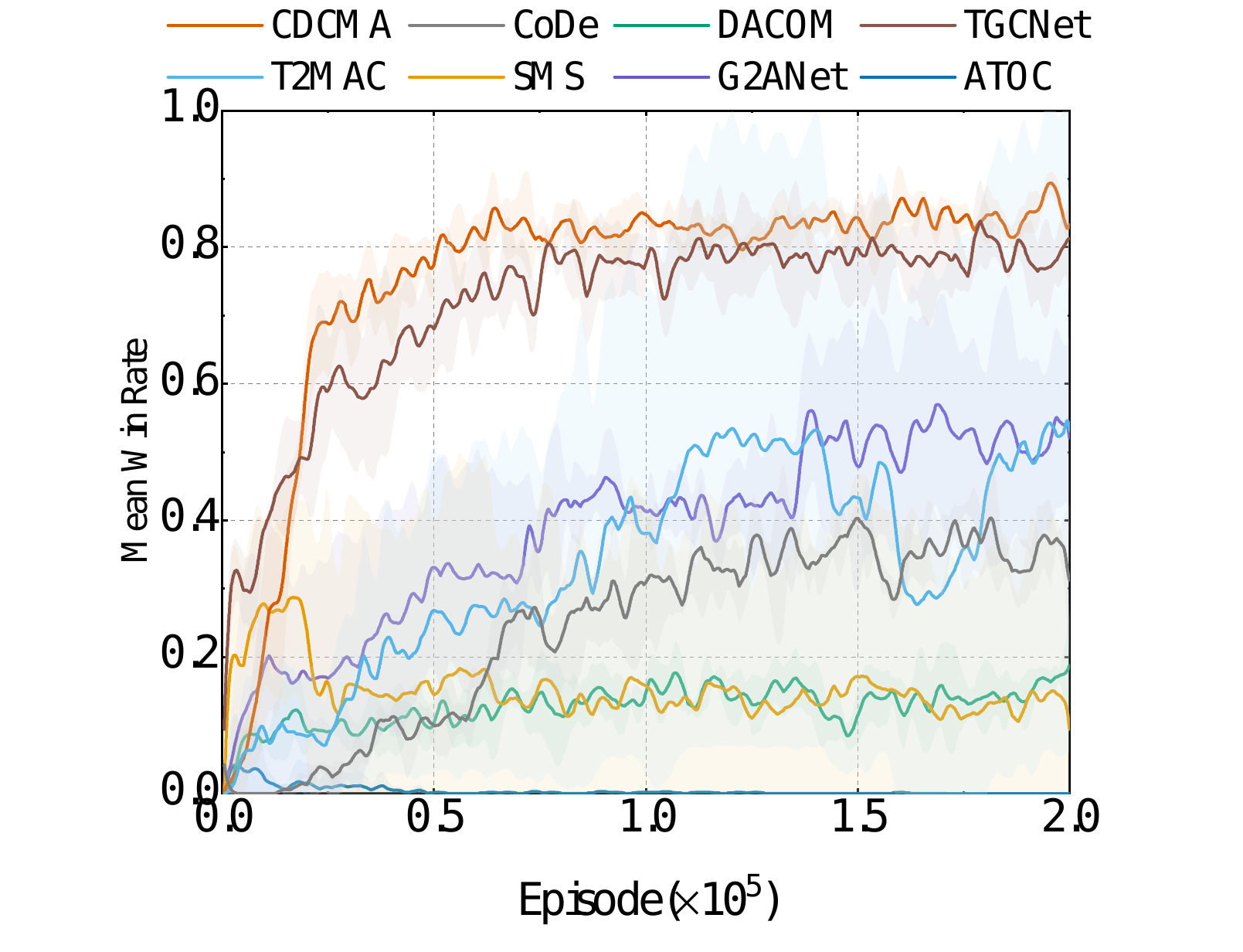}
	\caption{Mean win rate vs. training steps on \texttt{1o\_10b\_vs\_1r} under the delay-free condition.}
	\label{fig:1o10b_nodelay}
\end{figure}

Fig.~\ref{fig:1o10b_nodelay} reports the delay-free learning curve on \texttt{1o\_10b\_vs\_1r}.
In the zero-delay limit, CDCMA remains competitive and stable across seeds, indicating that the framework does not rely on delayed settings to remain effective.

\subsection{Additional Validation on MMM2}
\label{app:mmm2}

To broaden the empirical scope beyond the two main SMAC maps reported in the main text, we additionally evaluate CDCMA on MMM2 under the same five settings: \texttt{delay\_free}, \texttt{easy}, \texttt{medium}, \texttt{hard}, and \texttt{super\_hard}. As shown in Table~\ref{tab:mmm2_main}, CDCMA remains clearly superior across all five settings, and its advantage persists as the delay difficulty increases. These results indicate that the gains observed in the main text are not limited to the originally reported SMAC maps.

\begin{table}[t]
	\centering
	\scriptsize
	\caption{Additional validation on MMM2. Results are mean win rate (std) over 4 seeds.}
	\label{tab:mmm2_main}
	\resizebox{0.98\linewidth}{!}{%
		\begin{tabular}{lcccccccc}
			\toprule
			Setting & CDCMA & CoDe & DACOM & TGCNet & T2MAC & SMS & G2ANet & ATOC \\
			\midrule
			\texttt{delay\_free}  & \textbf{89.04}(3.13) & 3.14(6.71) & 6.16(4.68) & 84.37(5.41) & 5.21(4.78) & 15.62(14.32) & 29.68(24.54) & 0(0) \\
			\texttt{easy}         & \textbf{88.70}(3.50) & 5.47(6.91) & 6.25(7.21) & 71.88(10.82) & 0.23(0.39) & 8.12(13.25) & 22.75(15.17) & 0(0) \\
			\texttt{medium}       & \textbf{85.31}(4.39) & 4.53(8.65) & 5.71(1.93) & 69.16(13.35) & 0(0) & 4.37(5.21) & 22.04(11.87) & 0(0) \\
			\texttt{hard}         & \textbf{84.43}(4.41) & 10.01(11.59) & 3.91(2.99) & 61.81(18.13) & 0(0) & 1.91(2.21) & 19.16(18.31) & 0(0) \\
			\texttt{super\_hard}  & \textbf{81.87}(5.59) & 14.06(14.20) & 4.68(5.41) & 58.32(9.74) & 0(0) & 0(0) & 0(0) & 0(0) \\
			\bottomrule
	\end{tabular}}
\end{table}

Compared with the strongest baseline TGCNet, CDCMA maintains a clear margin across all settings, including the delay-free case. The gap remains substantial under harder delays, supporting the claim that CDCMA improves robustness rather than only fitting a narrow delay regime.

\subsection{OTG Prediction Diagnostics}
\label{app:exp-otg}

\begin{figure}[t]
	\centering
	\subfloat[True trajectory]{%
		\begin{minipage}{0.30\linewidth}
			\centering
			\includegraphics[width=\linewidth]{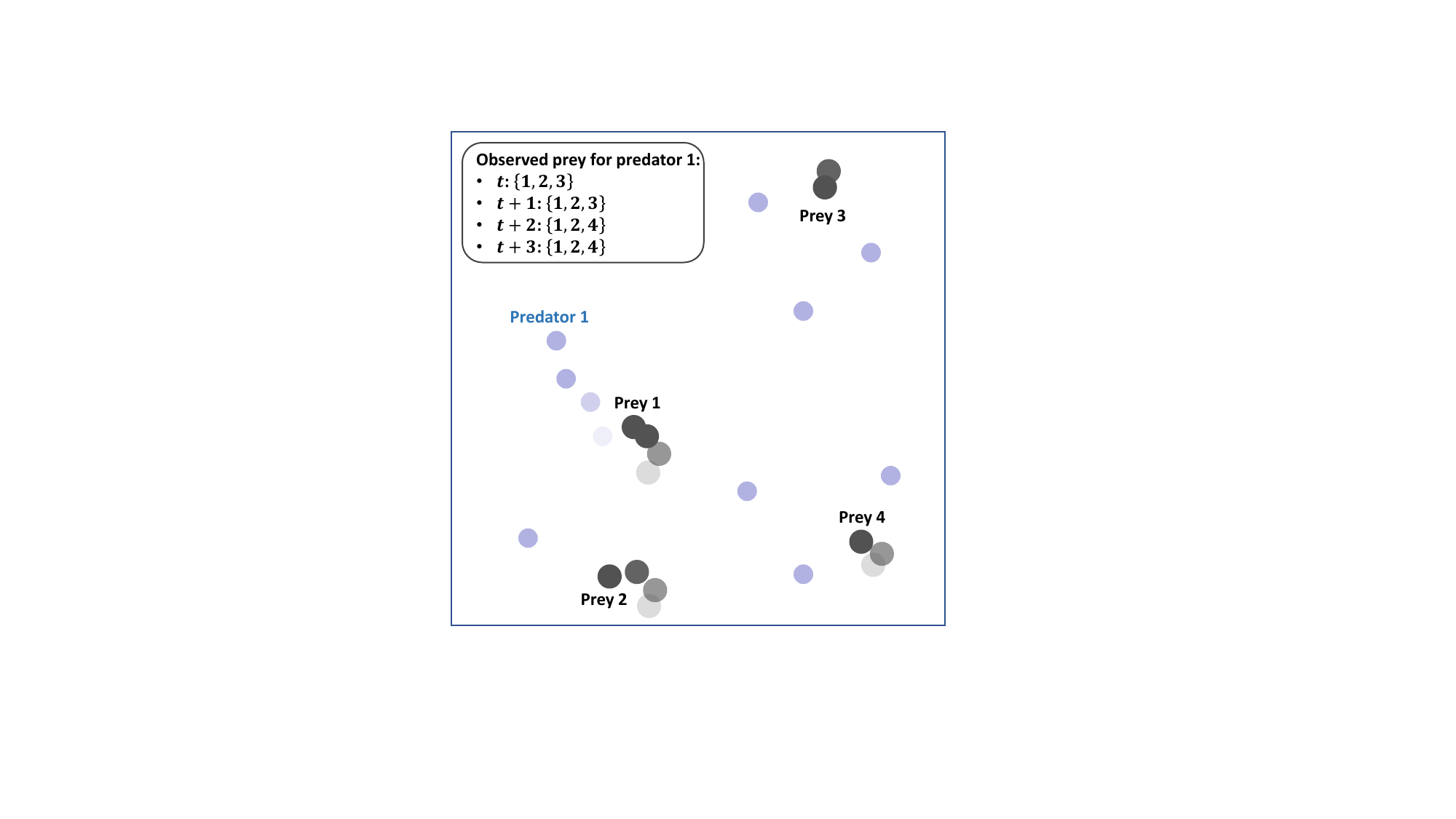}
		\end{minipage}
		\label{fig:obs_true}
	}
	\hfill
	\subfloat[Predicted trajectory]{%
		\begin{minipage}{0.30\linewidth}
			\centering
			\includegraphics[width=\linewidth]{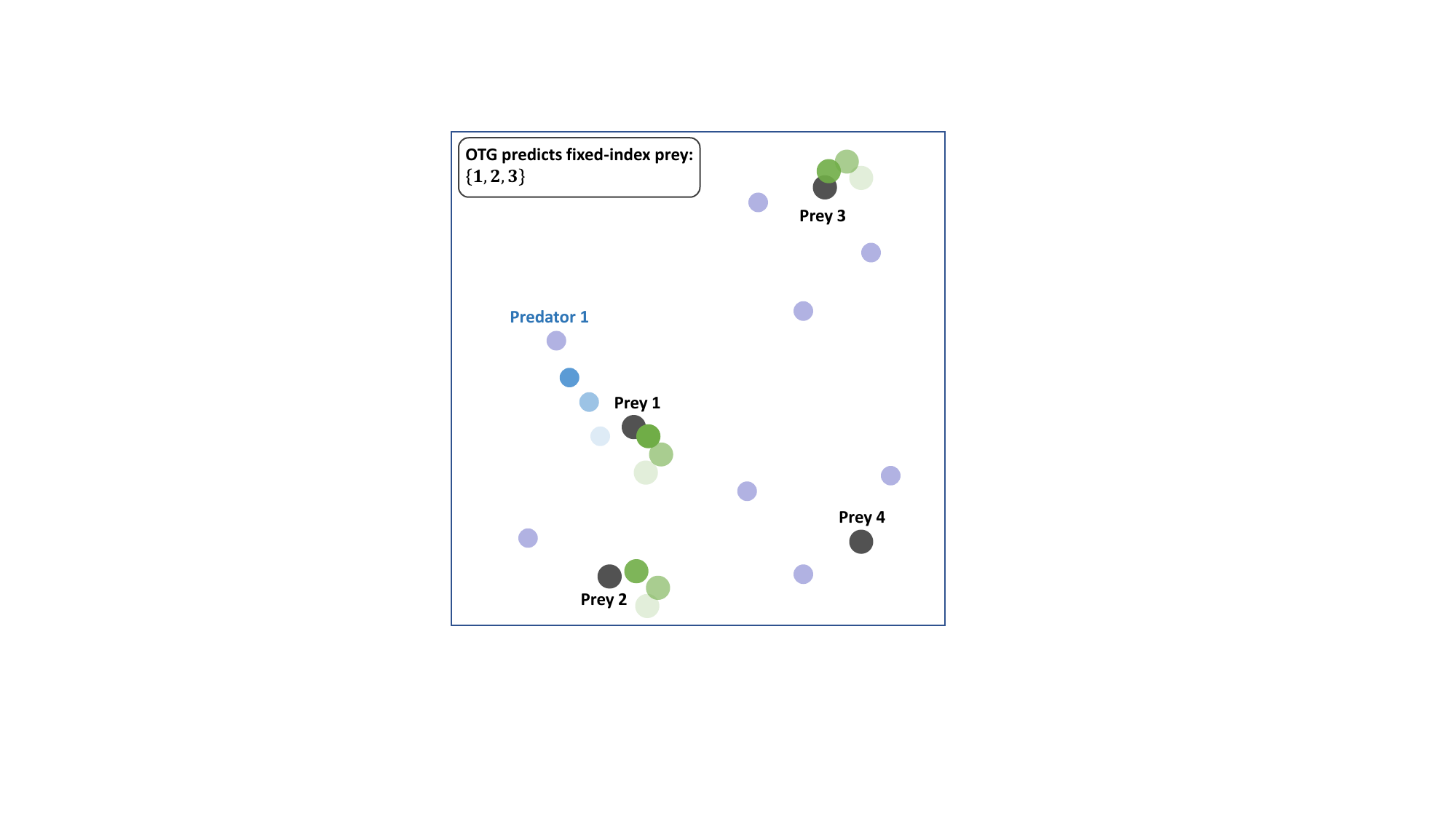}
		\end{minipage}
		\label{fig:obs_pre}
	}
	\hfill
	\subfloat[Prediction loss]{%
		\begin{minipage}{0.33\linewidth}
			\centering
			\includegraphics[width=\linewidth]{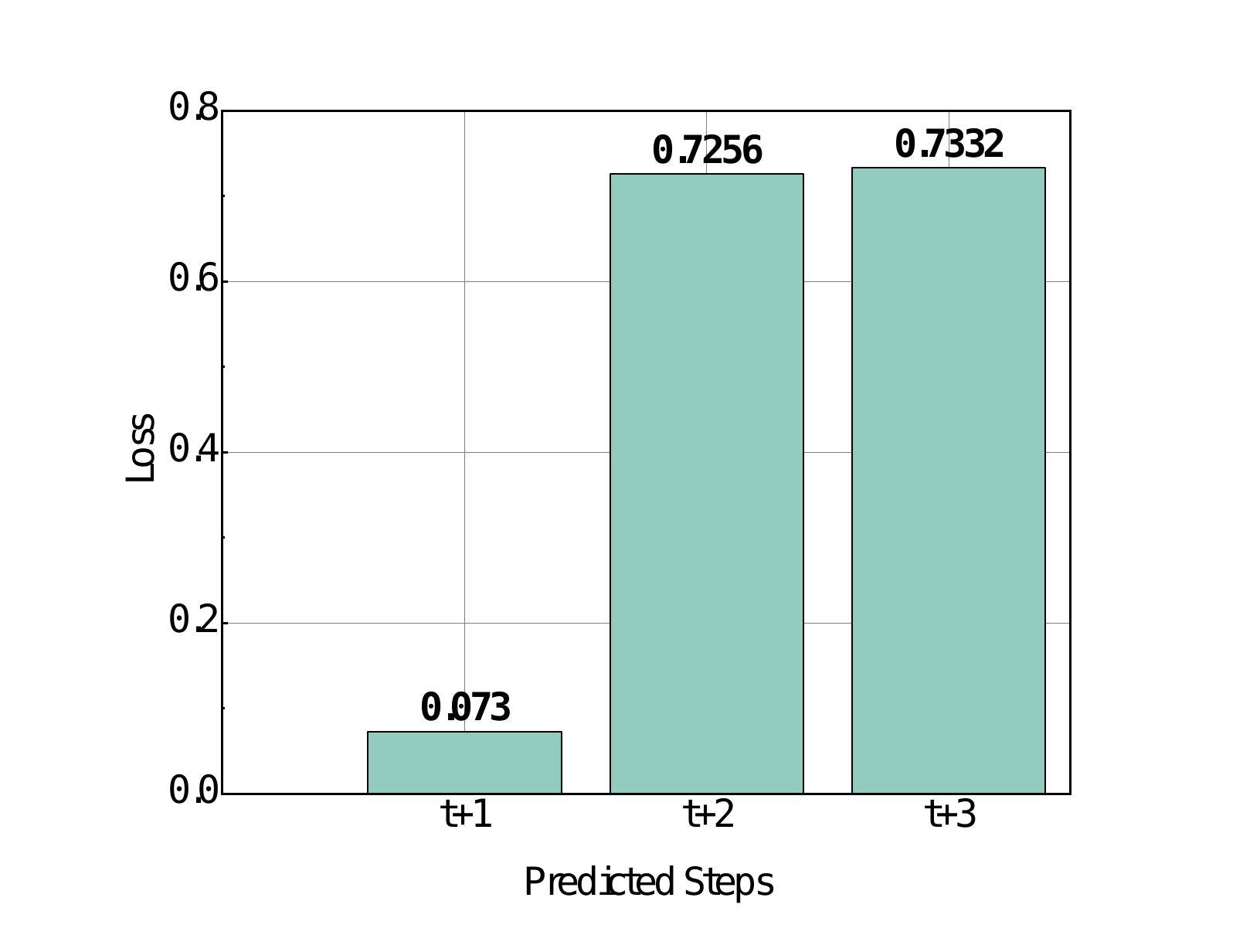}
		\end{minipage}
		\label{fig:obs_loss}
	}
	\caption{Qualitative OTG diagnostics on Predator Prey. Left: ground-truth observation trajectories. Middle: OTG-predicted trajectories. Right: per-timestep prediction loss. Lighter circles indicate farther future steps.}
	\label{fig:otg_diag}
\end{figure}

We provide qualitative diagnostics for OTG on Predator Prey under the \texttt{super\_hard} delay setting. In this task, each predator observes its ego position and velocity together with the relative positions and velocities of its three nearest prey. OTG predicts a future observation trajectory $\hat{\tau}_{i,t}=(\hat{o}_{i,t+1},\dots,\hat{o}_{i,t+H})$, which is encoded into outgoing messages to reduce the information gap at consumption time.

Fig.~\ref{fig:otg_diag} shows that OTG can track short-horizon observation evolution reasonably well, which is consistent with the empirical gains obtained by using a moderate prediction horizon. At the same time, we observe occasional prediction-loss spikes. These spikes are typically associated with nearest-neighbor identity switching in the observation encoding: the prey slots are defined by nearest-neighbor ordering at each timestep, so the slot-to-entity correspondence may change between successive steps. As a result, the supervised target for a fixed coordinate slot may correspond to a different physical prey in the next step, which can produce a large apparent error even when the predicted physical motion is reasonable.

This permutation-induced mismatch becomes more likely as the prediction horizon grows, which helps explain the degradation observed when $H>d_{\max}$ in the ablation results. Overall, the diagnostics support the interpretation that OTG is beneficial for mitigating temporal misalignment under delays, while overly long horizons amplify both model error and supervision mismatch.

\subsection{Additional CGDC Diagnostics}
\label{app:cgdc-diagnostics}

We provide additional CDCMA-internal diagnostics to examine whether the learned CGDC components behave consistently with their intended roles.
These diagnostics complement the critic-tempered information-gap analysis in Sec.~\ref{sec:exp-robust}.
They should be interpreted as mechanism checks for CDCMA rather than model-agnostic evaluation metrics.
All results are computed from evaluation rollouts under the same bounded stochastic delay settings used in the main experiments, and we report them on PP and \texttt{1o\_10b\_vs\_1r}.

\paragraph{Delay-cost side.}
The delay-cost term is motivated by the delay-induced return-loss bound, which relates delayed-message degradation to the information gap between timely-reference-conditioned and delayed-message-conditioned action distributions.
In CDCMA, this gap is instantiated by the critic-tempered surrogate $V^c_{ij,t}$ in Eq.~\eqref{eq:vc}.
As shown in Fig.~\ref{information_fig}, CDCMA yields the smallest episode-averaged gaps across all delay regimes.
On PP, its gaps increase only from $0.02$ to $0.09$ from \texttt{easy} to \texttt{super\_hard}; on \texttt{1o\_10b\_vs\_1r}, they remain similarly low, from $0.03$ to $0.08$.
Removing OTG substantially increases the gap on both tasks, suggesting that future-aware outgoing messages help reduce the timely-vs-delayed discrepancy.
Together with the module ablations in Fig.~\ref{fig:abla_modules}, this supports the role of OTG in reducing the delay-cost surrogate used by CGDC.

\paragraph{Gain and CGDC gate.}
We further examine whether the gain surrogate and the combined CGDC gate identify useful received messages.
For each evaluated transition and receiver--sender pair $(i,j,t)$, we compare the original aggregation with a message-masked counterfactual.
The message-present condition keeps the received message from sender $j$ and yields $\tilde m_i^{(+j)}$, while the message-masked condition replaces $m^r_{ij,t}$ with $m^\emptyset$ and recomputes aggregation, yielding $\tilde m_i^{(-j)}$.
With the local observation, joint action, and other received messages fixed, we define
\begin{equation}
	\Delta Q_{ij,t}
	=
	Q_i^{\mathrm{full}}(\boldsymbol{o}_t,\boldsymbol{a}_t,\tilde m_i^{(+j)})
	-
	Q_i^{\mathrm{full}}(\boldsymbol{o}_t,\boldsymbol{a}_t,\tilde m_i^{(-j)}).
	\label{eq:cgdc-utility-deltaq}
\end{equation}
We normalize $\Delta Q_{ij,t}$ within each task before reporting statistics, so the values are task-normalized critic-based utility diagnostics rather than rewards or win rates.
A larger normalized $\Delta Q_{ij,t}$ means that keeping the received message gives a larger full-critic value estimate than masking it.
This diagnostic uses the full critic only as an independent value reference and is not used for training.

We use $\Delta Q_{ij,t}$ to evaluate two consistency checks.
First, low-gain and high-gain groups are the bottom and top 30\% of receiver--sender pairs sorted by $\widehat V^g_{ij,t}$.
If the gain surrogate ranks message utility meaningfully, the high-gain group should have larger normalized $\Delta Q$.
Second, selected and unselected groups are split by $\hat c_{ij,t}>0$, and gate agreement measures the consistency between $\mathrm{sign}(\hat c_{ij,t})$ and $\mathrm{sign}(\Delta Q_{ij,t})$.
If the CGDC gate is informative, selected messages should have larger normalized $\Delta Q$ and high gate agreement.

\begin{table}[t]
	\centering
	\scriptsize
	\caption{CGDC utility diagnostics. $\Delta Q$ denotes the task-normalized critic-based value improvement obtained by keeping a received message compared with masking it. Higher values indicate more useful messages.}
	\label{tab:cgdc-utility-diagnostics}
	\resizebox{0.98\linewidth}{!}{
		\begin{tabular}{lccccc}
			\toprule
			Task 
			& Low-gain $\Delta Q$ 
			& High-gain $\Delta Q$ 
			& Unselected $\Delta Q$ 
			& Selected $\Delta Q$
			& Gate agreement (\%) \\
			\midrule
			PP & -0.019 & 0.088 & -0.013 & 0.076 & 79.1 \\
			\texttt{1o\_10b\_vs\_1r} & -0.012 & 0.121 & -0.018 & 0.109 & 82.1 \\
			\bottomrule
	\end{tabular}}
\end{table}

Table~\ref{tab:cgdc-utility-diagnostics} summarizes the utility diagnostics.
High-gain messages yield larger normalized $\Delta Q$ than low-gain messages on both tasks, suggesting that the gain surrogate ranks message utility meaningfully.
Messages selected by the CGDC gate also have substantially larger normalized $\Delta Q$ than unselected messages, with high gate agreement on both tasks.
Together with the delay-cost diagnostic above, these trends support the interpretation that CGDC provides useful internal utility estimates for message selection and delayed-message aggregation.

\subsection{Replay Visualizations for Qualitative Analysis}
\label{app:replay}

We provide replay snapshots in Fig.~\ref{smac_policy_fig} to qualitatively examine how CDCMA facilitates coordination under cross-timestep delays on two SMAC scenarios. The top row corresponds to \texttt{1o\_2r\_vs\_4r}, whereas the bottom row corresponds to \texttt{1o\_10b\_vs\_1r}. For each scenario, we juxtapose representative successful and failed rollouts.

\paragraph{\texttt{1o\_2r\_vs\_4r}.}
Roaches depend on the Overseer for target-related information and use it to synchronize navigation and engagement. In the successful rollout, the two Roaches exhibit consistent movement toward the target region early in the episode, thereby avoiding route divergence around the corner, and later enter effective firing range in a temporally aligned manner. In the failure rollout, an early deviation from the intended route induces split navigation and delays mutual convergence, so the team remains spatially dispersed and fails to complete effective engagement within the episode horizon.

\paragraph{\texttt{1o\_10b\_vs\_1r}.}
Banelings communicate with the Overseer to acquire target-location cues and to coordinate regrouping under terrain constraints. In the successful rollout, agents first separate into external and internal groups because of the cliff structure, then the internal group exits the cliffs and rejoins the main movement toward the target region, restoring team cohesion before engagement. In the failure case, the internal group remains trapped or inactive inside the cliffs while the target stays separated, so the remaining agents fail to re-establish coordinated movement before termination.

Overall, these replay snapshots suggest that successful outcomes are associated with earlier target-aligned movement and more timely regrouping, whereas failures are characterized by persistent route divergence or delayed coordination under stale communication.

\begin{figure*}[tb]
	\centering
	\subfloat[1o2r (success), $t{=}2$]{\includegraphics[width=0.24\linewidth]{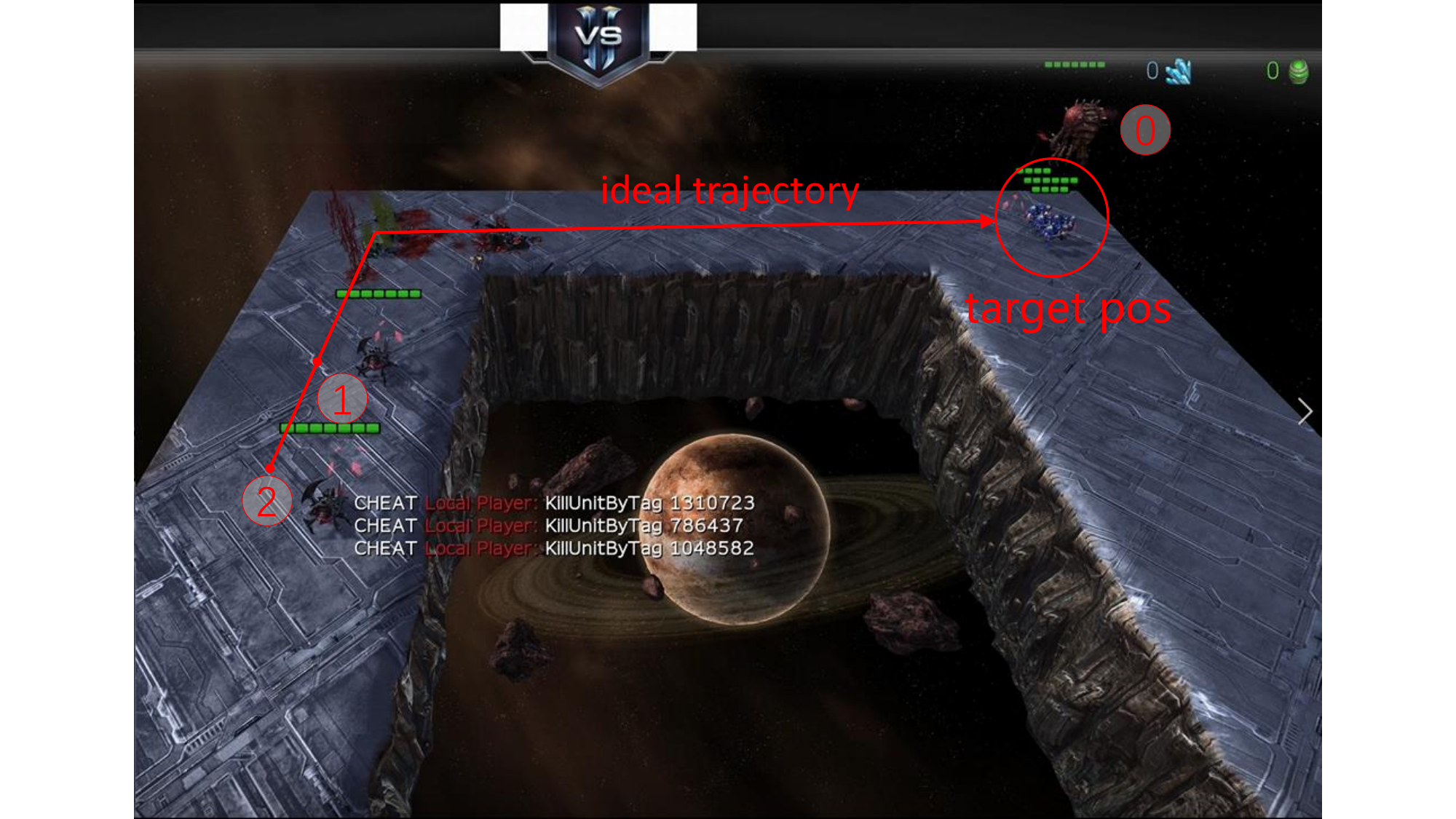}\label{fig:1o2r_success_t2}}
	\hfill
	\subfloat[1o2r (success), $t{=}26$]{\includegraphics[width=0.24\linewidth]{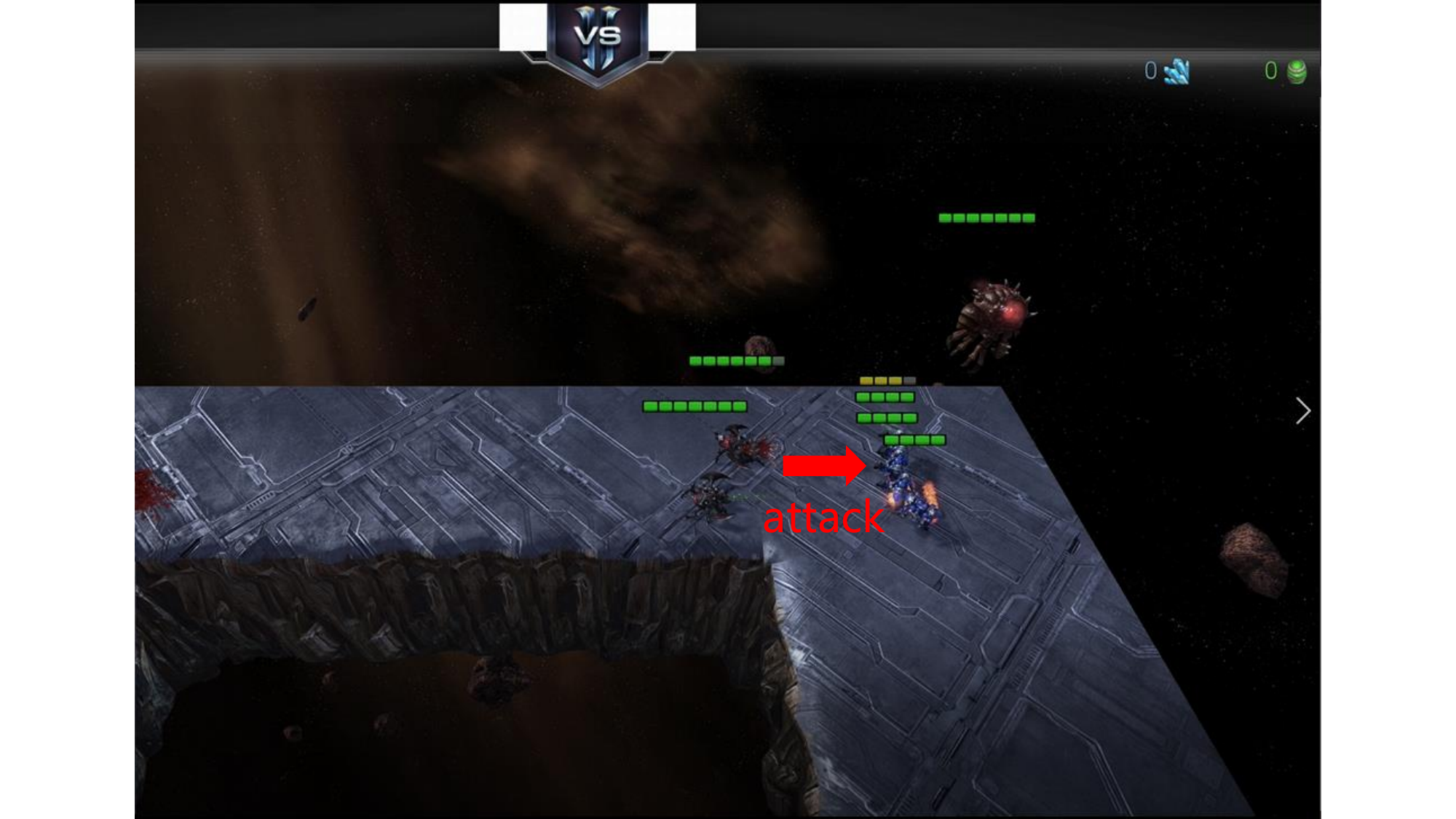}\label{fig:1o2r_success_t26}}
	\hfill
	\subfloat[1o2r (failure), $t{=}10$]{\includegraphics[width=0.24\linewidth]{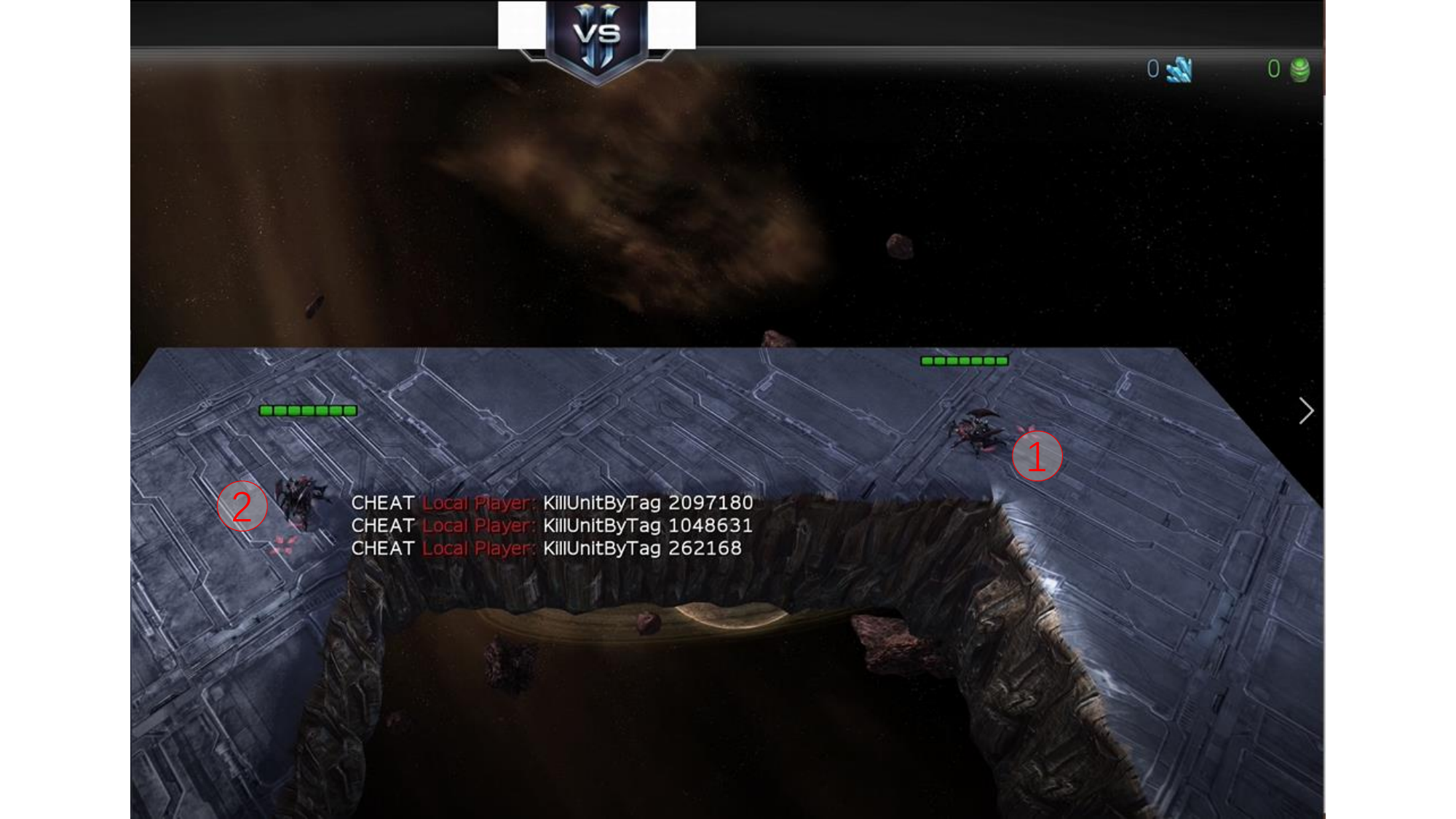}\label{fig:1o2r_failure_t10}}
	\hfill
	\subfloat[1o2r (failure), $t{=}40$]{\includegraphics[width=0.24\linewidth]{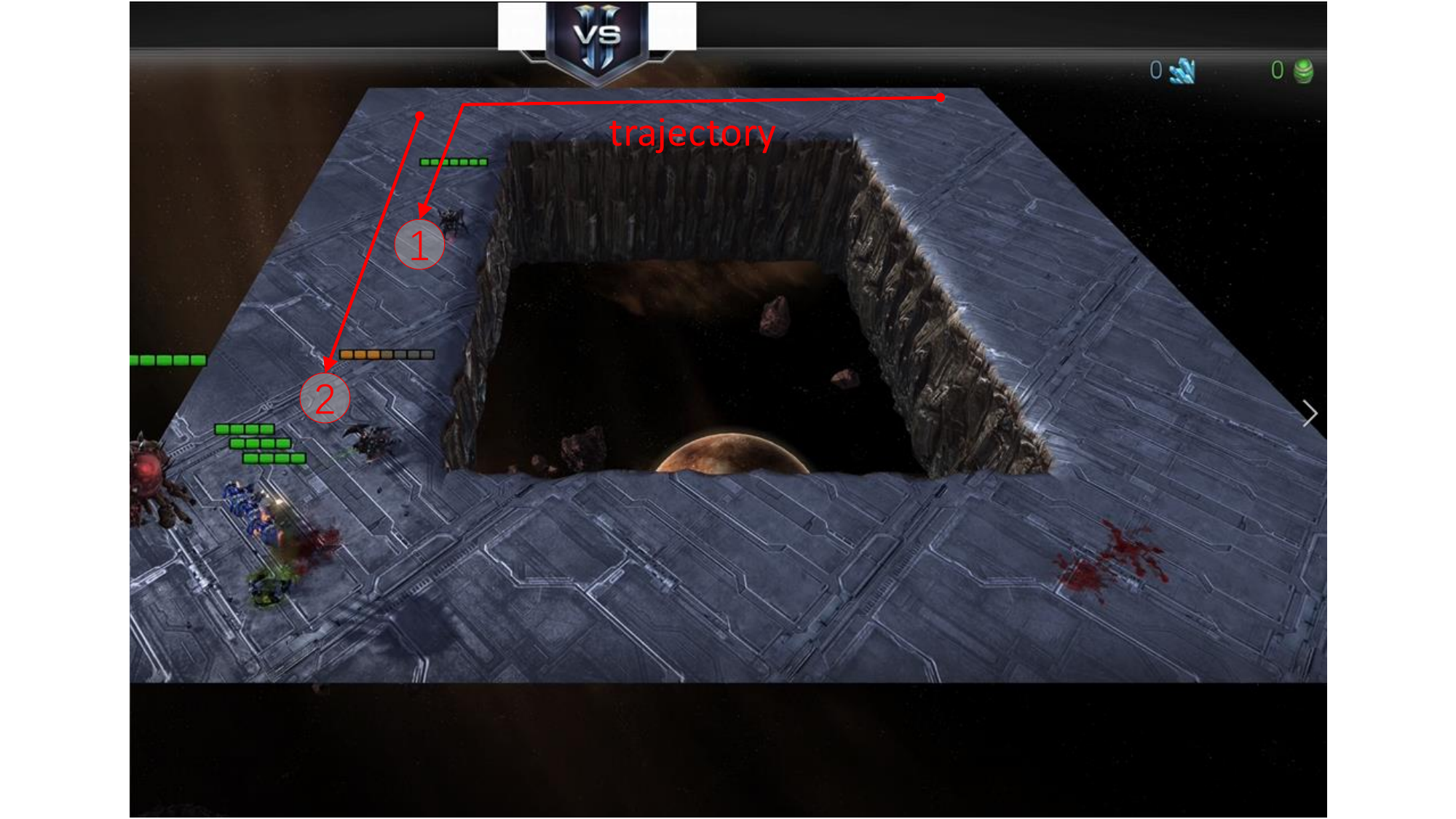}\label{fig:1o2r_failure_t40}}
	
	\subfloat[1o10b (success), $t{=}8$]{\includegraphics[width=0.24\linewidth]{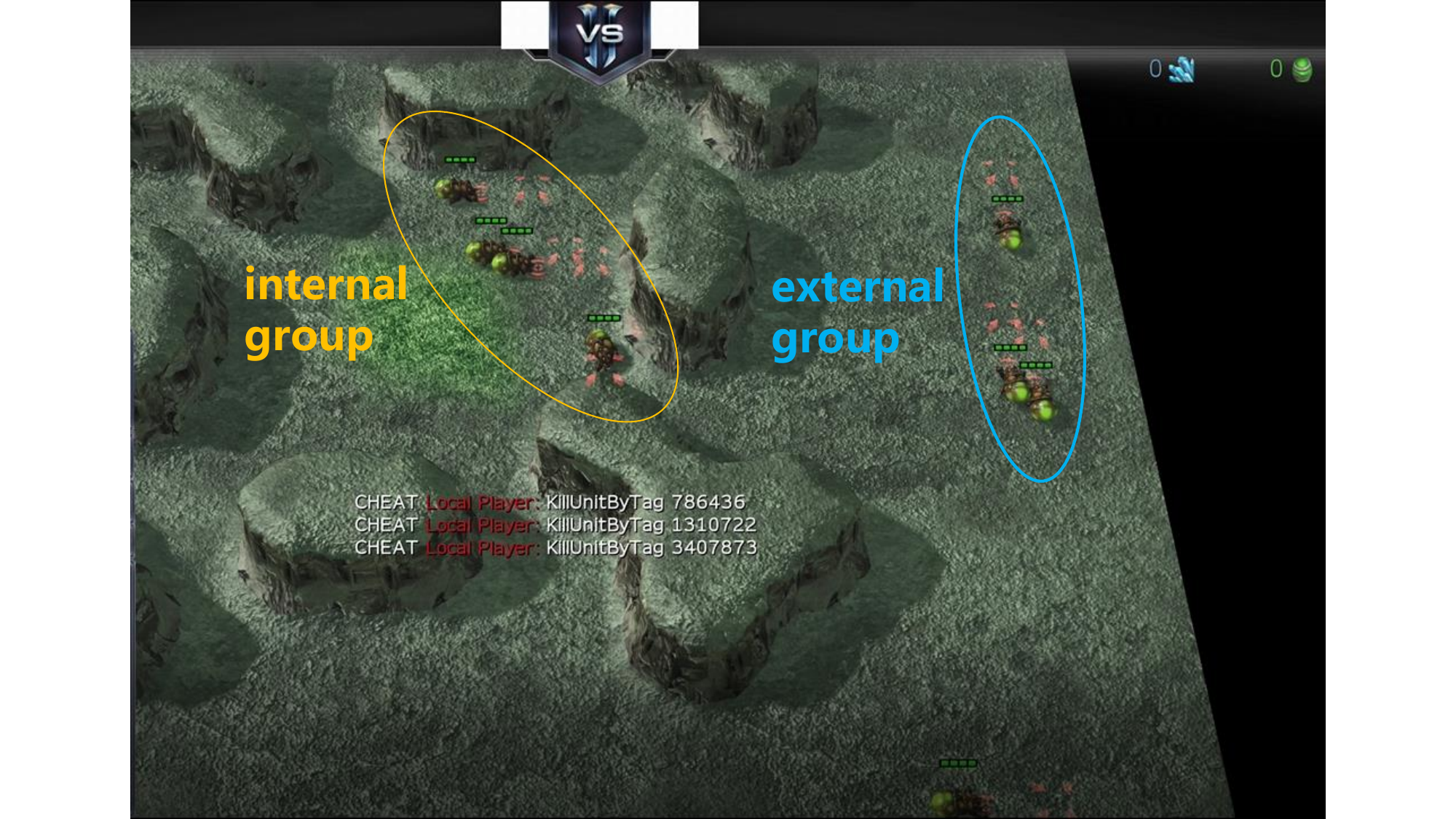}\label{fig:1o10b_success_t8}}
	\hfill
	\subfloat[1o10b (success), $t{=}18$]{\includegraphics[width=0.24\linewidth]{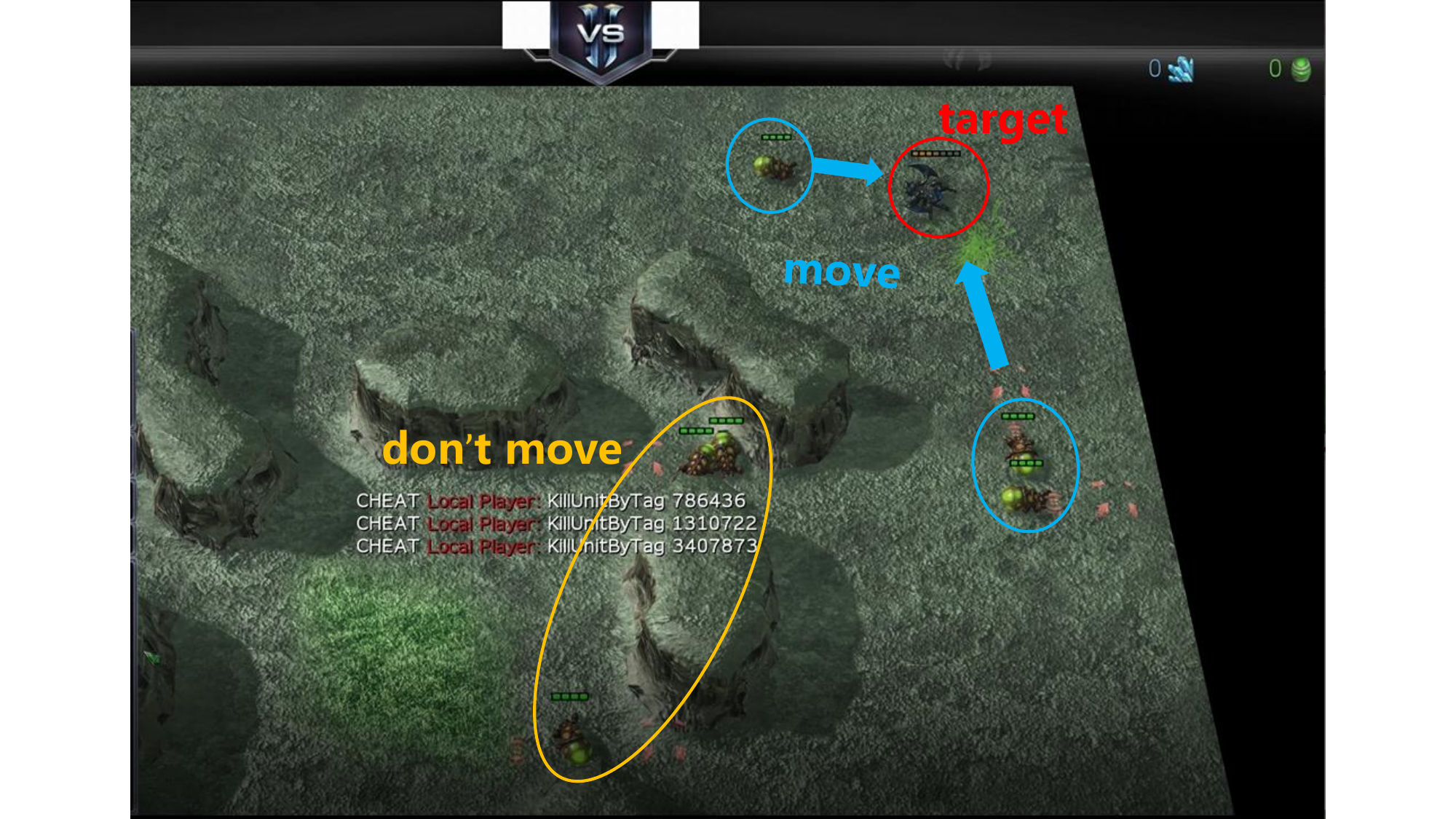}\label{fig:1o10b_success_t18}}
	\hfill
	\subfloat[1o10b (success), $t{=}32$]{\includegraphics[width=0.24\linewidth]{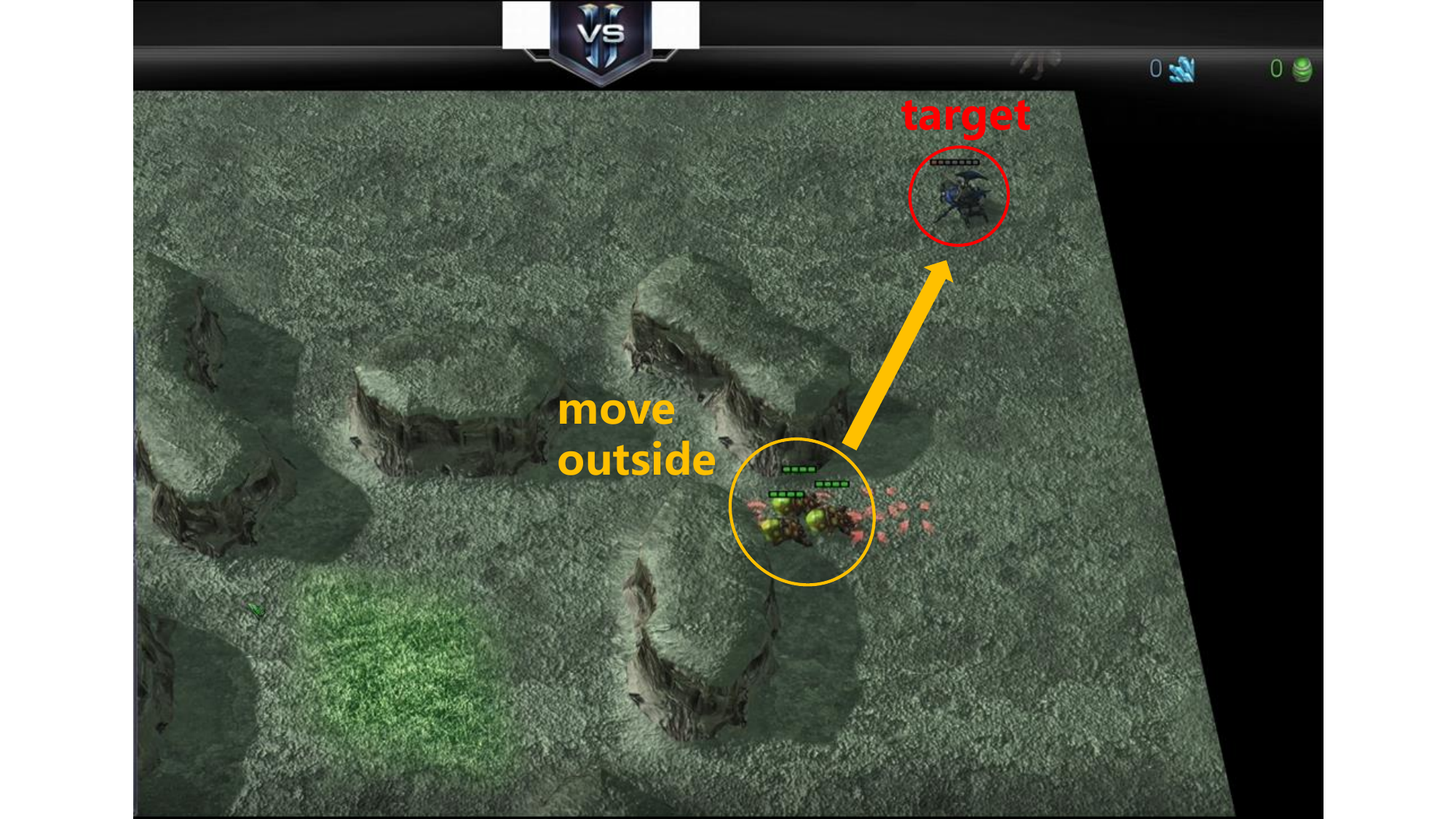}\label{fig:1o10b_success_t32}}
	\hfill
	\subfloat[1o10b (failure)]{\includegraphics[width=0.24\linewidth]{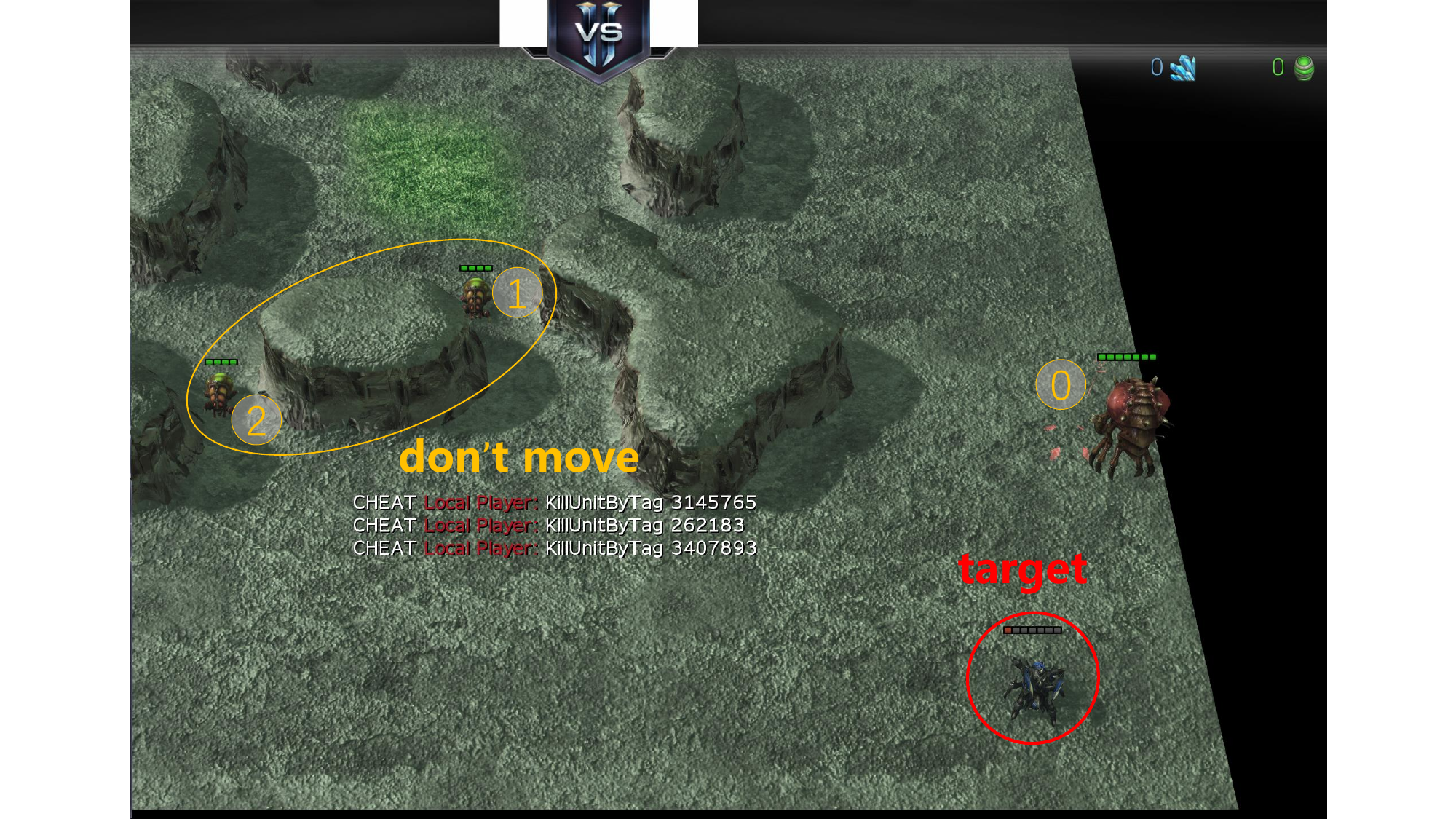}\label{fig:1o10b_failure}}
	\caption{Replay snapshots of CDCMA policies on \texttt{1o\_2r\_vs\_4r} and \texttt{1o\_10b\_vs\_1r} under cross-timestep delays.}
	\label{smac_policy_fig}
\end{figure*}

\subsection{Testable Hypotheses and Evidence Mapping}
\label{app:exp-hypo}

We summarize the correspondence between the main experimental claims and the supporting evidence.

\begin{description}
	\item[\textbf{H1}] Under cross-timestep delays, CDCMA outperforms strong baselines on MPE and SMAC.
	\item[\textbf{H2}] CDCMA yields smaller critic-tempered information gaps under delayed-message contexts.
	\item[\textbf{H3}] Utility-aware aggregation is more effective than content-only attention.
	\item[\textbf{H4}] CDCMA remains competitive in the delay-free setting.
	\item[\textbf{H5}] There exists a practical range of $\lambda$ that balances gain and cost.
	\item[\textbf{H6}] CDCMA generalizes better to unseen harder delay settings.
	\item[\textbf{H7}] The learned CGDC components provide meaningful CDCMA-internal utility diagnostics.
\end{description}

\noindent\textbf{Evidence mapping.}
\begin{description}
	\item[\textbf{H1}] Sec.~\ref{sec:exp-main} and Table~\ref{results_table}; Appendix~\ref{app:exp-curves} and Table~\ref{tab:mmm2_main} provide additional supporting results.
	\item[\textbf{H2}] Sec.~\ref{sec:exp-robust} and Fig.~\ref{information_fig}; Appendix~\ref{app:exp-otg} and Appendix~\ref{app:cgdc-diagnostics} provide additional mechanism diagnostics.
	\item[\textbf{H3}] Sec.~\ref{sec:exp-ablation} and Fig.~\ref{fig:abla_modules}.
	\item[\textbf{H4}] Sec.~\ref{sec:exp-robust} and Appendix~\ref{app:delay-free}.
	\item[\textbf{H5}] Sec.~\ref{sec:exp-ablation} and Fig.~\ref{fig:abla_lambda}.
	\item[\textbf{H6}] Sec.~\ref{sec:exp-robust} and Table~\ref{tab:cn_zero_shot_compact}.
	\item[\textbf{H7}] Appendix~\ref{app:cgdc-diagnostics} analyzes the delay-cost surrogate, gain surrogate, and CGDC gate.
\end{description}


\end{document}